\documentclass{article} % For LaTeX2e

\newif\ifreport
\reporttrue
\usepackage{iclr2026_conference,times}
\setcitestyle{numbers,square,comma,aysep={,},yysep={;}}
\usepackage{fontspec}
\usepackage{amsmath,amsfonts,bm}

\def\eqref#1{equation~\ref{#1}}
\def\1{\bm{1}}

\def\vx{{\bm{x}}}

\def\mI{{\bm{I}}}

\DeclareMathAlphabet{\mathsfit}{\encodingdefault}{\sfdefault}{m}{sl}
\SetMathAlphabet{\mathsfit}{bold}{\encodingdefault}{\sfdefault}{bx}{n}

\def\gN{{\mathcal{N}}}

\def\gR{{\mathcal{R}}}

\newcommand{\E}{\mathbb{E}}

\newcommand{\R}{\mathbb{R}}

\DeclareMathOperator*{\argmax}{arg\,max}

\usepackage{titletoc} % book-style appendix contents (partial toc)
\usepackage{hyperref}
\usepackage{url}
\usepackage{graphicx}
\usepackage{float}   % [H] for the pinned page-1 teaser
\usepackage{wrapfig}  % text-wrapped small figures (space utilization for narrow single-panel plots)
\usepackage{rotating} % sidewaystable for the wide per-metric CI table (appendix)
\usepackage{booktabs}
\usepackage{multirow}
\usepackage{amsmath,amssymb}
\usepackage{xcolor}
\usepackage{colortbl} % \rowcolor/\cellcolor for the table canon (design lane D)
\usepackage{xspace}   % smart trailing space for the \cachedsearch wordmark (design lane E)
\usepackage{subcaption}
\usepackage{algorithm}
\usepackage{algpseudocode}
\newtheorem{proposition}{Proposition}
\newtheorem{corollary}{Corollary}
\newtheorem{remark}{Remark}
\newtheorem{assumption}{Assumption}

\ifreport
  \newcommand{\reportonly}[1]{#1}
  \newcommand{\confonly}[1]{}
\else
  \newcommand{\reportonly}[1]{}
  \newcommand{\confonly}[1]{#1}
  \usepackage{enumitem}
  \setlist[itemize]{itemsep=2pt, topsep=3pt, parsep=0pt}
  \makeatletter
  \long\def\@makecaption#1#2{%
    \vskip\abovecaptionskip
    \small
    \sbox\@tempboxa{#1: #2}%
    \ifdim \wd\@tempboxa >\hsize
      #1: #2\par
    \else
      \global \@minipagefalse
      \hb@xt@\hsize{\hfil\box\@tempboxa\hfil}%
    \fi
    \vskip\belowcaptionskip}
  \makeatother
\fi
\newcommand{\tabsize}{\ifreport\small\else\footnotesize\fi}

\ifreport
  \newfontfamily\heros{texgyreheros}[Extension=.otf,
    UprightFont=*-regular, BoldFont=*-bold]
  \definecolor{reptitle}{HTML}{0F172A}
  \usepackage[most]{tcolorbox}
  \definecolor{repabsbg}{HTML}{EFE7D8}   % abstract-panel tint (warm parchment)
  \providecolor{NavyBlue}{HTML}{1F4E79}
  \hypersetup{colorlinks=true, citecolor=repaccent, linkcolor=NavyBlue!70!black, urlcolor=repaccent}
  \definecolor{repaccent}{HTML}{BF5700}  % UT burnt orange accent
  \makeatletter
  \def\section{\@startsection {section}{1}{\z@}{-2.0ex plus
      -0.5ex minus -.2ex}{1.5ex plus 0.3ex
  minus0.2ex}{\large\bfseries\raggedright\color{repaccent}}}
  \def\subsection{\@startsection{subsection}{2}{\z@}{-1.8ex plus
  -0.5ex minus -.2ex}{0.8ex plus .2ex}{\normalsize\bfseries\raggedright\color{repaccent}}}
  \makeatother
  \newtcolorbox{titlepanel}{enhanced, breakable, colback=repabsbg,
    colframe=repaccent!35, boxrule=0.4pt, arc=2.5mm,
    left=12pt, right=12pt, top=10pt, bottom=10pt,
    boxsep=0pt, before skip=4pt, after skip=14pt}
  \newtcolorbox{takeawaybox}{enhanced, breakable, colback=black!4,
    colframe=black!28, boxrule=0.4pt, arc=1.5mm,
    left=7pt, right=7pt, top=5pt, bottom=5pt,
    before skip=9pt plus 2pt, after skip=9pt plus 2pt}
  \long\def\takeaway#1{\begin{takeawaybox}\noindent\textbf{Takeaway.} #1\end{takeawaybox}}
\else
  \long\def\takeaway#1{\smallskip\noindent\textbf{Takeaway.} #1}
\fi

\newcommand{\tauc}{\tau}                       % caching threshold
\newcommand{\rhosp}{\rho}                      % Spearman rank correlation
\newcommand{\cachedsearch}{\textsc{CachedSearch}\xspace}
\newcommand{\wan}{Wan2.1-T2V-1.3B}
\newcommand{\ImageRewardMetric}{ImageReward~\citep{xu2023imagereward}}
\newcommand{\VideoScoreMetric}{VideoScore~\citep{he2024videoscore}}
\newcommand{\VQAScoreMetric}{VQAScore~\citep{lin2024vqascore}}
\newcommand{\VBenchMetric}{VBench~\citep{huang2023vbench}}
\newcommand{\VBenchTwoMetric}{VBench-2.0~\citep{zheng2025vbench2}}
\newcommand{\LPIPSMetric}{LPIPS~\citep{zhang2018lpips}}
\newcommand{\PSNRMetric}{PSNR~\citep{wang2004ssim}}
\newcommand{\SSIMMetric}{SSIM~\citep{wang2004ssim}}
\newif\ifshowci\showcifalse
\newcommand{\ci}[1]{\ifshowci{}_{[#1]}\fi}
\definecolor{mcWanA}{HTML}{2A78D6}  % Wan2.1-1.3B
\definecolor{mcWanB}{HTML}{56A0E8}  % Wan2.2-TI2V-5B
\definecolor{mcWanC}{HTML}{0D366B}  % Wan2.1-14B
\definecolor{mcCog}{HTML}{C98500}   % CogVideoX-5B
\definecolor{mcHun}{HTML}{199E70}   % HunyuanVideo-13B
\definecolor{mcLtx}{HTML}{E34948}   % LTX-Video-2B
\newcommand{\modelchip}[1]{\textcolor{#1}{\rule[-0.18ex]{1.3ex}{1.3ex}}}
\newcommand{\logomark}[1]{\raisebox{-0.28ex}{\includegraphics[height=1.5ex]{#1}}}
\makeatletter
\newcommand{\logocell}[1]{\@nameuse{logocell@#1}}
\newcommand{\chipcell}[1]{\@nameuse{chipcell@#1}}
\@namedef{logocell@wan13b}{\logomark{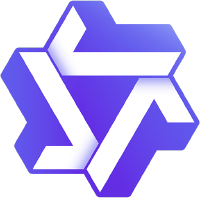}\,\textsc{Wan2.1-1.3B}}
\@namedef{logocell@wan22}{\logomark{figs/logos/wan.png}\,\textsc{Wan2.2-TI2V-5B}}
\@namedef{logocell@wan14b}{\logomark{figs/logos/wan.png}\,\textsc{Wan2.1-14B}}
\@namedef{logocell@cog}{\logomark{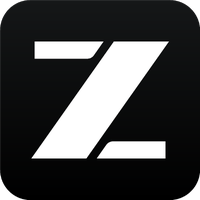}\,\textsc{CogVideoX-5B}}
\@namedef{logocell@hunyuan}{\logomark{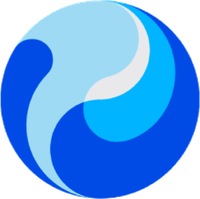}\,\textsc{HunyuanVideo-13B}}
\@namedef{logocell@ltx}{\logomark{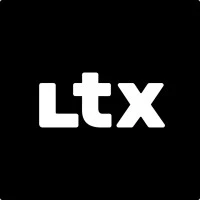}\,\textsc{LTX-Video-2B}}
\@namedef{chipcell@wan13b}{\modelchip{mcWanA}\,Wan2.1-1.3B}
\@namedef{chipcell@wan22}{\modelchip{mcWanB}\,Wan2.2-TI2V-5B}
\@namedef{chipcell@wan14b}{\modelchip{mcWanC}\,Wan2.1-14B}
\@namedef{chipcell@cog}{\modelchip{mcCog}\,CogVideoX-5B}
\@namedef{chipcell@hunyuan}{\modelchip{mcHun}\,HunyuanVideo-13B}
\@namedef{chipcell@ltx}{\modelchip{mcLtx}\,LTX-Video-2B}
\makeatother

\title{CachedSearch: Training-Free Cached Exploration\\ for Test-Time Search in Video Diffusion}

\ifreport
\iclrfinalcopy
\makeatletter
\def\@maketitle{}
\makeatother
\fi
\author{Anonymous authors\\
Paper under double-blind review}

\begin{document}

% Title sits alone above the panel (report: \@maketitle override; conference:
% classic ICLR title page). The report panel then wraps the author block, the
% abstract paragraph (no "Abstract" heading), and the metadata + logos band.
\maketitle
% Report edition uses \iclrfinalcopy; correct the running header, which the
% style otherwise sets to the (false) "Published as..." string. No venue
% name anywhere in the report edition (user decision 2026-07-14).
\ifreport\lhead{Technical report}\fi

\ifreport
\begin{titlepanel}
% Title block inside the panel (LumaFlux house style): bold, left-aligned,
% own break point so it sets in exactly TWO lines.
{\heros\bfseries\fontsize{17pt}{21.5pt}\selectfont\raggedright\color{reptitle}CachedSearch: Training-Free Cached Exploration\\ for Test-Time Search in Video Diffusion\par}
\vskip 9pt
% Author block (left-aligned, on the panel).
{\raggedright
{\heros\bfseries\normalsize Shreshth Saini\textsuperscript{1,2,$\dagger$}\quad Neil Birkbeck\textsuperscript{2}\quad Yilin Wang\textsuperscript{2}\quad Balu Adsumilli\textsuperscript{2}\quad Alan C.\ Bovik\textsuperscript{1,3}\par}
\vskip 3pt
{\small \textsuperscript{1}The University of Texas at Austin \qquad \textsuperscript{2}Google, Inc. \qquad \textsuperscript{3}University of Colorado Boulder\par}
\par}
\vskip 11pt
% Abstract paragraph, no heading (justified, full panel width).
% ===========================================================================
% ABSTRACT - clarity/slimness rewrite; all claims and numbers remain measured.
% Sources: PROGRESS.md / pillars/B-budget-optimal/PLAN.md (gate v0, tau sweep,
% strategy simulation, official-suite replication, scale, and stacking).
% ===========================================================================
Test-time search lets small video diffusion models rival larger ones, but costs $2$--$10\times$ more. All candidates are fully denoised, although most are discarded. Training-free caching makes each rollout $2$--$3\times$ faster at near-lossless quality. Composition is safe only if lossy caching preserves verifier rankings. We present the first study of whether caching corrupts candidate ranking in video test-time search. On \wan{} with an adaptive caching wrapper (${\sim}2\times$ per-candidate speedup), ImageReward scores seed-matched cached and full rollouts. Median per-prompt Spearman rank correlation is $0.905$, with $72\%$ top-1 agreement on the VBench suite.\reportonly{ VBench-2.0 replicates this result on a harder suite.} Recomputing the cached winner at full compute retains $90$--$94\%$ of the full-search gain. Errors cluster among near-tied candidates, making corruption self-limiting. This finding leads to \cachedsearch{}. It explores every candidate with aggressive caching, then re-generates only the winner at full compute. At $N{=}8$, it captures $94.7\%$ of best-of-$N$'s gain at $63\%$ of the cost. Capture rises with width. At matched budget, it searches twice as wide for $38\%$ more gain. The result holds from $1.3$B--$14$B across six models and four families: Wan, LTX, CogVideoX, and Hunyuan. Wan2.1-14B matches the 1.3B model's fidelity. Mid-trajectory pruning multiplies the exploration saving to $3.11\times$ at $88.6\%$ capture. Ports to other model families require recalibrating a single parameter, showing that fidelity tracks architecture rather than parameter count. \cachedsearch{} is training-free, verifier-agnostic, and orthogonal to the search algorithm, making it a plug-in multiplier for test-time scaling.

\vskip 11pt
% Metadata band: links left, YouTube + UT Austin logos right.
{\small
\noindent
\begin{minipage}[c]{0.63\linewidth}
\textbf{Date:} \today\\[2.5pt]
% Links restored per user 2026-07-14 ("just add the links, I will update"):
% repo is private and the project page launches later; URLs are the targets.
% arXiv ID added 2026-07-27, on announcement of v1.
\textbf{arXiv:} \href{https://arxiv.org/abs/2607.23159}{\textcolor{repaccent}{\texttt{arXiv:2607.23159}}}\\[2.5pt]
\textbf{Code:} \href{https://github.com/shreshthsaini/CachedSearch}{\textcolor{repaccent}{\texttt{github.com/shreshthsaini/CachedSearch}}}\\[2.5pt]
\textbf{Project page:} \href{https://shreshthsaini.github.io/CachedSearch}{\textcolor{repaccent}{\texttt{shreshthsaini.github.io/CachedSearch}}}
\end{minipage}\hfill
\begin{minipage}[c]{0.35\linewidth}
\raggedleft
\raisebox{-0.5\height}{\includegraphics[height=12pt]{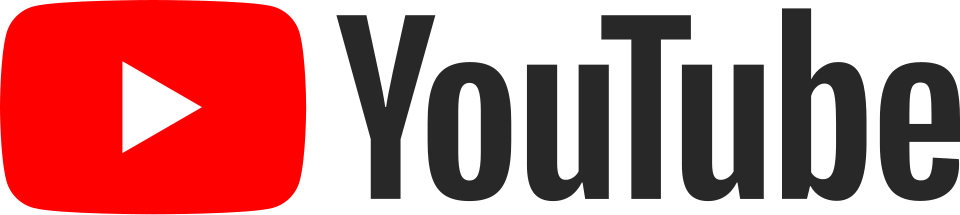}}%
\hspace{10pt}%
\raisebox{-0.5\height}{\includegraphics[height=16pt]{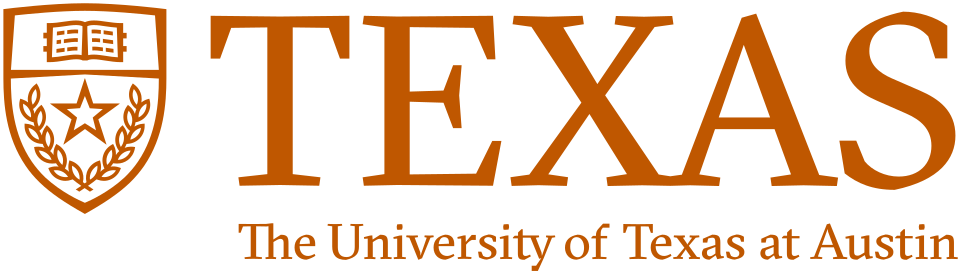}}
\end{minipage}}
\end{titlepanel}
% Dagger note as a real page-1 footnote (user 2026-07-24); unnumbered.
{\let\thefootnote\relax\footnotetext{\textsuperscript{$\dagger$}Work done while at The University of Texas at Austin.}}
% The anonymous panel is shorter than the authored one; without this, the
% page-1 teaser [t]-float floats ABOVE the title. Defer top floats to p.2.
\suppressfloats[t]
\else
\begin{abstract}

\end{abstract}
\fi

% ======================================================================
% PAGE-1 TEASER (design-overhaul Lane A; PLAN D4). One mini-panel per
% contribution. Placed between the abstract and Section 1 (user 2026-07-24):
% `H` pins it in place so it cannot float above the title on page 1.
% Built by code/paper_figs/make_fig1.py; every number
% traces to ci_numbers.json / b1_gate_v0 / x_taucal.tex. Relocated here from
% sections/intro.tex (old two-panel fig1 retired).
% ======================================================================
% Placement differs by edition (user 2026-07-24):
%   conference: [H] pins it between the abstract and Section 1;
%   report:     [t] float, deferred by \suppressfloats to head page 2.
% The float option is read before macro expansion, so the environment is
% written once per edition with a shared body:
%   conference: [H] pins it between the abstract and Section 1;
%   report:     [t] float, which \suppressfloats defers to page 2.
\newcommand{\figonebody}{%
    \centering
    \includegraphics[width=\ifreport 1.0\else 0.92\fi\linewidth]{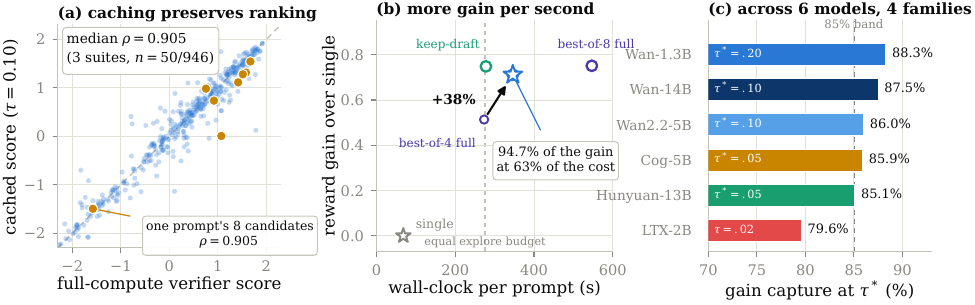}
    \caption{\textbf{\cachedsearch{} explores cheaply and commits at full compute.} It keeps most
    of test-time search's quality at a fraction of the cost and generalizes
    across models. \textbf{(a)}~Cached and full-compute rollouts of the same seeds
    rank candidates alike (\wan{} \citep{wan2025wan}, $\tauc{=}0.10$\reportonly{; each dot a candidate,
    one prompt's $8$ highlighted}): median per-prompt Spearman
    $\rhosp=0.905$\reportonly{, recurring on three independent suites}.
    \textbf{(b)}~At $N{=}8$, committing full compute to the cached winner captures
    $94.7\%$ of best-of-$8$'s reward gain at $63\%$ of its wall-clock
    cost\reportonly{; at matched budget it searches twice as wide for $+38\%$ gain
    over full-compute best-of-$4$}\reportonly{ ($95\%$ prompt-bootstrap CI,
    $B{=}10^4$, $n{=}50$)}. \textbf{(c)}~Gain capture at each model's calibrated
    $\tau^*$ across six models and four architecture families\reportonly{; five of
    six clear the $85\%$ band, LTX-Video-2B \citep{hacohen2024ltxvideo} is the honest boundary}.}
    \label{fig:fig1}
}
\ifreport
\begin{figure}[t]\figonebody\end{figure}
\else
\begin{figure}[H]\figonebody\end{figure}
\fi

% ===========================================================================
% INTRODUCTION - clarity/slimness rewrite. All numbers remain measured.
% The conference and report editions share the core story; report-only text is
% limited to complete sentences. Citation keys resolve in references.bib.
% ===========================================================================
\section{Introduction}
\label{sec:intro}

Video diffusion is the most expensive mainstream generative workload per output. In our setting, one 81-frame, 50-step rollout of \wan{} \citep{wan2025wan}, a 1.3B-parameter model, takes 68.3\,s on a modern GPU (Section~\ref{sec:experiments}). Test-time search multiplies this cost. It samples $N$ candidates, scores them with a verifier, and keeps the best \citep{ma2025inference,liu2025videot1,he2025evosearch}. Even pruning-based video searches pay $2$--$10\times$ the single-sample cost \citep{liu2025videot1,latsearch2026}. Every candidate is generated at full cost, including those the method discards.
Training-free caching attacks the cost of one rollout. It reuses features across denoising steps and yields $2$--$3\times$ speedups with near-lossless fidelity \citep{zhao2024pab,liu2024teacache,lv2024fastercache,zhou2025easycache,lemica2025}. Yet this literature studies only seed-matched fidelity. It asks whether acceleration produces the same video, not whether it preserves a later selection decision.
No prior work composes caching with test-time search. The obstacle is subtle. Caching is lossy, but search does not require exact candidates. It requires only their relative ordering under the verifier. If caching reshuffles that ordering, search chooses the wrong candidate and the apparent saving disappears. If the ordering survives, search can explore much more cheaply.

We test this directly with seed-matched cached and full-compute twins, both scored by ImageReward \citep{xu2023imagereward}. Rankings survive. On the VBench suite \citep{huang2023vbench}, the median per-prompt Spearman correlation is $\rhosp=0.905$ and top-1 agreement is $72\%$ (Section~\ref{sec:exp-vbench}).\reportonly{ VBench-2.0 provides a third replication \citep{zheng2025vbench2}.} Ranking errors also have low stakes. They concentrate on prompts whose candidates are nearly tied, where choosing the wrong candidate costs little (Section~\ref{sec:abl-corruption}). Section~\ref{sec:exp-setup} defines ranking, gain, regret, and cost formally.
This result leads to \cachedsearch{}: \textbf{explore cheap, commit full}. Generate all $N$ candidates with aggressive caching. Score the drafts, then re-generate only the winner at full compute from its seed. At $N{=}8$, this retains $94.7\%$ of full best-of-$8$'s reward gain at $63\%$ of its cost. At the budget of full best-of-$4$, it instead searches eight candidates and gains $38\%$ more reward. It searches twice as wide for the same money.
\cachedsearch{} changes only candidate rollout. It is cache-agnostic and orthogonal to both the search algorithm and verifier. The protocol transfers across six models, four architecture families, and the $1.3$B--$14$B scale range. Off-family models such as CogVideoX-5B \citep{yang2024cogvideox} need only a recalibrated $\tauc$. It also stacks with candidate pruning, reaching a $3.11\times$ exploration speedup. These properties make caching a plug-in multiplier for test-time search.

\paragraph{Contributions.}
\begin{itemize}
    \item \textbf{The first ranking-preservation study for caching under search:} a seed-matched protocol evaluated on the gate grid, the VBench suite, and VBench-2.0. Median $\rhosp$: $0.905$/$0.905$/$0.881$; outcomes are scored as regret in delivered quality, not just rank correlation.
    \item \textbf{When ranking breaks, and why it rarely matters:} corruption concentrates on low-spread prompts (corr(spread, $\rhosp$) $=+0.31$; Section~\ref{sec:abl-corruption}). Median regret is $0$, and ${\sim}94\%$ of search gains remain. A spread-based adaptive per-prompt $\tauc$ does not beat a fixed threshold (Section~\ref{sec:abl-adaptive}).
    \item \textbf{\cachedsearch{}.} A training-free explore-cheap, commit-full method that retains $94.7\%$ of full best-of-$8$ gain at $63\%$ cost and gives $38\%$ more reward at iso-cost.
    \item \textbf{A speed--fidelity trade-off law:} the $\tauc$ frontier spans $1.58$--$2.41\times$ speedup while keeping capture $\geq88\%$. Its shape follows the self-limiting corruption above.
    \item \textbf{Composability and generality:} pruning stacks multiplicatively ($3.11\times$ exploration speedup at $88.6\%$ capture; Section~\ref{sec:abl-stack}). Wan2.1-14B \citep{wan2025wan} again reaches median $\rhosp=0.905$. One $\tauc$ recalibration transfers the protocol to CogVideoX-5B and, more broadly, six models and four families spanning $1.3$B--$14$B (Section~\ref{sec:abl-cog}). A second video-native verifier changes capture by at most $2.5$ points (Section~\ref{sec:abl-verifier}).
\end{itemize}

% =============================================================================
% PRELIMINARIES - split out of Method (user 2026-07-24): Section 2 states the
% model, best-of-N search, and cost notation; Section 3 is the method proper.
% Same single-source x_method_flow block as before (conference glue inline).
% =============================================================================
\section{Preliminaries}
\label{sec:prelim}  % keep label: refs across sections

We fix the generative model, the search primitive, and the cost notation
used throughout.

\ifreport
% SHARED BLOCK (both editions; single source). Report edition: inlined in
% sections/method.tex Section Preliminaries. Conference edition: hosted in the
% appendix under app:method-details (sections/appendix.tex).
\paragraph{Flow-matching video diffusion.}
We consider latent video diffusion models trained with flow matching, as
instantiated by \wan{} \citep{lipman2023flow,liu2022rectified,wan2025wan}.
A video is represented by a spatio-temporally compressed
latent $\vx \in \R^{C \times F \times H \times W}$; a diffusion transformer
(DiT) $v_\theta(\vx_t, t, c)$ predicts the velocity field transporting a
Gaussian sample $\vx_1 \sim \gN(0, \mI)$ to a data sample $\vx_0$,
conditioned on a text prompt $c$. Sampling integrates the probability-flow
ODE over a decreasing grid of $T$ timesteps
$1 = t_1 > \dots > t_T > t_{T+1} = 0$ with an Euler-type update,
\begin{equation}
\vx_{t_{k+1}} \;=\; \vx_{t_k} + (t_{k+1} - t_k)\,\tilde v_k ,
\label{eq:euler}
\end{equation}
followed by VAE decoding of $\vx_{t_{T+1}}$ into pixels. With classifier-free
guidance (CFG) \citep{ho2022classifierfree} at scale $w$, each step evaluates
the transformer once per guidance branch and combines
\begin{equation}
\tilde v_k \;=\; v_\theta(\vx_{t_k}, t_k, \varnothing)
 + w \big( v_\theta(\vx_{t_k}, t_k, c) - v_\theta(\vx_{t_k}, t_k, \varnothing) \big),
\label{eq:cfg}
\end{equation}
so a rollout costs $2T$ transformer evaluations, which dominate wall-clock
time. Throughout, a \emph{rollout} is the map $G(c, s) \mapsto y$ from a
prompt $c$ and a seed $s$ (which fixes $\vx_1$) to a decoded video $y$; the
ODE solver is deterministic, so the seed fully determines the sample.

\else
% Conference glue: compact notation; the full sampling equations live in
% Appendix app:method-details (sections/x_method_flow.tex).
\paragraph{Model and rollouts.}
We consider latent video diffusion models trained with flow matching, as
instantiated by \wan{} \citep{lipman2023flow,liu2022rectified,wan2025wan}.
A diffusion transformer (DiT) $v_\theta$ follows a
deterministic $T$-step probability-flow-ODE schedule with classifier-free
guidance (CFG; \citealp{ho2022classifierfree}). Each rollout uses $2T$
transformer evaluations, one per guidance branch per step. These evaluations
dominate wall-clock time. A \emph{rollout} maps prompt $c$ and seed $s$ to a
decoded video, $G(c, s) \mapsto y$. The seed fixes the initial noise. The
deterministic solver therefore fixes the sample
(Appendix~\ref{app:method-details}).
\fi

\paragraph{Best-of-$N$ search with a verifier.}
Verifier-guided test-time search draws $N$ candidates from distinct seeds
\citep{ma2025inference,liu2025videot1}. It scores $y_i = G(c, s_i)$ with
$V(y, c) \in \R$. It returns $y_{i^\star}$, where
$i^\star = \argmax_i V(y_i, c)$. Our verifier is ImageReward
\citep{xu2023imagereward}, averaged over $K{=}8$ uniformly spaced frames.
We follow prior video test-time-search work in using a reward model as the
verifier \citep{liu2025videot1,ma2025inference,he2025evosearch}.
\reportonly{Frame-level verifiers can be reward-hacked \citep{ma2025inference}.
Section~\ref{sec:abl-verifier} repeats the audit with a video-native verifier
(VideoScore \citep{he2024videoscore}). Section~\ref{sec:abl-temporal} uses
direct temporal measurements.}

\paragraph{Cost model.}
Let $C_f$ denote the cost of one full-compute rollout. Let $C_c < C_f$ denote
the cost of one cached rollout. Their ratio is $\gamma = C_c / C_f$.
Verification is shared across strategies and costs little relative to
generation, so we omit it. \reportonly{Verification scores $8$ frames, whereas
DiT sampling takes tens of seconds.} Full-compute best-of-$N$ costs $N C_f$.

% =============================================================================
% METHOD - owned by the paper-method agent.
% Faithful to: code/videogen1/caching.py, code/videogen1/gen.py,
%              code/experiments/b1_gate.py, code/experiments/b1_simulate.py.
% Measured constants: PROGRESS.md + pillars/B-budget-optimal/PLAN.md only.
% Provenance ledger: paper/NOTES-method-agent.md.
%
% PREAMBLE REQUIREMENTS (skeleton agent, please add to main.tex):
%   \usepackage{algorithm}
%   \usepackage{algpseudocode}
% and bib entries: lipman2023flow, liu2022rectified, ho2022classifierfree
% (details in NOTES-method-agent.md). Theorem envs are defined locally in
% sections/theory.tex with \@ifundefined guards; moving them to main.tex is
% fine and the guards will yield automatically.
% =============================================================================

\section{Method}
\label{sec:method}  % section agents: KEEP this label - other sections \ref it

\cachedsearch{} separates the two jobs in test-time search. \emph{Exploration}
generates candidates for ranking. \emph{Delivery} generates the final video.
Ranking can survive even when pixel fidelity does not. We therefore cache
every exploration rollout aggressively. We then regenerate only the winning
seed at full compute (Figure~\ref{fig:method-flow}). \reportonly{Using the notation of Section~\ref{sec:prelim}, we define the
caching rule (Section~\ref{sec:caching}), then present the algorithm
(Section~\ref{sec:algorithm}) and its cost (Section~\ref{sec:cost}). }Appendix~\ref{app:theory}
analyzes how ranking noise becomes search regret.

% [design-overhaul Lane A] Method schematic promoted to its own full-width
% figure. Real \wan{} frames (b1_gate_vbench); bar heights are recorded
% ImageReward scores. Built by code/paper_figs/make_fig2.py.
\begin{figure}[t]
    \centering
    \includegraphics[width=\ifreport 1.0\else 0.78\fi\linewidth]{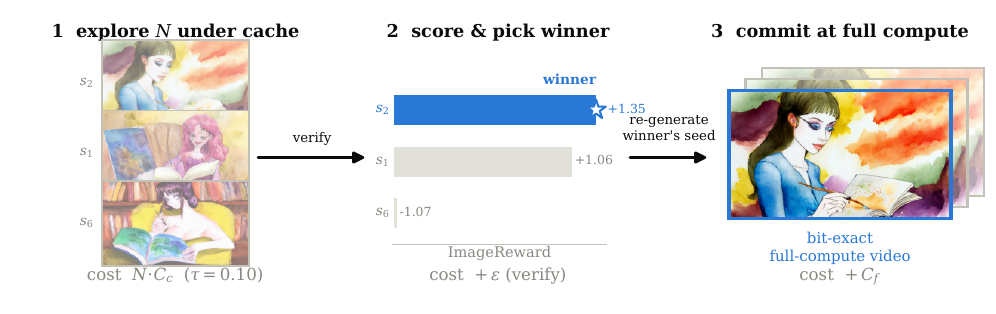}
    \caption{Overview of \cachedsearch{}. Cache every candidate, score the
    drafts, then re-generate the winning seed at full compute.}
    \label{fig:method-flow}
\end{figure}

\subsection{Adaptive transformation-vector caching}
\label{sec:caching}

We wrap the DiT with training-free caching for exploration. The wrapper uses
the \emph{transformation-vector} formulation of EasyCache
\citep{zhou2025easycache}. \reportonly{Section~\ref{sec:related-caching} surveys
this method family.}
\ifreport
% SHARED BLOCK (both editions; single source). Report edition: inlined in
% sections/method.tex Section Cached exploration (defines eq:delta / eq:skip).
% Conference edition: hosted in the appendix (app:method-details); the main
% text keeps an inline one-sentence statement of the same rule.
The wrapper intercepts every transformer call. Let $\vx$ denote the latent
input of the current call and $v_\theta(\vx)$ its output. Rather
than caching the output itself, we cache the transformation vector computed
at the most recent computed call, whose input we denote
$\vx_{\mathrm{ref}}$:
\begin{equation}
\Delta \;=\; v_\theta(\vx_{\mathrm{ref}}) - \vx_{\mathrm{ref}} .
\label{eq:delta}
\end{equation}
A skipped call returns the approximation
\begin{equation}
\hat v_\theta(\vx) \;=\; \vx + \Delta ,
\label{eq:skip}
\end{equation}
i.e., it assumes the residual $v_\theta(\vx) - \vx$ varies slowly
across adjacent steps even where $v_\theta$ itself does not.

\else
% Conference glue: same rule stated inline; display equations (eq:delta,
% eq:skip) live in Appendix app:method-details (sections/x_method_delta.tex).
The wrapper intercepts every transformer call. It caches the transformation
vector $\Delta = v_\theta(\vx_{\mathrm{ref}}) - \vx_{\mathrm{ref}}$ from the
most recent computed input $\vx_{\mathrm{ref}}$. A skipped call returns
$\hat v_\theta(\vx) = \vx + \Delta$. This approximation assumes that the
residual $v_\theta(\vx) - \vx$ changes slowly between adjacent steps
(Appendix~\ref{app:method-details}).
\fi

\paragraph{Adaptive skip rule.}
An accumulated input-change indicator controls skipping. At each call, the
wrapper adds the relative drift from the last computed input:
\begin{equation}
a \;\leftarrow\; a \;+\;
\frac{\lVert \vx - \vx_{\mathrm{ref}} \rVert_F}{\lVert \vx_{\mathrm{ref}} \rVert_F},
\label{eq:indicator}
\end{equation}
The wrapper skips while $a \le \tauc$. Once $a > \tauc$, it evaluates the
transformer and refreshes $\Delta$ and $\vx_{\mathrm{ref}}$. It then resets
$a \leftarrow 0$. \reportonly{Each skipped step adds the total drift since the
last computation. Thus, the indicator grows faster the longer a run of skips
continues and
ends long skip runs conservatively. The wrapper always computes the first
$K_w$ warmup steps and last $K_c$ cooldown steps. It also computes until the
first $\Delta$ exists. We use $K_w = K_c = 5$ with $T = 50$ throughout.} The
threshold $\tauc$ trades speed against fidelity
(Section~\ref{sec:experiments}).

\ifreport
% SHARED BLOCK (both editions; single source). Report edition: inlined at the
% end of sections/method.tex Section Cached exploration. Conference edition: hosted
% in the appendix under app:method-details (sections/appendix.tex).
\paragraph{Per-branch cache state.}
Under CFG the pipeline calls the transformer once per guidance branch per
step (Eq.~\ref{eq:cfg}); the conditional and unconditional inputs differ and
drift at different rates. The wrapper therefore keeps fully independent state
$(\Delta, \vx_{\mathrm{ref}}, a)$ per branch. It alternates between the two
guidance branches, keeping separate cache state so each branch schedules its
own skips.

\paragraph{Properties.}
The wrapper is training-free and model-agnostic. It replaces the transformer
module of an existing pipeline with no model changes. It also preserves
determinism. For fixed $(c, s, \tauc)$, the cached rollout
$G_{\tauc}(c, s)$ is deterministic, exactly like $G(c, s)$, which off mode
recovers. A static variant that computes every $k$-th step between warmup and
cooldown serves as a uniform-schedule baseline.

\else
% Conference glue: the per-branch state + properties paragraphs are hosted in
% Appendix app:method-details (sections/x_method_cache_extra.tex).
Under CFG, each guidance branch has independent cache state. The wrapper is
training-free and model-agnostic. It replaces the transformer without model
changes. Like $G(c, s)$, the cached rollout $G_{\tauc}(c, s)$ is deterministic
(Appendix~\ref{app:method-details}).
\fi

\subsection{CachedSearch}
\label{sec:algorithm}

\cachedsearch{} caches all $N$ exploration rollouts
(Algorithm~\ref{alg:cachedsearch}). It scores them and selects one seed.
\emph{Keep-draft} delivers the cached winner. \emph{Recommit} regenerates that
seed at full compute. Recommit stores no latents. Determinism ensures that
$G(c, s_{i^\star})$ exactly reproduces the winning seed's full-compute sample.
\reportonly{This is the sample full-compute best-of-$N$ would deliver after
selecting the same seed.} Caching therefore affects only seed selection in
recommit mode. We measure ranking fidelity in
Section~\ref{sec:experiments}. \reportonly{Our measure is per-prompt Spearman
$\rhosp$ between cached and full scores over matched seeds.} Appendix~\ref{app:theory}
connects ranking fidelity to regret and iso-cost gains
(Proposition~\ref{prop:isocost}).

\begin{algorithm}[t]
\confonly{\small}
\caption{\cachedsearch{}: cached exploration with optional full-compute commit}
\label{alg:cachedsearch}
\begin{algorithmic}[1]
\Require prompt $c$; seeds $s_1,\dots,s_N$; cache threshold $\tauc$;
verifier $V$; mode $\in \{\textsc{keep}, \textsc{commit}\}$
\For{$i = 1, \dots, N$} \Comment{exploration: $N$ cached rollouts, cost $N C_c$}
  \State reset cache state; \; $y_i \gets G_{\tauc}(c, s_i)$
    \Comment{Eqs.~\ref{eq:delta}--\ref{eq:indicator}}
  \State $r_i \gets V(y_i, c)$
\EndFor
\State $i^\star \gets \argmax_{i} r_i$
\If{mode $=$ \textsc{commit}}
  \State \Return $G(c, s_{i^\star})$
    \Comment{delivery, cost $C_f$; exact full-compute sample (seed-deterministic)}
\Else
  \State \Return $y_{i^\star}$ \Comment{deliver cached winner as-is}
\EndIf
\end{algorithmic}
\end{algorithm}

\reportonly{Keep-draft is cheaper but delivers a cached sample. Its quality
therefore requires an audit. Frame-level verifiers can miss temporal
artifacts. Section~\ref{sec:abl-temporal} audits temporal metrics, optical
flow, and \LPIPSMetric{}. Static temporal quality survives, but aggressive caching
dampens motion (mean flow $-8.0\%$ at $\tauc{=}0.20$). We therefore recommend
recommit by default. \cachedsearch{} is independent of the search procedure.
It accelerates any sample-then-rank loop. It also composes with candidate
pruning such as latent-reward filtering \citep{latsearch2026}.
Section~\ref{sec:abl-stack} confirms that their savings multiply.}

\subsection{Cost analysis}
\label{sec:cost}

With verification cost shared and omitted, the strategies cost
% Report edition: annotated with overbrace term labels (ScaleRL Eq.-1
% pattern; design lane E). Conference edition keeps the plain form --
% same equation, same label, page budget untouched.
\ifreport
\begin{equation}
\begin{aligned}
\mathrm{cost}(\textsc{keep}) &=
\overbrace{N C_c}^{\textcolor{repaccent}{\text{explore cheap}}}, &\qquad
\mathrm{cost}(\text{best-of-}N) &=
\overbrace{N C_f}^{\textcolor{repaccent}{\text{all candidates full}}}, \\[0.6ex]
\mathrm{cost}(\textsc{commit}) &=
\overbrace{N C_c}^{\textcolor{repaccent}{\text{explore cheap}}}
+ \overbrace{C_f}^{\textcolor{repaccent}{\text{commit full}}}. &&
\end{aligned}
\label{eq:costs}
\end{equation}
\else
\begin{equation}
\mathrm{cost}(\textsc{keep}) = N C_c,
\qquad
\mathrm{cost}(\textsc{commit}) = N C_c + C_f,
\qquad
\mathrm{cost}(\text{best-of-}N) = N C_f .
\label{eq:costs}
\end{equation}
\fi
Recommit is cheaper than full-compute best-of-$N$ iff $N C_c + C_f < N C_f$,
i.e., beyond the break-even width
\begin{equation}
N \;>\; N^\ast \;=\; \frac{1}{1 - \gamma} .
\label{eq:breakeven}
\end{equation}
On \wan{} \citep{wan2025wan}, we measure $C_f = 68.3$\,s and $C_c = 34.7$\,s at
$\tauc = 0.10$. This is a $1.97\times$ per-rollout speedup. \reportonly{The
setting uses a resolution of $480{\times}832$, $81$ frames, $T{=}50$, and one
GH200 GPU. The
speedup is $1.58\times$ at $\tauc{=}0.05$ and $2.41\times$ at
$\tauc{=}0.20$.} Thus, $\gamma \approx 0.51$ and $N^\ast \approx 2.0$.
$N{=}2$ is a cost wash. At $N{=}8$, recommit costs
$8 C_c + C_f \approx 346$\,s, versus $8 C_f \approx 546$\,s ($63\%$).
\reportonly{Keep-draft costs $\approx 278$\,s ($51\%$). Both ratios approach
$\gamma$ as $N$ grows. Under a fixed budget $B$, full search evaluates
$B/C_f$ candidates. \cachedsearch{} explores $(B - C_f)/C_c$ candidates.
With $\gamma \approx 0.5$, this is nearly twice as many candidates.}

Caching does not increase peak memory. Its only state is the per-branch
$(\Delta, \vx_{\mathrm{ref}}, a)$ from Section~\ref{sec:caching}. This uses
two latent-sized tensors and one scalar per guidance branch. \cachedsearch{}
stores nothing across candidates. \reportonly{It regenerates the winner from
its seed instead of saving latents.} Peak memory therefore matches a plain
rollout.

% ======================================================================
% Experiments (owner: paper-experiments agent).
% Every number is measured (sources: PROGRESS.md, pillars/B PLAN.md) and was
% re-derived from the raw per-candidate jsonl by code/paper_figs/make_figs.py,
% which also renders figs/f1-f7 and cross-checks each value (all checks
% match). Data: results/b1_gate_{v0,tau005,tau020,vbench}/scores_shard*.jsonl.
% NOTE: b1_gate_vbench (E2) FINAL: n=944/946 fully covered prompts (seeds 0-7;
% 2 prompts never reached full coverage). Figures + numbers re-derived at final
% coverage by make_figs.py on 2026-07-07 (polish pass); all checks pass.
% ======================================================================
\section{Experiments}
\label{sec:experiments}  % section agents: KEEP this label - other sections \ref it

We test ranking preservation, delivered gain per unit cost, the reward cost of
ranking errors, and replication on the \VBenchMetric{} suite and
\VBenchTwoMetric{}.

\subsection{Setup}
\label{sec:exp-setup}

\paragraph{Model and suites.}
The $50$-prompt gate grid is the pilot and configuration set, and every
headline finding is re-measured on the full official suites: $946$-prompt
VBench ($15{,}104$ paired rollouts) and $1{,}013$-prompt VBench-2.0.
The gate uses \wan{} \citep{wan2025wan} at $480{\times}832$, $81$ frames,
$T{=}50$, guidance $5.0$, $8$ seeds, and one NVIDIA GH200 per rollout; its
$800$ full/cached videos at $\tauc=0.10$ support strategy simulation, with
$400$ cached videos at each of $\tauc\in\{0.05,0.20\}$ for the sweep.
The VBench \texttt{all\_dimension} list and the harder VBench-2.0 benchmark
use the same $8$-seed paired protocol at $\tauc=0.10$, and we save suite
videos for multi-verifier rescoring \citep{huang2023vbench,zheng2025vbench2}.
Full generation is seed-deterministic; ImageReward
\citep{xu2023imagereward}, averaged over $8$ uniformly spaced frames, scores
all candidates. The two CFG branches keep independent cache state.

\paragraph{Timing and strategies.}
At $\tauc=0.10$, $C_f=68.3\pm4.5$\,s and $C_c=34.7\pm2.5$\,s
(mean$\pm$sd, $n{=}400$ per arm), a $1.97\times$ candidate speedup; batch
size one is fastest (Section~\ref{sec:abl-batch}), and
Appendix~\ref{app:repro} gives the timing protocol. We compare single
($C_f$), best-of-$N$ full ($NC_f$), keep ($NC_c$), and commit
($NC_c+C_f$), averaging measured scores and latencies over every
$\binom{8}{N}$ seed subset without extrapolation.

\paragraph{Definitions and protocol.}
For prompt $c$, $S_i=V(G(c,s_i),c)$ and
$\hat S_i=V(G_{\tauc}(c,s_i),c)$ are candidate $i$'s full and cached scores.
Delivered value scores the returned video, using $S_i$ for single,
best-of-$N$, and commit and $\hat S_i$ for keep; commit reproduces the
winning seed's full trajectory, while keep remains a nominal value audited
in Section~\ref{sec:abl-temporal}. \emph{Gain} subtracts the mean single
baseline; \emph{regret} is $\max_iS_i-S_{\argmax_i\hat S_i}$;
\emph{capture} is retained best-of-$N$ gain, reported either as the ratio of
mean gains (Table~\ref{tab:main}) or the mean per-prompt ratio in
Eq.~\ref{eq:capture} (Table~\ref{tab:tau} and the VBench suites);
\emph{speedup} is $C_f/C_c$; and \emph{cost} is end-to-end delivery time
(Eq.~\ref{eq:costs}). At the default, the cached transformer's
${\approx}0.49$ forward-pass ratio matches wall-clock
$\gamma=C_c/C_f=0.508$. We fixed $\tauc=0.10$, commit, and $N=8$ on the
gate before official-suite evaluation and verified the implementation with a
bit-exact no-cache control. All reported uncertainty uses a prompt bootstrap
($B=10^4$).
Appendix~\ref{app:repro} gives the determinism and seed protocol.

\subsection{Candidate-ranking preservation}
\label{sec:exp-ranking}

% Panel bodies defined ONCE and reused by both editions (report: two-panel
% figure below; conference: rho-hist here, rank-scatter in the appendix via
% sections/x_fig_scatter.tex). Keep captions here - single source.
\newcommand{\PanelRankScatter}{%
    \centering
    \includegraphics[width=\linewidth]{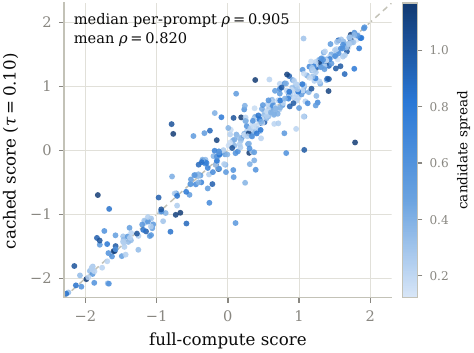}
    \caption{\textbf{Within-prompt candidate ranking survives caching:
    median Spearman $\rhosp=0.905$.} Cached vs.\ full scores for
    $50$ prompts and $8$ seeds at $\tauc=0.10$. Color encodes
    within-prompt score spread; each dot is one measured rollout pair.}
    \label{fig:rank-scatter}}
% Caption body single-sourced: used by \PanelRhoHist (report two-panel
% figure) and by the conference side-by-side layout below.
\newcommand{\RhoHistCap}{\textbf{Median $\rhosp$ stays at
    $0.86$--$0.90$ across caching levels and at $19\times$ prompt scale.}
    Per-prompt cached/full rank-correlation histograms for the gate sweep
    ($n=50$) and the \VBenchMetric{} suite at $\tauc=0.10$;
    Table~\ref{tab:tau} gives exact summaries.}
\newcommand{\PanelRhoHist}{%
    \centering
    \includegraphics[width=\linewidth]{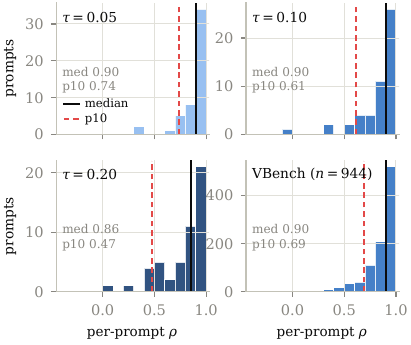}
    \caption{\RhoHistCap}
    \label{fig:rho-hist}}
\ifreport
\begin{figure}[!tb]
  \centering
  \begin{minipage}[t]{0.48\linewidth}
    \PanelRankScatter
  \end{minipage}\hfill
  \begin{minipage}[t]{0.48\linewidth}
    \PanelRhoHist
  \end{minipage}
\end{figure}
\fi
% Conference: fig:rho-hist rides beside tab:main in Section exp-main (below),
% so the histogram grid and the strategy table balance one another.

On the VBench suite, median within-prompt Spearman correlation is
\textbf{$\rhosp=0.905$}, top-1 agreement is \textbf{$72\%$}, and p10 is
$0.690$. The gate pilot gives the same median, with $64\%$ top-1, mean
$0.820$, p10 $0.614$, and $9/50$ prompts below $\rhosp=0.7$.
Figures~\ref{fig:rho-hist} and~\ref{fig:rank-scatter}\confonly{
(Appendix~\ref{app:extra-figs})} show the distributions; Sections
\ref{sec:exp-regret} and~\ref{sec:abl-corruption} price and explain the tail.

\takeaway{At a $1.97\times$ candidate
speedup, cached rollouts preserve within-prompt candidate ranking
(median $\rhosp = 0.905$); the corrupted minority concentrates where
candidates are near-tied.}

\subsection{Search gain versus wall-clock}
\label{sec:exp-main}

\newcommand{\MainTableCaption}{\textbf{Commit retains $94.7\%$ of
best-of-$8$ gain at $63.3\%$ of its cost.} Strategies over all seed subsets
of the $50\times8$ grid at $\tauc=0.10$. Columns report delivered gain,
wall-clock, capture, and relative cost; shading marks the default.}
\newcommand{\MainTableBody}{%
  \centering
  \ifreport\footnotesize\else\tabsize\fi
  \setlength{\tabcolsep}{2pt}
  \begin{tabular}{clcccc}
  \toprule
  $N$ & strategy & gain $\uparrow$ & cost (s) $\downarrow$ &
  capture $\uparrow$ & rel.\ cost $\downarrow$ \\
  \midrule
  -- & single & $+0.000$ & $68\ci{67,70}$ & -- & -- \\
  \midrule
  2 & full & $+0.304\ci{.24,.38}$ & $137\ci{134,140}$ &
  $100\%$ & $100\%$ \\
  2 & keep & $+0.286\ci{.22,.36}$ & $69\ci{68,71}$ &
  $94.1\%\ci{84.4,103.3}{}^\dagger$ & $50.8\%$ \\
  2 & commit & $+0.274\ci{.21,.34}$ &
  $138\ci{135,141}$ & $89.9\%\ci{85.0,93.5}$ & $100.8\%$ \\
  \midrule
  4 & full & $+0.514\ci{.39,.65}$ & $273\ci{268,279}$ &
  $100\%$ & $100\%$ \\
  4 & keep & $+0.496\ci{.37,.63}$ &
  $139\ci{136,142}$ & $96.5\%\ci{90.9,101.9}{}^\dagger$ & $50.8\%$ \\
  4 & commit & $+0.461\ci{.34,.59}$ &
  $207\ci{203,212}$ & $89.6\%\ci{84.0,93.8}$ & $75.8\%$ \\
  \midrule
  8 & full & $+0.752\ci{.53,1.00}$ & $547\ci{537,557}$ &
  $100\%$ & $100\%$ \\
  8 & keep & $+0.749\ci{.53,1.00}$ &
  $278\ci{272,284}$ & $99.6\%\ci{95.4,104.3}{}^\dagger$ & $50.8\%$ \\
  \rowcolor{blue!8}
  8 & \textbf{commit} & $\bm{+0.712}\ci{.48,.97}$ &
  $\bm{346}\ci{340,354}$ & $\bm{94.7\%}\ci{90.4,97.6}$ & $\bm{63.3\%}$ \\
  \bottomrule
  \end{tabular}}

\ifreport
\newcommand{\HeroCaption}{\textbf{Above break-even, every caching engine beats
full-compute best-of-$N$.} Delivered reward vs.\ end-to-end budget on the
$50$-prompt gate grid. Markers denote $N=2,4,8$; step truncation is the
matched-cost alternative.}
\newcommand{\HeroGraphic}{%
  \includegraphics[width=\linewidth]{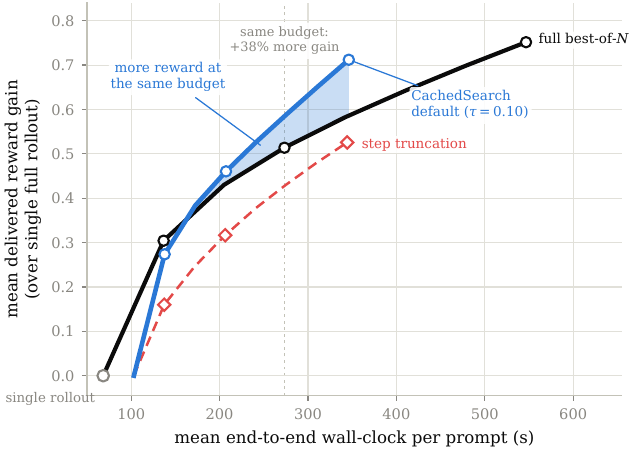}}

% The report places this panel beside Table~\ref{tab:main}. The conference
% keeps a wide standalone copy in the appendix.
\ifreport\else
\begin{figure}[!tb]
  \centering
  \includegraphics[width=0.84\linewidth]{figs/f16_hero.pdf}
  \caption{\HeroCaption}
  \label{fig:hero}
\end{figure}
\fi

\begin{figure}[!tb]
  \centering
  \begin{minipage}[t]{0.55\linewidth}
    \vspace{0pt}
    \captionof{table}{\MainTableCaption}
    \label{tab:main}
    \MainTableBody
  \end{minipage}\hfill
  \begin{minipage}[t]{0.43\linewidth}
    \vspace{0pt}
    \centering
    \HeroGraphic
    \captionof{figure}{\HeroCaption}
    \label{fig:hero}
  \end{minipage}
\end{figure}
\else
% Balanced pair: strategy table (left) and the rank-correlation histograms
% (right). Both are set to the same vertical extent by \vspace{0pt} tops.
\begin{figure}[!tb]
  \centering
  \begin{minipage}[t]{0.485\linewidth}
    \vspace{0pt}
    \centering
    \captionof{table}{\MainTableCaption}
    \label{tab:main}
    \resizebox{\linewidth}{!}{\MainTableBody}
  \end{minipage}\hfill
  \begin{minipage}[t]{0.485\linewidth}
    \vspace{0pt}
    \centering
    \includegraphics[width=\linewidth]{figs/f2_rho_hist.pdf}
    \captionof{figure}{\RhoHistCap}
    \label{fig:rho-hist}
  \end{minipage}
\end{figure}
\fi

Gain, capture, and cost follow Section~\ref{sec:exp-setup}; measured
latencies are $C_f=68.3$\,s and $C_c=34.7$\,s. Keep is cached-scored and
therefore nominal (Section~\ref{sec:abl-temporal}). A dagger marks ratios
above $100\%$, an unbounded ratio-of-means artifact rather than
evidence of gain beyond best-of-$N$.
Table~\ref{tab:main} gives the central result. Figure~\ref{fig:hero}
\confonly{} compares exploration engines;
Figure~\ref{fig:pareto}\reportonly{ (Appendix~\ref{app:extra-figs})} gives the
strategy-only view. Above the $N^\ast\approx2$ break-even, every cached curve
beats full-compute best-of-$N$.
At $N=8$, commit retains \textbf{$94.7\%$} of best-of-$8$ gain at
\textbf{$63.3\%$} of its cost. At ${\approx}300$\,s, it searches $8$ instead
of $4$ candidates and gains \textbf{$38\%$} more reward. Width extensions
reach $95.7\%$ capture at $N=16$ and $95.2\%$ at $N=32$ while cost falls to
$57.1\%$ and $53.9\%$ (full analysis: Appendix~\ref{app:scaling-detail}).
Keep roughly halves cost with near-perfect nominal capture, but learned
verifiers miss temporal artifacts, so commit remains the default
(Section~\ref{sec:abl-temporal}).

\confonly{
\begin{figure}[H]
  \centering
  \begin{minipage}[t]{0.485\linewidth}
    \vspace{0pt}
    \centering
    \includegraphics[width=\linewidth]{figs/f16_hero.pdf}
    \captionof{figure}{\textbf{Caching engines win beyond break-even.}
    Delivered reward vs.\ end-to-end budget on the gate grid.}
    \label{fig:hero}
  \end{minipage}\hfill
  \begin{minipage}[t]{0.485\linewidth}
    \vspace{0pt}
    \centering
    \includegraphics[width=\linewidth]{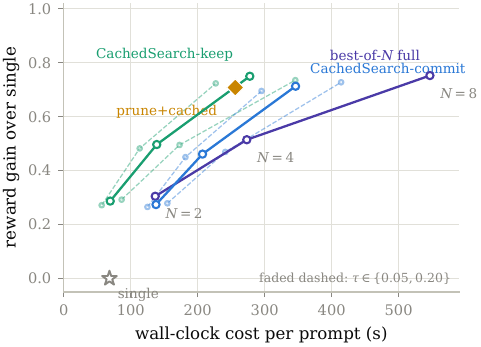}
    \captionof{figure}{\textbf{\cachedsearch{} dominates beyond
    $N^\ast\approx2$.} Reward gain vs.\ wall-clock for $N=2,4,8$;
    gold adds pruning.}
    \label{fig:pareto}
  \end{minipage}
\end{figure}
}

% fig:methods (f11) RETIRED (design-overhaul Lane B): its unique prune+cached
% point is folded into Figure~\ref{fig:pareto}; the bars duplicated
% Table~\ref{tab:main}. x_fig_methods.tex is now empty (see its header).

\takeaway{Explore cached, commit full: at
$N{=}8$ this keeps $94.7\%$ of the best-of-$N$ gain at
$63\%$ of its cost, and at a matched ${\approx}300$\,s budget cached
exploration searches twice as wide for $38\%$ more gain.}

\subsection{Regret under ranking errors}
\label{sec:exp-regret}

% The regret-CDF + corruption two-panel figure lives in
% sections/x_fig_regret_corruption.tex - inlined here in the report edition,
% hosted in the appendix (app:extra-figs) in the conference edition.
\ifreport
% SHARED FLOAT (both editions; single source). The regret-CDF (f6) +
% corruption-scatter (f5) two-panel figure. Report edition: inlined in
% sections/experiments.tex (Section exp-regret). Conference edition: hosted in the
% appendix (app:extra-figs); main-text references to fig:regret-cdf and
% fig:corruption resolve there.
\begin{figure}[t]
  \centering
  \begin{minipage}[t]{0.48\linewidth}
    \centering
    \includegraphics[width=\linewidth]{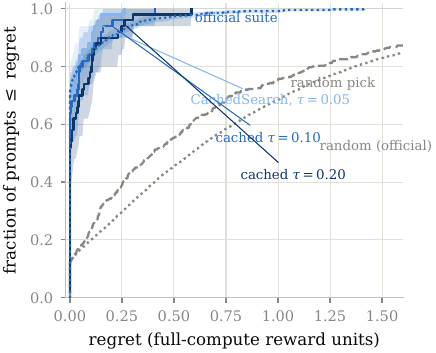}
    \caption{\textbf{Regret stays far below random picking at every
    caching level.} Per-prompt regret CDFs at $N=8$ against pooled random
    baselines. Dotted curves show the \VBenchMetric{} suite; zero regret occurs on
    $64\%$ of gate prompts and $72\%$ official.}
    \label{fig:regret-cdf}
  \end{minipage}\hfill
  \begin{minipage}[t]{0.48\linewidth}
    \centering
    \includegraphics[width=\linewidth]{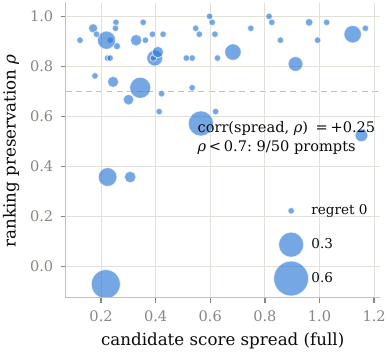}
    \caption{\textbf{Ranking corruption concentrates where candidate
    spread is low and wrong picks are cheap.} Per-prompt $\rhosp$ vs.\
    score spread at $\tauc=0.10$ on $50$ gate prompts; marker area is
    regret. The \VBenchMetric{}-suite correlation is
    $+0.31\ci{.25,.37}$.}
    \label{fig:corruption}
  \end{minipage}
\end{figure}

\fi

On the VBench suite, mean regret is \textbf{$0.056$} against a $0.79$
random-pick baseline, median regret is \textbf{$0$}, $72\%$ of prompts incur
zero regret, and mean per-prompt capture is $90.2\%$. The gate pilot gives
$0.040$ regret against $0.657$ random, $64\%$ zero regret, $93.9\%$
ratio-of-means capture, and $90.1\%$ mean per-prompt capture. Figures
\ref{fig:regret-cdf} and~\ref{fig:corruption} show that most errors swap
near-equivalent candidates; Section~\ref{sec:abl-corruption} tests the
mechanism at scale.

\subsection{VBench suite replication}
\label{sec:exp-vbench}

We repeat the core measurement on the VBench suite with complete paired
coverage across $8$ seeds at $\tauc = 0.10$. At this larger scale, median
$\rhosp$ remains \textbf{$0.905$}, mean $\rhosp$ is $0.859$, and p10 is
$0.690$. Top-1 agreement is
\textbf{$72\%$}. Mean regret is $0.056$ against a $0.79$ random baseline,
which gives \textbf{$90.2\%$} mean per-prompt capture. Candidate speedup is
$1.95\times$ ($67 \to 35$\,s). Only $11\%$ of prompts fall below
$\rhosp=0.7$, and $72\%$ incur zero regret. Figure~\ref{fig:rho-hist} shows
the ranking distribution. Relative to Section~\ref{sec:exp-ranking}, the
larger sample provides greater precision around the same conclusion.\reportonly{
Its regret distribution (Section~\ref{sec:exp-regret}) appears in
Figure~\ref{fig:regret-cdf}.} The smaller corruption tail also agrees with
the mechanism in Section~\ref{sec:abl-corruption}. Full intervals are in the
appendix. The VBench point estimates are favorable, but none differs significantly
from the gate grid at its $n{=}50$ resolution. Every gate estimate remains
statistically consistent with the VBench suite. This result adds
precision, not evidence of improvement over the gate grid. We also repeat
the measurement on VBench-2.0 \citep{zheng2025vbench2}, a harder suite that
tests intrinsic faithfulness through compositional interactions, physics,
commonsense, and camera control. This replication matters because it changes
both prompt structure and difficulty while preserving the paired-seed
protocol. Median $\rhosp=0.881$, top-1 is $69\%$, capture is $89.3\%$, and
speedup is $1.98\times$, all statistically consistent with the VBench suite
(Table~\ref{tab:vbench2}\reportonly{, Appendix~\ref{app:more-abl}}). The
agreement shows that ranking preservation is not specific to the original
suite's per-dimension prompt lists.

% The report shows the qualitative search result immediately after the main
% strategy result; the conference points to its appendix copy here.
\confonly{
Figure~\ref{fig:qual-search} (Appendix~\ref{app:qualitative}) shows
delivered videos for three prompts.}

\confonly{
\begin{table}[H]
\caption{\textbf{The result holds on a third, harder suite.}
Ranking and efficiency at $\tauc=0.10$ with $8$ seeds.}
\label{tab:vbench2}
\centering
\tabsize
\setlength{\tabcolsep}{3.5pt}
\resizebox{\linewidth}{!}{%
\begin{tabular}{lcccccccc}
\toprule
suite & $n$ & med $\rhosp$ $\uparrow$ & mean $\rhosp$ $\uparrow$ &
p10 $\uparrow$ & top-1 $\uparrow$ & zero-regret $\uparrow$ &
capture $\uparrow$ & speedup $\uparrow$ \\
\midrule
gate grid & $50$ & $0.905\ci{.83,.93}$ & $0.820\ci{.76,.87}$ &
$0.61\ci{.36,.71}$ & $64\%\ci{50,76}$ & $64\%\ci{50,76}$ &
$90.1\%\ci{82.6,95.9}$ & $1.97\ci{1.96,1.98}\times$ \\
official VBench & $944$ & $0.905$ & $0.859\ci{.850,.869}$ &
$0.69\ci{.64,.71}$ & $72\%\ci{69,75}$ & $72\%\ci{69,75}$ &
$90.2\%\ci{88.5,91.8}$ & $1.95\ci{1.95,1.95}\times$ \\
VBench-2.0 & $1{,}013$ & $0.881$ & $0.848\ci{.840,.857}$ &
$0.69\ci{.64,.69}$ & $69\%\ci{66,72}$ & $69\%\ci{66,72}$ &
$89.3\%\ci{87.9,90.7}$ & $1.98\ci{1.98,1.98}\times$ \\
\bottomrule
\end{tabular}}
\end{table}
}

\paragraph{Delivered-video metrics.}
On a held-out VBench subset, \cachedsearch{}-commit statistically matches
full best-of-$8$ across the standard metric suite at $64\%$ of its
wall-clock.
The full delivered-video audit is in Table~\ref{tab:multimetric} and
Appendix~\ref{app:more-abl}.

% ======================================================================
% Analysis & ablations (owner: paper-experiments agent).
% Measured sources: PROGRESS.md, pillars/B PLAN.md, README.md (batch bench),
% results/b1_gate_{v0,tau005,tau020}/scores_shard*.jsonl (tau sweep, corruption,
% E5 adaptive-tau simulation; re-derived by code/paper_figs/make_figs.py and
% code/experiments/b1_adaptive_tau.py), results/b1_temporal/records_shard*.jsonl
% + lpips_fixed.jsonl (E1 keep-vs-commit), results/b1_verifiers_vbench/ (E3,
% re-derived by code/paper_figs/e3_analysis.py, run 2026-07-07),
% results/a1_{teacache,pab,cfgcache,easycache-ours,none}/ (A-lite baseline
% grids, tab:baselines; re-derived by code/paper_figs/a1_analysis.py).
% ======================================================================
\section{Analysis and Ablations}
\label{sec:ablations}  % section agents: KEEP this label - other sections \ref it

\subsection{Caching aggressiveness}
\label{sec:abl-tau}

\confonly{
% ======================================================================
% tab:tau - the caching-threshold sweep.
% The horizontal table body is paired with tab:cheapexplore in
% x_tab_tau_cheapexplore.tex.
% ======================================================================
\newcommand{\TauTableCaption}{\textbf{Raising $\tauc$ trades a graceful
capture loss for more speedup.} Caching-threshold sweep ($50$ prompts
$\times\,8$ seeds, with shared full-compute references). Regret and capture
use $N{=}8$; capture follows Eq.~\ref{eq:capture}. Shading marks the default.}
\newcommand{\TauTableBody}{%
\begin{tabular}{ccccccc}
\toprule
$\tauc$ & speedup $\uparrow$ & median $\rhosp$ $\uparrow$ & p10 $\rhosp$ $\uparrow$ & top-1 $\uparrow$ & regret $\downarrow$ & capture $\uparrow$ \\
\midrule
$0.05$ & $1.58\ci{1.55,1.61}\times$ & $0.905\ci{.90,.93}$ & $0.738\ci{.63,.81}$ & $70\%\ci{56,82}$ & $0.025\ci{.011,.041}$ & $93.6\%\ci{88.6,97.5}$ \\
\rowcolor{blue!8}
$0.10$ & $1.97\ci{1.96,1.98}\times$ & $0.905\ci{.83,.93}$ & $0.614\ci{.35,.72}$ & $64\%\ci{50,76}$ & $0.040\ci{.019,.065}$ & $90.1\%\ci{82.6,95.9}$ \\
$0.20$ & $2.41\ci{2.37,2.45}\times$ & $0.857\ci{.80,.91}$ & $0.474\ci{.40,.60}$ & $52\%\ci{38,66}$ & $0.057\ci{.030,.090}$ & $88.3\%\ci{82.3,93.5}$ \\
\bottomrule
\end{tabular}
}

\begin{table}[H]
  \caption{\textbf{Raising $\tauc$ trades capture for speed.}
  Threshold sweep on $50$ prompts and $8$ seeds; shading marks the default.}
  \label{tab:tau}
  \centering
  \tabsize
  \setlength{\tabcolsep}{4pt}
  \newsavebox{\ConfTauTableBox}
  \sbox{\ConfTauTableBox}{\TauTableBody}
  \ifdim\wd\ConfTauTableBox>\linewidth
    \resizebox{\linewidth}{!}{\usebox{\ConfTauTableBox}}
  \else
    \usebox{\ConfTauTableBox}
  \fi
\end{table}
}

Raising the threshold buys speed at a gradual cost to ranking quality.
Candidate speedup grows from $1.58\times$ to $2.41\times$, while p10
$\rhosp$ falls from $0.74$ to $0.47$; Table~\ref{tab:tau}\confonly{ and
Figure~\ref{fig:baselines}}\reportonly{ and Figure~\ref{fig:baselines}} give
the remaining metrics. We use $\tauc=0.10$ by default, nearly doubling
exploration speed while retaining $90\%$ capture.

\subsection{Ranking preservation across published caching methods}
\label{sec:abl-baselines}

% tab:baselines lives in sections/x_tab_baselines.tex - report edition:
% inlined here; conference edition: hosted in app:baseline-details (its
% points + fitted frontier appear in the main-text master figure
% fig:scaling-master).
\ifreport
% ======================================================================
% tab:baselines - published caching methods as the exploration engine.
% Report edition: inlined in Section abl-baselines (sections/ablations.tex).
% Conference edition: hosted in the appendix (app:baseline-details); the
% main text keeps the section prose, the takeaway, and the frontier claim
% (whose points also appear in the master figure fig:scaling-master).
% Single-source rule: NOTES-editions-agent.md.
% ======================================================================
\ifreport
\begin{table}[!tb]
\else
\begin{table}[H]
\fi
\caption{\textbf{Published caches serve as interchangeable exploration
engines.} Four methods use the same $50$-prompt, $8$-seed gate protocol
and full references. Rows report candidate fidelity, speedup, and
best-of-$8$ commit cost; \emph{reuses} names the cheaply recomputed
component. Shading marks the default.}
\label{tab:baselines}
\centering
\ifreport\tabsize\else\small\fi
\setlength{\tabcolsep}{3pt}
\resizebox{\linewidth}{!}{%
\begin{tabular}{llccccccc}
\toprule
exploration engine & reuses & speedup $\uparrow$ & med $\rhosp$ $\uparrow$ & p10 $\uparrow$ & top-1 $\uparrow$ & regret $\downarrow$ & capture $\uparrow$ & cost (s) $\downarrow$ \\
\midrule
no caching (= full best-of-$N$) & - & $0.99\ci{0.96,1.02}\times$ & $1.000$ & $1.000$ & $100\%$ & $0.000$ & $100\%$ & $621$ \\
\midrule
\cachedsearch{} + PAB \citep{zhao2024pab} & attn.\ outputs & $1.29\ci{1.26,1.32}\times$ & $\underline{0.976}\ci{.97,1.00}$ & $\bm{0.950}\ci{.92,.96}$ & $\underline{90\%}\ci{81,98}$ & $\bm{0.004}\ci{.000,.010}$ & $\underline{99.5\%}\ci{98.9,100.0}$ & $493$ \\
\cachedsearch{} + CFG-Cache \citep{lv2024fastercache} & uncond.\ branch & $1.38\ci{1.35,1.41}\times$ & $\bm{1.000}\ci{.97,1.00}$ & $\underline{0.929}\ci{.91,.96}$ & $\bm{92\%}\ci{84,98}$ & $\bm{0.004}\ci{.000,.010}$ & $\bm{99.6\%}\ci{99.0,100.0}$ & $465$ \\
\cachedsearch{} + TeaCache \citep{liu2024teacache} & full model & $\underline{1.84}\ci{1.79,1.90}\times$ & $0.929\ci{.90,.95}$ & $0.783\ci{.61,.83}$ & $68\%\ci{54,80}$ & $\underline{0.029}\ci{.014,.048}$ & $93.2\%\ci{88.1,97.0}$ & $\underline{365}$ \\
\midrule
\rowcolor{blue!8}
\cachedsearch{} + EasyCache \citep{zhou2025easycache} (default) & full model & $\bm{1.96}\ci{1.90,2.02}\times$ & $0.905\ci{.83,.93}$ & $0.614\ci{.35,.72}$ & $64\%\ci{50,78}$ & $0.039\ci{.019,.065}$ & $90.1\%\ci{82.4,95.9}$ & $\bm{346}$ \\
\bottomrule
\end{tabular}}
\end{table}

\fi

Table~\ref{tab:baselines} compares published caching rules as alternative
exploration \emph{engines} inside \cachedsearch{}, not as rival end-to-end
methods. Search, selection, and delivery remain fixed.\reportonly{ We reproduce
one official implementation from each major reuse family.}
\ifreport
% SHARED BLOCK (both editions; single source). Report edition: continues the
% opening paragraph of Section abl-baselines in sections/ablations.tex. Conference
% edition: hosted in the appendix (app:baseline-details).
\textbf{TeaCache} \citep{liu2024teacache} runs the official
Wan2.1 \citep{wan2025wan} rule with the published 1.3B polynomial coefficients and threshold
$0.08$. Below threshold, it skips all transformer blocks and reuses the
cached token residual ($24/50$ steps per CFG branch). \textbf{PAB}
\citep{zhao2024pab} broadcasts attention \emph{outputs} across steps
(official VideoSys rule). Wan has no official configuration, so its unified
3D self-attention uses the CogVideoX \citep{yang2024cogvideox} ``spatial''
range $2$; cross-attention
uses OpenSora's \citep{zheng2024opensora} range $6$. FasterCache's
\textbf{CFG-Cache}
\citep{lv2024fastercache} is a verbatim port with the official
frequency-compensation constants.
It rebuilds $26/50$ unconditional forwards from conditional outputs with
boosted low/high FFT deltas. This isolates CFG reuse from attention reuse
(PAB) and full-model reuse (TeaCache and our EasyCache-style rule
\citep{zhou2025easycache}). A \emph{no-caching} control regenerates the
full-compute variant in the same harness. It reproduces all overlapping
scores bit-exactly, with every common record identical ($\rhosp = 1$,
speedup $1.00\times$). This validates seed determinism and the protocol.

\else

% Conference glue: table + reproduction details hosted in
% app:baseline-details.
A \emph{no-caching} control reproduces the reference scores bit-exactly
through the same harness, validating the protocol (Table~\ref{tab:baselines}
and reproduction details: Appendix~\ref{app:baseline-details}).
\fi

Every engine preserves candidate ranking.\reportonly{ PAB \citep{zhao2024pab}
and CFG-Cache \citep{lv2024fastercache} keep
median $\rhosp$ at $0.98$--$1.00$ and capture above $99\%$ at
$1.29\times$/$1.38\times$. The aggressive full-model engines reach
${\sim}1.8$--$2.0\times$ at $90$--$93\%$ capture.} Together with the $\tauc$
sweep, they form one capture-vs-speedup frontier
(Figure~\ref{fig:baselines}\confonly{; the same
fit appears in Figure~\ref{fig:scaling-master}}). Speedup predicts fidelity
to within a few points despite different reuse patterns. Thus
\cachedsearch{} is cache-agnostic. Our default takes the aggressive end for
maximum speedup; TeaCache \citep{liu2024teacache} is the balanced middle
choice when fidelity matters
more. Architecture-specific calibration appears in
Section~\ref{sec:abl-cog}. In delivered-value terms
(Figure~\ref{fig:engines}), TeaCache
delivers $96\%$ at $67\%$ cost, CFG-Cache and PAB $99.5\%$ at $85$--$90\%$,
ours $95\%$ at $63\%$.

\ifreport
% SHARED ROW: candidate-level and delivered-value views of caching engines.
% SHARED BLOCK (both editions; single source): the caching-method frontier
% figure (f11b, fig:baselines). Report edition: inlined in
% sections/ablations.tex (Section abl-baselines). Conference edition: hosted in the
% appendix (app:extra-figs) - the main-text master figure
% (fig:scaling-master, Section abl-scalingfigs) contains these operating points and
% the same fitted frontier, so the dedicated figure is appendix material
% under the page limit.
\newcommand{\BaselinesCaption}{\textbf{Caching engines share one
capture-vs-speedup frontier.} Filled markers show PAB \citep{zhao2024pab},
CFG-Cache \citep{lv2024fastercache}, TeaCache \citep{liu2024teacache}, and
EasyCache \citep{zhou2025easycache}; blue points sweep $\tauc$ and $\times$
marks the independent default rerun. The dashed fit covers six caching
points ($R^2=0.92$), each with $n=50$ prompts.}
\newcommand{\BaselinesGraphic}{%
  \includegraphics[width=\linewidth]{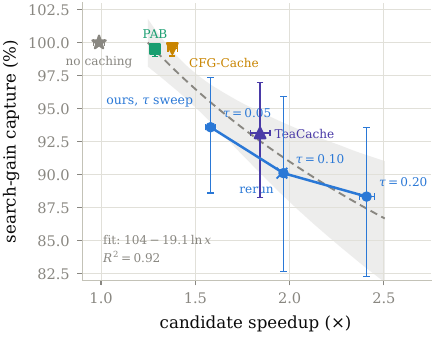}}

% SHARED BLOCK (both editions; single source): existing caching methods as
% exploration engines on end-to-end delivered-gain vs wall-clock axes (f15,
% fig:engines). Complements fig:baselines: that figure positions each engine's
% capture against its candidate speedup; this one prices the FULL pipeline
% (8 cached rollouts + the full-compute recommit) and shows what each engine
% delivers for what it costs. Report edition: inlined in
% sections/ablations.tex (Section abl-baselines, after fig:baselines).
% Conference edition: hosted in the appendix (app:extra-figs). Generated by
% code/paper_figs/make_figs2.py (f15_engines); all points are measured
% selections on the 8-seed gate grid.
\newcommand{\EnginesCaption}{\textbf{Every caching engine beats full-compute
search on delivered cost.} Best-of-$8$ gain capture vs.\ measured end-to-end
wall-clock at $N=8$, including recommit. Points compare
PAB \citep{zhao2024pab}, CFG-Cache \citep{lv2024fastercache},
TeaCache \citep{liu2024teacache}, EasyCache \citep{zhou2025easycache},
and the no-caching control on the $50$-prompt grid.}
\newcommand{\EnginesGraphic}{%
  \includegraphics[width=\linewidth]{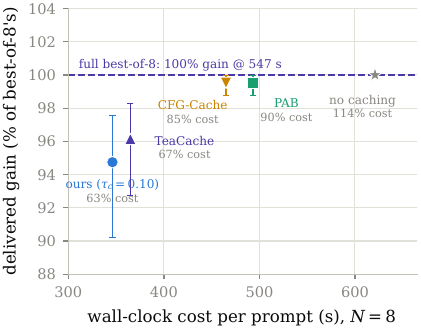}}

\ifreport
\begin{figure}[!tb]
\else
\begin{figure}[H]
\fi
  \centering
  \begin{minipage}[t]{0.485\linewidth}
    \vspace{0pt}
    \centering
    \BaselinesGraphic
    \captionof{figure}{\ifreport\BaselinesCaption\else
    \textbf{Caching engines share one capture-speed frontier.}
    Gain capture vs.\ candidate speedup for PAB \citep{zhao2024pab},
    CFG-Cache \citep{lv2024fastercache}, TeaCache \citep{liu2024teacache},
    EasyCache \citep{zhou2025easycache}, and the threshold sweep.\fi}
    \label{fig:baselines}
  \end{minipage}\hfill
  \begin{minipage}[t]{0.485\linewidth}
    \vspace{0pt}
    \centering
    \EnginesGraphic
    \captionof{figure}{\ifreport\EnginesCaption\else
    \textbf{Every caching engine beats full-compute search.}
    Best-of-$8$ gain capture vs.\ end-to-end cost, including recommit.\fi}
    \label{fig:engines}
  \end{minipage}
\end{figure}

\else

\fi

\subsection{Caching versus truncation}
\label{sec:abl-cheapexplore}

The obvious alternative to caching is to spend fewer denoising steps per
candidate. We measure that alternative under the identical gate protocol,
using the existing full-compute $T{=}25$ arm as cheap exploration and the
$T{=}50$ full-compute rollouts as references. Both arms deliver
${\sim}2\times$ speedup\confonly{ (Table~\ref{tab:cheapexplore})}.\reportonly{ Table~\ref{tab:cheapexplore}
reports the matched comparison.} Caching retains $90.1\%$ of search gain,
while truncation retains $72.6\%$; truncation's regret is nearly six times
higher and its worst-decile ranking is uncorrelated with the reference
ordering.

\confonly{
% ======================================================================
% tab:cheapexplore - caching versus step truncation at matched speedup.
% The horizontal table body is paired with tab:tau in
% x_tab_tau_cheapexplore.tex.
% ======================================================================
\newcommand{\CheapExploreCaption}{\textbf{Ranking preservation under two
cheap-exploration strategies.} Caching and step truncation at matched
${\sim}2\times$ speedup, using $T=50$ references on $50$ prompts and $8$
seeds. Truncation incurs ${\sim}6\times$ more regret; shading marks the
default.}
\newcommand{\CheapExploreBody}{%
\begin{tabular}{lcccccc}
\toprule
strategy & speedup $\uparrow$ & med $\rhosp$ $\uparrow$ &
p10 $\rhosp$ $\uparrow$ & top-1 $\uparrow$ & regret $\downarrow$ &
capture $\uparrow$ \\
\midrule
\rowcolor{blue!8}
caching ($\tauc{=}0.10$) &
$1.97\ci{1.96,1.98}\times$ &
$\bm{0.905}\ci{.83,.93}$ &
$\bm{0.614}\ci{.36,.71}$ &
$\bm{64\%}\ci{50,78}$ &
$\bm{0.039}\ci{.019,.064}$ &
$\bm{90.1\%}\ci{82.5,95.9}$ \\
step-truncation ($T{=}25$) &
$1.98\ci{1.95,2.02}\times$ &
$0.512\ci{.44,.69}$ &
$0.024\ci{-.04,.31}$ &
$44\%\ci{30,58}$ &
$0.226\ci{.088,.398}$ &
$72.6\%\ci{59.2,84.3}$ \\
\bottomrule
\end{tabular}
}

\begin{table}[H]
  \caption{\textbf{Ranking preservation under two cheap-exploration strategies.}
  Caching and truncation at matched ${\sim}2\times$ speedup on
  $50$ prompts and $8$ seeds.}
  \label{tab:cheapexplore}
  \centering
  \tabsize
  \setlength{\tabcolsep}{4pt}
  \newsavebox{\ConfCheapTableBox}
  \sbox{\ConfCheapTableBox}{\CheapExploreBody}
  \ifdim\wd\ConfCheapTableBox>\linewidth
    \resizebox{\linewidth}{!}{\usebox{\ConfCheapTableBox}}
  \else
    \usebox{\ConfCheapTableBox}
  \fi
\end{table}
}

Search needs the cheap score to predict the \emph{full-compute sample's}
rank. A truncated rollout is an honest sample of a different,
shorter-schedule distribution, so its verifier scores rank its own outputs,
not the $50$-step outputs that commit will deliver. Caching instead perturbs
the same seed-matched trajectory. That is precisely the property search
needs.

\takeaway{At the same $2\times$ saving, caching keeps $90\%$ of search's
value while truncation keeps $73\%$. What matters is preserving the
delivered sample's trajectory, not the budget.}

\ifreport
% SHARED ROW: the two compact horizontal ablation tables.

\begin{figure}[!tb]
  \centering
  \begin{minipage}[t]{0.49\linewidth}
    \vspace{0pt}
    \captionof{table}{\TauTableCaption}
    \label{tab:tau}
    \centering
    \tabsize
    \setlength{\tabcolsep}{4pt}
    \resizebox{\linewidth}{!}{\TauTableBody}
  \end{minipage}\hfill
  \begin{minipage}[t]{0.49\linewidth}
    \vspace{0pt}
    \captionof{table}{\CheapExploreCaption}
    \label{tab:cheapexplore}
    \centering
    \tabsize
    \setlength{\tabcolsep}{4pt}
    \resizebox{\linewidth}{!}{\CheapExploreBody}
  \end{minipage}
\end{figure}

\fi

\subsection{Self-limiting ranking corruption}
\label{sec:abl-corruption}

On the \VBenchMetric{} suite ($\tauc = 0.10$), per-prompt $\rhosp$ and full-compute
score spread have corr(spread, $\rhosp$) $= +0.31$
($p = 2{\times}10^{-22}$). Ranking fails
mainly when candidates are already near-tied.\reportonly{ Prompts with
$\rhosp < 0.7$ have lower spread than the rest (median $0.40$ vs.\ $0.56$;
Mann--Whitney
$p = 3{\times}10^{-10}$), and $79\%$ lie in the bottom half. Cached score
noise therefore changes ordering most when candidate separation, and hence
selection value, is small (Appendix~\ref{app:theory},
Eq.~\ref{eq:weightedcapture}). Figure~\ref{fig:corruption} shows the same
pattern on the gate grid.}

\ifreport
% SHARED BLOCK (both editions; single source). Report edition: second
% paragraph of Section abl-corruption in sections/ablations.tex. Conference edition:
% hosted in the appendix (app:corruption-detail).
The \VBenchMetric{} suite bears out both halves of this mechanism.
Selection is most reliable exactly where the most is at stake: across
spread quartiles, gain capture rises monotonically from $81\%$ (lowest
quartile, median $\rhosp = 0.857$) to $96\%$ (highest quartile, median
$\rhosp = 0.952$) while the attainable per-prompt gain more than triples
($0.36 \to 1.27$ reward units), so realized regret does not grow with the
stakes (corr(spread, regret) $= -0.11$). The
corrupted tail is not free (the $11\%$ of prompts with $\rhosp < 0.7$
average $0.161$ regret versus $0.043$ elsewhere, i.e.\ $32\%$ of the
total regret), but it is bounded: half of these prompts ($51\%$) still
pick the exact best candidate, and even within this
subset cached selection retains $68\%$ of the attainable gain. Caching
therefore fails most often where failure costs least; this is why mean
regret ($0.056$ on \VBenchMetric{}, $0.040$ on the gate) sits an order of magnitude below
the random-pick baseline, and it matches the zero median regret of
Section~\ref{sec:exp-regret}. It also suggests spread as a cheap online signal
for a per-prompt adaptive $\tauc$: an idea we test, and reject, next.

\else
% Conference glue: full quartile analysis in app:corruption-detail; the
% report keeps this analysis inline.
The full regret-quartile analysis is in
Appendix~\ref{app:corruption-detail}.
\fi

\takeaway{Corruption is self-limiting: it
concentrates on near-tied prompts (corr(spread, $\rhosp$) $= +0.31$),
capture rises with the stakes ($81\% \to 96\%$ across spread quartiles),
and even the corrupted $11\%$ of prompts retain $68\%$ of their attainable
gain.}

\paragraph{Adaptive threshold (negative result).}
A spread-probe policy is statistically indistinguishable from and pointwise
never better than fixed $\tauc=0.10$: at matched $1.97\times$ speedup,
capture changes by $-0.4$ points. The full policy and frontier are in
Section~\ref{sec:abl-adaptive},
Appendix~\ref{app:more-abl}; we retain one global threshold.

\subsection{Keep-draft versus recommit}
\label{sec:abl-temporal}

Keep's clear cost is \emph{motion dampening}: mean optical flow falls
$-3.1\%$ at $\tauc=0.10$ and $-8.0\%$ at $\tauc=0.20$.
\ImageRewardMetric{} and \VideoScoreMetric{} both prefer the cached outputs,
so neither learned judge detects
the artifact (Section~\ref{sec:abl-verifier}). Use keep only at
$\tauc\leq0.10$ when motion is secondary; use commit for aggressive caching
or motion-critical prompts. The temporal metrics, \LPIPSMetric{}, fidelity examples,
and flow maps are in Appendices~\ref{app:temporal-detail}
and~\ref{app:qualitative}.

\paragraph{Batching control.}
Batching does not explain the saving: full-compute throughput drops from
$0.92$ to $0.90$ videos/min at batch $2$, cached batching gains only $8\%$,
and batch $8$ exceeds memory. Section~\ref{sec:abl-batch} in
Appendix~\ref{app:more-abl} gives the full control and determinism audit.

\paragraph{Verifier robustness.}
VideoScore-v1.1 \citep{he2024videoscore} bounds caching-attributable
cross-verifier loss at $-2.5$ points of gain capture, a statistically
marginal difference far below the verifiers' standing disagreement.
It also prefers motion-dampened cached outputs, so direct temporal metrics
and commit delivery remain necessary; the full ablation is
Section~\ref{sec:abl-verifier} in Appendix~\ref{app:more-abl}.

\subsection{Model generality and scale}
\label{sec:abl-cog}

We evaluate \wan{} \citep{wan2025wan}, Wan2.1-T2V-14B
\citep{wan2025wan}, Wan2.2-TI2V-5B \citep{wan2025wan22,wan2025wan},
CogVideoX-5B \citep{yang2024cogvideox}, HunyuanVideo-13B
\citep{kong2024hunyuanvideo}, and LTX-Video-2B
\citep{hacohen2024ltxvideo}.

\ifreport
\begin{table}[!tb]
\else
\begin{table}[H]
\fi
\caption{\textbf{Ranking fidelity varies by model family at fixed
$\tauc=0.10$.} Six models from four families
\citep{wan2025wan,wan2025wan22,yang2024cogvideox,kong2024hunyuanvideo,hacohen2024ltxvideo},
each with $50$ prompts and
$8$ seeds under its model-card recipe. \emph{skip} is reused denoising
steps; bold and underline mark the two best fidelity values.}
\label{tab:scale}
\centering
\tabsize
\setlength{\tabcolsep}{2pt}
\newsavebox{\ScaleTableBox}
\sbox{\ScaleTableBox}{%
\begin{tabular}{lccccccc}
\toprule
model & params & med.\ $\rhosp$ $\uparrow$ & p10 $\uparrow$ &
top-1 $\uparrow$ & capture $\uparrow$ & skip & speedup $\uparrow$ \\
\midrule
\logocell{wan13b} & 1.3B & $\bm{0.905}\ci{.83,.93}$ & $\bm{0.61}\ci{.35,.72}$ & $\bm{64\%}\ci{50,76}$ & $\bm{90.1\%}\ci{82.6,95.9}$ & $51\%$ & $1.97\ci{1.96,1.98}\times$ \\
\logocell{ltx} & 2B & $0.536\ci{.44,.60}$ & $0.14\ci{-.03,.20}$ & $50\%\ci{36,64}$ & $67.6\%\ci{53.6,80.2}$ & $64\%$ & $2.63\ci{2.62,2.65}\times$ \\
\logocell{wan22} & 5B & $\underline{0.881}\ci{.82,.91}$ & $\underline{0.60}\ci{.46,.67}$ & $46\%\ci{32,60}$ & $86.0\%\ci{78.8,92.4}$ & $55\%$ & $2.05\ci{2.03,2.06}\times$ \\
\logocell{cog} & 5B & $0.762\ci{.66,.83}$ & $0.35\ci{.26,.46}$ & $48\%\ci{34,62}$ & $75.2\%\ci{64.8,84.0}$ & $53\%$ & $2.06\ci{2.05,2.07}\times$ \\
\logocell{hunyuan} & 13B & $0.762\ci{.72,.83}$ & $0.44\ci{.23,.60}$ & $54\%\ci{40,68}$ & $79.9\%\ci{68.8,89.3}$ & $57\%$ & $2.19\ci{2.17,2.20}\times$ \\
\logocell{wan14b} & 14B & $\bm{0.905}\ci{.84,.93}$ & $0.52\ci{.46,.73}$ & $\underline{58\%}\ci{44,72}$ & $\underline{87.5\%}\ci{80.1,93.8}$ & $52\%$ & $2.05\ci{2.03,2.06}\times$ \\
\bottomrule
\end{tabular}
}
\ifdim\wd\ScaleTableBox>\linewidth
  \resizebox{\linewidth}{!}{\usebox{\ScaleTableBox}}
\else
  \usebox{\ScaleTableBox}
\fi
\end{table}

At fixed $\tauc=0.10$, Table~\ref{tab:scale} separates family from size:
Wan2.1-1.3B and 14B both reach median $\rhosp=0.905$ with
$90.1\%/87.5\%$ capture, and Wan2.2 reaches $0.881/86.0\%$. Off-family
ports sit lower regardless of size: CogVideoX and Hunyuan reach median
$0.762$ with $75.2\%/79.9\%$ capture, while over-skipped LTX is the boundary
at $0.536/67.6\%$. Architecture sets fidelity; parameter count sets the
absolute saving, from $6$\,s on LTX-2B to $175$\,s on Wan-14B.

Calibration recovers much of the family gap: CogVideoX reaches $85.9\%$
capture at $\tauc=0.05$ and $1.78\times$, while every Wan backbone reaches
at least $86\%$ at its selected threshold. CogVideoX and Hunyuan use
$\tau^*=0.05$, Wan2.2 and Wan14B use $0.10$, Wan1.3B qualifies at $0.20$,
and no measured LTX threshold reaches $85\%$ capture; the full recipe and
per-model table are in Table~\ref{tab:taucal} in
Appendix~\ref{app:scaling-detail}. The mechanism transfers across a
$10.8\times$ scale gap and four model families, but the operating point
does not. A $25$-prompt pilot is sufficient to recalibrate $\tauc$ for a
new family.

\paragraph{Schedule length and resolution.}
At fixed $\tauc$, $25/50/100$-step trajectories skip $32/51/66\%$ of steps
and reach $1.41/1.97/2.80\times$ at $95.8/90.1/82.5\%$ capture.
Resolution leaves the operating point unchanged ($720$p:
$2.05\times$, $90.8\%$); Section~\ref{sec:abl-schedule} in
Appendix~\ref{app:more-abl} gives the full ablation.

\paragraph{Application: multiplying pruned-search methods.}
Pruning-based search methods reduce how many candidates pay full price;
\cachedsearch{} reduces what each candidate costs. The savings therefore
multiply, and the measured composition confirms the prediction: the measured
$1.96\times$ (caching) and $1.59\times$ (mid-trajectory pruning) factors
produce a $3.11\times$ exploration speedup over full-compute best-of-$8$ at
$88.6\%$ capture, with median regret still zero
(Section~\ref{sec:abl-stack}, Appendix~\ref{app:more-abl}).

\subsection{Scaling behavior}
\label{sec:abl-scalingfigs}

% The master operating-points figure (f13, fig:scaling-master) appears in
% the MAIN TEXT of both editions; the full analysis around it is report
% main flow / conference appendix (app:scaling-detail + app:more-abl).
% All fit numbers: code/paper_figs/scaling_analysis.py (grep "[PROSE]").
\begin{figure}[!tb]
  \centering
  \includegraphics[width=\ifreport 0.99\else 0.80\fi\linewidth]{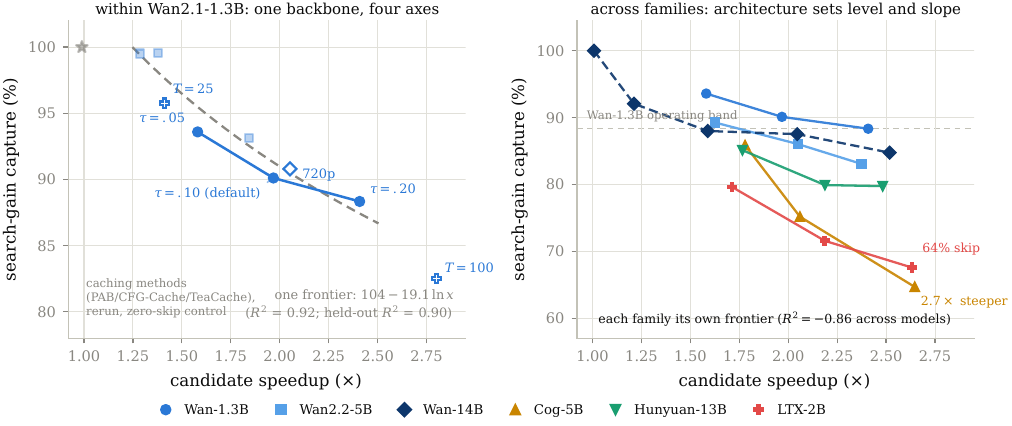}
  \caption{\textbf{One frontier fits reuse choices within a backbone, but
  architecture shifts the frontier.} Gain capture vs.\ candidate speedup;
  up-left is better. Panel A varies reuse axes on Wan2.1-1.3B
  \citep{wan2025wan}; Panel B
  sweeps $\tauc$ across six models. Each point has $44$--$50$ prompts.}
  \label{fig:scaling-master}
\end{figure}

Figure~\ref{fig:scaling-master} collects threshold, engine, schedule,
resolution, width, and model measurements. On Wan2.1-1.3B, the frontier
fitted to threshold and engine predicts held-out schedules, $720$p, and a
rerun within $2.0$ points ($R^2=0.90$); it fails across backbones
($R^2=-0.86$), where each family traces its own curve. Delivered capture
rises toward a ${\sim}95\%$ width plateau, and absolute savings grow with
model cost. Appendix~\ref{app:scaling-detail} contains the operating-point
registry, frontier fits, width and model figures, calibration recipe, and
per-axis analyses.

% ===========================================================================
% CONCLUSION - owned by paper-skeleton agent. Honest limitations, no hype.
% ===========================================================================
\section{Conclusion}
\label{sec:conclusion}

We presented the first study of training-free caching in video diffusion
test-time search\reportonly{ and the method it leads to}. On \wan{}
\citep{wan2025wan}, seed-matched
results show aggressive caching preserves verifier-induced
rankings.\reportonly{ Median Spearman $\rhosp = 0.905$ at a ${\sim}2\times$
per-candidate speedup. This holds on the 50-prompt gate grid and the VBench
suite \citep{huang2023vbench}, with $72\%$ top-1 agreement. The cached winner retains
$90$--$94\%$ of the reward gain from full-compute search.
Errors concentrate on nearly tied candidates, where mistakes cost less.}
\cachedsearch{} explores under caching, then regenerates
only the winner at full compute. It captures $94.7\%$ of best-of-8 gains at
$63\%$ of the cost.\reportonly{ It captures $95.7\%$ of best-of-16 gains at
$57\%$ of the cost, or searches about twice as wide at a fixed budget. Across
six models and four architecture families ($1.3$B--$14$B), it holds at $14$B
within the same family. It transfers to three further architectures
after per-family calibration of $\tauc$. It also stacks with pruning-based
search, reaching a $3.11\times$ exploration speedup at $88.6\%$ capture.
It therefore complements existing search methods.} Table~\ref{tab:manual}
in Appendix~\ref{app:scaling-detail} condenses the practical guidance into
one lookup.

The lesson extends beyond caching.\reportonly{ Any lossy accelerator can support
exploration if it passes the same ranking-plus-regret audit.} Discarded
candidates need not be faithful. They need only be honest about ordering.\reportonly{
Allocating budgets across such accelerators is the next question.}

% Invisible anchor for the page-budget check (end of conclusion = end of the
% ICLR-limit-relevant main text; the optional repro/ethics statements follow).
\phantomsection\label{endofconclusion}

\subsubsection*{Reproducibility statement}
All experiments use publicly released checkpoints: \wan{} \citep{wan2025wan}, Wan2.1-T2V-14B \citep{wan2025wan}, Wan2.2-TI2V-5B \citep{wan2025wan22,wan2025wan}, CogVideoX-5B \citep{yang2024cogvideox}, HunyuanVideo-13B \citep{kong2024hunyuanvideo}, and LTX-Video-2B \citep{hacohen2024ltxvideo}. We use ImageReward \citep{xu2023imagereward} and VideoScore \citep{he2024videoscore} as verifiers with a training-free caching wrapper; no model is trained or fine-tuned.\reportonly{ Generation is deterministic given (prompt, seed, caching threshold $\tauc$), so every candidate and every delivered video in the paper can be re-materialized exactly.} Appendix~\ref{app:repro} gives exact configurations and measured costs. We will release the caching wrapper, search harness, evaluation scripts, prompt lists, seeds, and all per-candidate scores.

\subsubsection*{Ethics statement}
This work reduces the inference cost of an existing capability (text-to-video generation with test-time search) rather than introducing new generative capabilities; risks are those of the underlying open models. Cheaper high-quality sampling lowers the cost of both beneficial and malicious video synthesis. Provenance mitigations (watermarking, content credentials) are orthogonal to and unaffected by our method: caching changes how exploration candidates are computed, not what is delivered\reportonly{, and in the recommended recommit mode the delivered video is a standard full-compute sample}.

% Invisible anchor recording the last page of main text (read from the .aux
% by the page-budget check; adds no visible output to either edition).
\phantomsection\label{endofmain}\message{^^J[[ENDOFMAIN PAGETOTAL=\the\pagetotal of \the\textheight]]^^J}

\bibliography{references}
\bibliographystyle{iclr2026_conference}

\appendix
% =============================================================================
% APPENDIX - owned by the paper-appendix agent (see PROGRESS.md paper table).
% sections/theory.tex (Appendix A, app:theory) is owned by the paper-method
% agent and only \input here. Everything below theory is this agent's.
%
% Data provenance (NO fabricated numbers):
%   - Theory-vs-measurement: verified constants from theory.tex %-comments /
%     code/experiments/b1_theory_check.py; measured values re-derived from raw
%     jsonl by code/paper_figs/appendix_tables.py (matches PROGRESS.md).
%   - Category table + CogVideoX-5B numbers: computed by
%     code/paper_figs/appendix_tables.py from
%     results/b1_gate_{v0,cog5b_v0}/scores_shard*.jsonl (E4 grid complete,
%     800/800 records, verified 2026-07-07).
%   - Verifier-bias numbers: E1, results/b1_temporal/ (as quoted in
%     sections/ablations.tex sec:abl-temporal).
%   - Configs/costs: code/experiments/b1_gate{,_cog}.py defaults,
%     code/videogen1/{gen,gen_cog,caching}.py, code/slurm/*.sbatch, TASKS.md.
% =============================================================================

% =============================================================================
% SHARED APPENDIX HOSTING. Detailed blocks deferred from both main flows live
% here once. Narrow conference-only guards remain around method details that
% the report still presents inline.
% =============================================================================
\clearpage
\section*{Appendix contents}
\startcontents[appendices]
\titlecontents{section}
  [0em]
  {\small}
  {\contentslabel{2.6em}}
  {}
  {\titlerule*[0.6pc]{.}\contentspage}
\printcontents[appendices]{l}{1}{\setcounter{tocdepth}{1}}

\section{Extended related work}
\label{app:related}

This appendix hosts the full survey and positioning argument, pointed to
from Section~\ref{sec:intro}.

% SHARED BLOCK (both editions; single source). Report edition: main-text
% \section{Related Work} (sections/related.tex). Conference edition: hosted in
% the appendix under app:related (sections/appendix.tex). Do not edit headers.

\subsection{Training-free caching for video diffusion}
\label{sec:related-caching}

Training-free caching is the dominant inference-time accelerator for video diffusion transformers. These methods exploit a common pattern: adjacent-step feature and attention differences are large near the trajectory ends but small in the middle. This U-shaped redundancy appears across Open-Sora \citep{zheng2024opensora}, Latte \citep{ma2024latte}, Wan~2.1 \citep{wan2025wan}, and HunyuanVideo \citep{kong2024hunyuanvideo}. PAB \citep{zhao2024pab} broadcasts attention outputs across steps with reuse ranges ordered cross~$>$~temporal~$>$~spatial. TeaCache \citep{liu2024teacache} replaces uniform intervals with output-change estimates from timestep-embedding-modulated inputs. FasterCache \citep{lv2024fastercache} limits subtle feature loss from adjacent-step reuse. Its CFG-Cache also exploits redundancy between classifier-free-guidance branches at the same timestep. EasyCache \citep{zhou2025easycache} reuses per-step transformations while an online accumulated change indicator stays below $\tauc$. It needs no offline calibration, and we adopt this adaptive wrapper. BWCache \citep{liang2025bwcache} gates reuse at DiT-block granularity through relative-$L_1$ similarity. FastCache learns a linear approximation of cached features, and NaviCache self-calibrates its schedule \citep{liu2025fastcache,navicache2026}. LeMiCa \citep{lemica2025} casts scheduling as a shortest-path problem and uses lexicographic minimax optimization to bound globally accumulated error. ReCache \citep{aliev2026recache} instead learns budget-aware schedules with REINFORCE \citep{williams1992reinforce} against uncached targets. These methods move from local reuse heuristics toward global error budgets. The guidance computation that these caches reuse is itself an active design axis. Rectified-CFG++ \citep{saini2025rectifiedcfgpp} replaces the extrapolation step of classifier-free guidance with a predictor-corrector update that keeps rectified-flow samples on the learned manifold, and LGDM \citep{saini2025lgdm} reads guidance-conditioned latent-diffusion features as a perceptual quality signal. Both change what a trajectory or a score means rather than what it costs, so they act on the sampler and the verifier that cached exploration takes as given. Our work instead studies how these cache perturbations affect selection among multiple candidates.

Single-model training-free caching plateaus at roughly $2$--$3\times$ speedup at near-lossless quality. Prior work evaluates \emph{single-sample fidelity} to a full-compute reference through \PSNRMetric{}, \SSIMMetric{}, \LPIPSMetric{}, or absolute \VBenchMetric{} scores. These measurements do not reveal whether small perturbations reorder nearly tied candidates. We instead measure relative candidate ranking, which determines whether caching is safe inside test-time search.

\subsection{Test-time scaling and diffusion search}
\label{sec:related-search}

A complementary literature spends more inference compute to improve quality. \citet{ma2025inference} recast inference-time scaling as verifier-guided search over sampling noise after showing that extra denoising steps saturate. With modest search compute, a 0.6B model can outperform a 12B model without search. They also show that a single verifier can reward-hack, improving its own metric while degrading held-out metrics. We test this failure mode in Section~\ref{sec:abl-verifier}. Video-T1 \citep{liu2025videot1} applies multimodal reward models in a Tree-of-Frames search. EvoSearch \citep{he2025evosearch} uses evolutionary denoising to make Wan~1.3B competitive with Wan~14B. LatSearch \citep{latsearch2026} scores partially denoised latents and prunes before VAE decoding, matching EvoSearch-level gains at a fraction of the compute. Stream-T1 \citep{streamt1_2026} explores streaming generation with few-step chunks. Early-failure detection \citep{earlyfailure2026} aborts candidates from cheap RGB previews. Temporal backtracking search \citep{jun2026tbs} reallocates compute over time by restarting full-quality generation from verified prefixes instead of resampling whole rollouts. Unlike our method, each reduces search cost without reducing every candidate's per-step cost.

These methods prune or truncate candidates, but every survivor pays full per-step denoising cost. We reduce each candidate's rollout cost with training-free caching. This lever composes multiplicatively with existing search methods instead of competing with them, and we demonstrate stacking with latent-RM-style pruning. The composition preserves their pruning logic while lowering the cost of candidates that remain. Our question is whether cache approximation corrupts verifier ranking and silently breaks search.

\subsection{Speculative draft-verify sampling}
\label{sec:related-speculative}

Our explore-cheap/commit-full scheme is closest to speculative execution. \citet{debortoli2025speculative} extend speculative sampling to diffusion with acceptance rules for draft--target denoising pairs. For video, a 1.3B drafter and 14B autoregressive target achieve $1.59\times$ speedup at $98\%$ quality retention through reward-based routing \citep{sdvg2026}. FlowCast \citep{flowcast2026} uses a velocity-MSE acceptance test for training-free self-speculative flow matching. In image generation, T-Stitch \citep{pan2024tstitch} assigns early and late sampling to small and large models. HybridStitch \citep{hybridstitch2026} extends this split across pixels and timesteps. Unlike our method, these approaches accelerate a single trajectory through draft--target verification.

These methods verify within one trajectory, at a step or segment boundary, using a smaller or distilled draft model. We use the same model under aggressive caching, so the draft follows the same latent trajectory statistics by construction. Verification occurs at the candidate level: the search verifier scores cached rollouts, then only the winner is regenerated at full compute. We turn best-of-$N$ search into draft-verify instead of accelerating one trajectory.

\subsection{Composition of efficiency techniques}
\label{sec:related-composition}

Stacking inference accelerators is not automatically safe. Video-BLADE \citep{videoblade2025} shows that training-free composition of pretrained sparse attention with a step-distilled model degrades quality because distillation ignores the sparsity pattern. Joint training fixes the mismatch. QuantCache \citep{quantcache2025} combines caching, quantization, and pruning through shared heuristics. TurboDiffusion \citep{turbodiffusion2025} reaches $100$--$200\times$ acceleration by training sparse-linear attention, low-bit quantization, and consistency distillation together. Sol \citep{sol2026} tunes caching, sparsity, pruning, and quantization per model. Prior work combines methods only along the cost axis. It neither adds the quality axis of verifier-guided search nor tests whether approximation distorts the selection signal. We compose those axes and measure the effect on selection.

% SHARED BLOCK (both editions; single source). Report edition: closes the
% main-text \section{Related Work}. Conference edition: closes the appendix
% Extended Related Work section (app:related).
\paragraph{Positioning.} \cachedsearch{} is a multiplier layer between
training-free caching and test-time search. It uses cached rollouts as
candidate-level drafts, scores them with the search verifier, and restores a
full-compute output by recommitting the winning seed. Its ranking and regret
audit measures when this composition preserves the selection decision and how
much value is lost when it does not.

\ifreport\else
\section{Method details}
\label{app:method-details}

Sampling equations and cache-wrapper details supplement
Section~\ref{sec:method}.

\fi

% =============================================================================
% APPENDIX: RANKING-NOISE ANALYSIS - owned by the paper-method agent.
% Include from sections/appendix.tex via \input{sections/theory}
% (appendix.tex is owned by the paper-appendix agent - coordinate there).
%
% All formulas verified numerically:
%   scripts: code/experiments/b1_theory_check.py (props 1-3, cost model) and
%   code/experiments/b1_theory_check_n.py (width subsection app:theory-width:
%   measured capture-vs-N, attenuation-corrected calibration, plug-in
%   predictions, bootstrap CIs; outputs quoted in % comments at each claim).
% Real-data checks use /scratch/09032/saini_2/VIDEO-GEN1/results/b1_gate_v0/
% (50 prompts x {full, cached tau=0.10}; original 8-seed pool, later extended
% to 16 seeds/prompt -- the width analysis uses the 16-seed grid).
% Measured constants from PROGRESS.md / pillars/B-budget-optimal/PLAN.md.
%
% Theorem environments: defined here only if main.tex has not already done so
% (guards below). Skeleton agent may move the \newtheorem lines to main.tex.
% =============================================================================

% - - environment guards (preamble-safe: kernel \newtheorem, no amsthm needed)
\makeatletter
\@ifundefined{proposition}{\newtheorem{proposition}{Proposition}}{}
\@ifundefined{corollary}{\newtheorem{corollary}{Corollary}}{}
\@ifundefined{assumption}{\newtheorem{assumption}{Assumption}}{}
\@ifundefined{remark}{\newtheorem{remark}{Remark}}{}
\@ifundefined{proof}{%
  \newenvironment{proof}[1][Proof]%
    {\par\noindent\emph{#1.}\ }{\hfill$\square$\par\medskip}}{}
\makeatother

\section{Ranking-noise model for cached exploration}
\label{app:theory}

This appendix models search value lost when cached scores select the winner.
Standard Gaussian-copula and order-statistic tools predict gain capture,
regret, and top-1 agreement from Section~\ref{sec:experiments}. We compare
them with data to support the empirical analysis.

\subsection{Setup and assumptions}
\label{app:theory-setup}

For prompt $c$ and width $N$, candidate $i$ has full score
$S_i = V(G(c, s_i), c)$ and cached score
$\hat S_i = V(G_{\tauc}(c, s_i), c)$ (Section~\ref{sec:exp-setup}). Both are
deterministic given seed $s_i$; randomness is over i.i.d.\ seeds.
\cachedsearch{} selects $\hat\imath = \argmax_i \hat S_i$. Recommit delivers
$S_{\hat\imath}$ by seed-determinism (Section~\ref{sec:algorithm}); the
full-compute oracle attains $\max_i S_i$. We use three assumptions:

\begin{assumption}[Exchangeable candidates]
\label{ass:iid}
The pairs $(S_i, \hat S_i)$, $i = 1, \dots, N$, are i.i.d.\ across
candidates. This reflects the protocol: seeds are drawn independently and
enter symmetrically.
\end{assumption}

\begin{assumption}[Gaussian-copula ranking noise]
\label{ass:copula}
Within a prompt, $(S_i, \hat S_i)$ is bivariate normal with
$S_i \sim \gN(\mu, \sigma^2)$ and correlation $r \in [0, 1]$. Only the
\emph{rank} structure of the cached scores matters for selection ($\argmax$
is invariant to strictly increasing transforms of $\hat S$), so the marginal
of $\hat S$ is irrelevant and Assumption~\ref{ass:copula} is really two
parts: (i) a Gaussian dependence structure (copula) with parameter $r$
between full and cached scores, and (ii) a Gaussian marginal for the full
score. Part (ii) is what lets us convert rank statements into reward units.
\end{assumption}

\begin{assumption}[Verifier as value]
\label{ass:verifier}
The full score $S$ is taken as the value of a candidate. Verifier
mis-specification (reward hacking) is a real but orthogonal failure mode of
all verifier-guided search, treated empirically via verifier ensembles in
Section~\ref{sec:experiments}; the analysis here isolates the \emph{additional}
error introduced by caching.
\end{assumption}

Under Assumption~\ref{ass:copula}, write $S_i = \mu + \sigma Z_i$ and
standardize $\hat S_i$ as $\hat Z_i$. Each $(Z_i, \hat Z_i)$ is standard
bivariate normal with correlation $r$, and
$\hat\imath = \argmax_i \hat Z_i$. Define
\begin{equation}
e_N \;=\; \E\!\left[\max_{i \le N} Z_i\right]
 \;=\; \int_{-\infty}^{\infty} z\, N \phi(z) \Phi(z)^{N-1}\, dz
\label{eq:eN}
\end{equation}
as the expected maximum of $N$ standard normals
($e_2 = 1/\sqrt{\pi} \approx 0.564$; $e_3 \approx 0.846$;
$e_4 \approx 1.029$; $e_6 \approx 1.267$; $e_8 \approx 1.424$;
$e_{16} \approx 1.766$).
% VERIFIED (quadrature, verify_theory.py): e_2=0.5642 e_3=0.8463 e_4=1.0294
% e_6=1.2672 e_8=1.4236 e_12=1.6292 e_16=1.7660; MC (2e6 draws) agrees to 3dp.

\subsection{Value of the noisy argmax}

The first result quantifies the value lost to noisy selection:

\begin{proposition}[Selection under rank noise]
\label{prop:capture}
Under Assumptions~\ref{ass:iid}--\ref{ass:copula},
\begin{equation}
\E\big[S_{\hat\imath}\big] \;=\; \mu + \sigma\, r\, e_N,
\qquad
\E\big[\max_i S_i\big] \;=\; \mu + \sigma\, e_N .
\end{equation}
Consequently the expected regret of \cachedsearch{} relative to full-compute
best-of-$N$ is
\begin{equation}
\gR(N, r) \;=\; \E\big[\max_i S_i - S_{\hat\imath}\big]
 \;=\; \sigma\,(1 - r)\, e_N ,
\label{eq:regret}
\end{equation}
and the expected \emph{gain capture} (the fraction of the best-of-$N$
improvement over a random candidate that survives noisy selection) is
\begin{equation}
\mathrm{capture}(N, r)
 \;=\; \frac{\E[S_{\hat\imath}] - \mu}{\E[\max_i S_i] - \mu}
 \;=\; r,
\qquad \text{independent of } N .
\label{eq:capture}
\end{equation}
\end{proposition}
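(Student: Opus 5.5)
The plan is to reduce everything to the standardized variables $(Z_i,\hat Z_i)$ and use the Gaussian regression decomposition. Under Assumption~\ref{ass:copula}, write
\[
Z_i = r\hat Z_i + \sqrt{1-r^2}\,W_i,
\]
where $W_i\sim\gN(0,1)$ is independent of $\hat Z_i$. By Assumption~\ref{ass:iid}, the triples $(\hat Z_i, W_i)$ are independent across $i$, so the whole family $W_1,\dots,W_N$ is independent of the vector $(\hat Z_1,\dots,\hat Z_N)$.

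The selected index $\hat\imath=\argmax_i \hat Z_i$ is a measurable function of $\hat Z$ alone; ties occur with probability zero. It follows that
\[
\E[W_{\hat\imath}] = \E\big[\E[W_{\hat\imath}\mid \hat Z]\big] = 0,
\]
because conditionally on $\hat Z$ the index is fixed and $W_{\hat\imath}$ is a standard normal. Hence
\[
\E[Z_{\hat\imath}] = r\,\E[\hat Z_{\hat\imath}] = r\,\E\big[\max_i \hat Z_i\big] = r\,e_N .
\]
The last equality holds because the $\hat Z_i$ are i.i.d.\ standard normal, so the expected maximum is the quantity $e_N$ in Eq.~\ref{eq:eN}. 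Multiplying by $\sigma$ and adding $\mu$ gives $\E[S_{\hat\imath}]=\mu+\sigma r e_N$. This is the one step needing care: the decomposition must be applied jointly to all candidates so that the independence of $W$ from the selection event is valid. A naive conditioning on the winner's $\hat Z$ alone would ignore the selection effect, which is why I use this route.

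The oracle term is immediate, since $S_i=\mu+\sigma Z_i$ with $Z_i$ i.i.d.\ standard normal:
\[
\E[\max_i S_i] = \mu+\sigma\,\E[\max_i Z_i] = \mu+\sigma e_N .
\]
Linearity of expectation then gives the regret
\[
\gR(N,r) = \sigma e_N - \sigma r e_N = \sigma(1-r)e_N,
\]
which is Eq.~\ref{eq:regret}. For the capture in Eq.~\ref{eq:capture}, take the ratio $(\sigma r e_N)/(\sigma e_N)=r$. This requires $\sigma>0$ and $e_N>0$; the latter holds for $N\ge2$ because the maximum of at least two i.i.d.\ symmetric variables has positive mean. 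The result is independent of $N$.

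I would remark that integrability is trivial because all variables are Gaussian. I would also note that the marginal of $\hat S$ never enters: only the ranks of $\hat Z$ determine $\hat\imath$, consistent with the copula reading of Assumption~\ref{ass:copula}.
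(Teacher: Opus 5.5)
Your proof is correct and follows the paper's argument essentially step for step. It uses the same regression decomposition $Z_i = r\hat Z_i + \sqrt{1-r^2}\,\eta_i$, with the noise independent of the whole vector $(\hat Z_j)_{j\le N}$; notes that $\hat\imath$ depends on $\hat Z$ alone, so the noise term has zero mean at the selected index; and then obtains regret and capture by subtraction and a ratio. Your extra remarks (ties have measure zero, and $\sigma>0$ with $N\ge 2$ so that the capture denominator $\sigma e_N$ is positive) are minor rigor points that the paper leaves implicit.
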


\begin{proof}
Decompose $Z_i = r \hat Z_i + \sqrt{1 - r^2}\, \eta_i$ with
$\eta_i \sim \gN(0,1)$ independent of $(\hat Z_j)_{j \le N}$; this reproduces
the joint law of Assumption~\ref{ass:copula}. The index
$\hat\imath$ is a function of $(\hat Z_j)$ alone, so
$\E[\eta_{\hat\imath}] = \E\big[\E[\eta_{\hat\imath} \mid (\hat Z_j)]\big]
= 0$ and
$\E[Z_{\hat\imath}] = r\, \E[\hat Z_{\hat\imath}] = r\, \E[\max_i \hat Z_i]
= r\, e_N$, using that $\hat Z$ is standard normal. The oracle term is
Eq.~\eqref{eq:eN}; subtract and rescale by $\sigma$. For
Eq.~\eqref{eq:capture}, a random (or the first) candidate has expected value
$\mu$.
\end{proof}
% VERIFIED (MC, 2.5e5 trials/cell): r=0.832: capture 0.834/0.832/0.832 at
% N=2/4/8; r=0.912: capture 0.912/0.911/0.912. E[Z_sel] matches r*e_N to 3dp.

Rank noise removes a constant gain fraction instead of compounding with width,
so cached exploration can widen search (Section~\ref{sec:cost}). At $r = 0$,
selection is uniform and $\E[S_{\hat\imath}] = \mu$; for $r \ge 0$, expected
value never falls below the no-search baseline. Recommit still delivers a
full-compute sample.

We calibrate latent $r$ from the observed per-prompt Spearman $\rhosp$:

\begin{corollary}[Spearman calibration]
\label{cor:spearman}
Under Assumption~\ref{ass:copula}, the classical Gaussian rank-correlation
relationship \citep{kruskal1958} maps the measured median
$\rhosp=0.905$ to $r\approx0.913$ and the mean $\rhosp=0.820$ to
$r\approx0.833$.
\end{corollary}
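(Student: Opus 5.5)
The plan is to invoke the population identity linking Spearman's rank correlation to the Pearson correlation of a Gaussian copula, invert it in closed form, and evaluate it at the two measured summaries. Under Assumption~\ref{ass:copula}, the pair $(Z_i,\hat Z_i)$ is standard bivariate normal with correlation $r$. Spearman's $\rhosp$ is invariant to strictly increasing marginal transforms, so it depends on the copula alone. It is therefore a function of $r$ only, and the non-Gaussian marginal of $\hat S$ plays no role.

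First I would recall the derivation behind \citep{kruskal1958}. The population Spearman correlation equals $3\,\big(4\Pr[(X_1-X_2)(Y_1-Y_3)>0]-2\big)$ for three independent copies drawn from the joint law. In the Gaussian case, the increments $(X_1-X_2,\,Y_1-Y_3)$ are jointly normal with correlation $r/2$. Sheppard's orthant formula gives $\Pr[\cdot>0]=\tfrac14+\tfrac{1}{2\pi}\arcsin(r/2)$. Substituting yields
\begin{equation}
\rhosp=\frac{6}{\pi}\arcsin\!\big(\tfrac{r}{2}\big),
\qquad\text{equivalently}\qquad
r=2\sin\!\big(\tfrac{\pi}{6}\rhosp\big).
\end{equation}
The map $r\mapsto\rhosp$ is continuous and strictly increasing on $[0,1]$, and it fixes the endpoints $0$ and $1$. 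Hence the inverse is well defined on the observed range.

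Second, I would plug in the numbers. For $\rhosp=0.905$ we get $\pi\cdot0.905/6\approx0.4739$, $\sin\approx0.4563$, and $r\approx0.913$. For $\rhosp=0.820$ we get $\pi\cdot0.820/6\approx0.4294$, $\sin\approx0.4163$, and $r\approx0.833$. The correction is small, because $2\sin(\pi x/6)$ is close to the identity on $[0,1]$.

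The main obstacle is statistical rather than algebraic. The identity holds for the population Spearman coefficient, whereas the paper measures a per-prompt sample coefficient on only $N{=}8$ seeds. That sample coefficient is a biased estimator: its expectation is $\tfrac{6}{\pi(n+1)}\big[\arcsin r+(n-2)\arcsin(r/2)\big]$. It is also summarized across prompts that may each have their own $r$. For the median, I would argue that the monotone inverse map commutes with taking medians. If per-prompt $\rhosp$ is treated as the population value, the median of the implied $r$ is then exactly $2\sin(\pi\cdot0.905/6)$. For the mean, no such commutation holds, so I would present $0.833$ explicitly as a plug-in calibration. By concavity of $\sin$ on this range, Jensen's inequality indicates the direction of the discrepancy. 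The corollary should therefore be read as an application of the classical population formula to the summary statistics, not as an exact finite-sample identity.
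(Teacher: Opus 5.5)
Your route matches the paper's: the corollary just inverts Kruskal's population identity $\rhosp=\tfrac{6}{\pi}\arcsin(r/2)$ as $r=2\sin(\pi\rhosp/6)$ and evaluates it at $0.905$ and $0.820$, giving $0.9127$ and $0.8326$ as you found. Your finite-sample caveat, with the same expectation formula, is the attenuation issue the paper itself raises later in Remark~\ref{rem:attenuation}.

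One intermediate line is wrong as written. $\tfrac14+\tfrac{1}{2\pi}\arcsin(r/2)$ is the orthant probability $\Pr[U>0,V>0]$ for $U=X_1-X_2$, $V=Y_1-Y_3$. The correct identities are $\rhosp=3\big(4\Pr[U>0,V>0]-1\big)=3\big(2\Pr[UV>0]-1\big)$, with $\Pr[UV>0]=\tfrac12+\tfrac1\pi\arcsin(r/2)$. Substituting your two displayed expressions literally gives $\tfrac{6}{\pi}\arcsin(r/2)-3$ instead.
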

% VERIFIED: r=0.912 -> simulated Spearman 0.9046 (5e5 pairs); target 0.905.

On the extended $16$-seed grid, Proposition~\ref{prop:capture} with
median-calibrated $r$ predicts capture $\approx 0.91$, flat in $N$. Measured
capture is $91.1\%$, $90.9\%$, $92.2\%$,
and $95.7\%$ at $N = 2, 4, 8, 16$. The $N{=}4 \to 16$ gain is
$+4.8$ points. A $32$-seed extension remains at $95.2\%$
(Figure~\ref{fig:width-scaling}).
Section~\ref{app:theory-width} separates finite-sample level bias from the
Gaussian-marginal shape error in Assumption~\ref{ass:copula}(ii), and
Table~\ref{tab:widthpred} corrects the prediction.
Proposition~\ref{prop:spread} explains the aggregate level.

\subsection{Top-1 agreement and the zero-regret median}

Top-1 agreement determines the probability of zero regret:

\begin{proposition}[Top-1 agreement]
\label{prop:top1}
Under Assumptions~\ref{ass:iid}--\ref{ass:copula}, with
$\phi_2(\cdot, \cdot\,; r)$ and $\Phi_2(\cdot, \cdot\,; r)$ the standard
bivariate normal density and CDF,
\begin{equation}
p_1(N, r) \;=\; \Pr\big[\argmax_i Z_i = \argmax_i \hat Z_i\big]
 \;=\; N \int_{\R^2} \phi_2(z, \hat z; r)\,
       \Phi_2(z, \hat z; r)^{N-1}\, dz\, d\hat z ,
\label{eq:top1}
\end{equation}
which is increasing in $r$ and decreasing in $N$. Since ties have measure
zero, $\Pr[\text{regret} = 0] = p_1(N, r)$, so the \emph{median} regret is
$0$ if and only if $p_1(N, r) \ge \tfrac12$.
\end{proposition}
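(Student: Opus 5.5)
The formula in Eq.~\eqref{eq:top1} follows from a direct decomposition over the index of the common winner; the two monotonicity claims need separate arguments; the median statement is then immediate. Under Assumption~\ref{ass:copula}, each $(Z_i,\hat Z_i)$ has a continuous joint law, so ties in either coordinate occur with probability zero. Both $\argmax_i Z_i$ and $\argmax_i \hat Z_i$ are therefore a.s.\ unique. The agreement event is then the a.s.\ disjoint union over $i$ of
\begin{equation*}
A_i=\{Z_i>Z_j,\ \hat Z_i>\hat Z_j\ \ \forall j\neq i\}.
\end{equation*}
By Assumption~\ref{ass:iid} the $A_i$ have equal probability, which gives the factor $N$. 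Conditioning on $(Z_1,\hat Z_1)=(z,\hat z)$ and using independence of the other $N-1$ pairs, each $j\ge 2$ lies in the lower-left quadrant of $(z,\hat z)$ with probability $\Phi_2(z,\hat z;r)$. These events are independent across $j$, so $\Pr[A_1\mid z,\hat z]=\Phi_2(z,\hat z;r)^{N-1}$. Integrating against $\phi_2$ yields Eq.~\eqref{eq:top1}.

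For the zero-regret claim, regret $\max_i Z_i - Z_{\hat\imath}$ is nonnegative. It is zero iff $Z_{\hat\imath}$ attains the maximum, which up to a null tie set means $\hat\imath=\argmax_i Z_i$. Hence $\Pr[\text{regret}=0]=p_1$. Since regret is a nonnegative random variable, $0$ is a median iff $\Pr[\text{regret}\le 0]\ge\tfrac12$, that is, iff $p_1\ge\tfrac12$. Rescaling by $\sigma$ does not change this.

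For monotonicity in $r$, write $p_1=N\,\E[U^{N-1}]$ with $U=\Phi_2(Z,\hat Z;r)$ and differentiate in $r$. Two Gaussian facts drive the computation. The first is Plackett's identity $\partial_r\Phi_2(z,\hat z;r)=\phi_2(z,\hat z;r)$. The second is that $\partial_r\phi_2=\partial_z\partial_{\hat z}\phi_2$, the bivariate heat-equation identity. I would move the $r$-derivative of the density onto the integrand by integrating by parts twice. The result should be a manifestly nonnegative expression involving $\partial_z\Phi_2$, $\partial_{\hat z}\Phi_2$ and $\phi_2$, all of which are positive, multiplied by positive powers of $\Phi_2$. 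As a sanity check at the endpoints, $r=0$ gives $p_1=N\cdot N^{-2}=1/N$, since $\Phi_2$ factorizes, and $r\to1$ gives $p_1\to1$.

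For monotonicity in $N$, I would first try a coupling argument. Add candidate $N{+}1$ and condition on agreement among all $N{+}1$ candidates. With probability $N/(N{+}1)$ the common winner lies among the first $N$, and then the first $N$ also agree. This only gives $p_1(N)\ge \tfrac{N}{N+1}p_1(N{+}1)$. To close the gap I would compare the event that the newcomer dominates all others with the event that the first $N$ agree while the newcomer is not dominated by their winner. If that comparison fails, I would fall back on the analytic form $p_1(N{+}1)-p_1(N)=\E[U^{N-1}((N{+}1)U-N)]$. There I would use that $U$ is stochastically smaller than the uniform law it would have at $r=1$, together with a Chebyshev-type rearrangement inequality.

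I expect the monotonicity claims, especially the one in $N$, to be the main obstacle. The formula itself and the median equivalence are routine.
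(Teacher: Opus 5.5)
Your derivation of Eq.~\eqref{eq:top1} (the factor $N$ from exchangeability, then $\Phi_2^{N-1}$ by conditioning on the winner's pair) and your zero-regret/median argument coincide with the paper's. For monotonicity in $r$ you take a more concrete route than the paper, which only appeals to a Slepian-type comparison plus a numerical check. Your plan does close. After the double integration by parts, the boundary terms vanish because $\phi_2$ and its derivatives decay and $\Phi_2\le 1$, and one gets
\[
\partial_r p_1(N,r)=N(N-1)\Big[\,2\int_{\R^2}\phi_2^2\,\Phi_2^{N-2}+(N-2)\int_{\R^2}\phi_2\,\Phi_2^{N-3}\,\partial_z\Phi_2\,\partial_{\hat z}\Phi_2\Big]>0 .
\]
At $N=2$ this equals $4\int\phi_2^2=1/(\pi\sqrt{1-r^2})$, the derivative of $\tfrac12+\tfrac1\pi\arcsin r$.

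The genuine gap is monotonicity in $N$, which your proposal leaves open. Your coupling yields only $p_1(N)\ge\tfrac{N}{N+1}p_1(N+1)$, as you note. The fallback cannot succeed as described. The fact that $U=\Phi_2(Z,\hat Z;r)\le\Phi(Z)$ is stochastically below the uniform holds for \emph{every} copula. Meanwhile $g(u)=u^{N-1}((N+1)u-N)$ has mean zero under the uniform, decreases on $[0,\tfrac{N-1}{N+1}]$ and increases afterwards. So dominance alone cannot sign $\E[g(U)]$, and a Chebyshev step would need shape information about the law of $U$ that you have not identified.

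Indeed, the claim is false for some copulas. Suppose that with probability $\tfrac12$ the pair is independent uniform on $[0,\tfrac12]^2$, and otherwise $U=V$ is uniform on $[\tfrac12,1]$. Then $p_1(N)=1-2^{-N}(1-1/N)$, so $p_1(2)=\tfrac78<\tfrac{11}{12}=p_1(3)$. Any proof must therefore use Gaussian structure.

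One sufficient route: let $k_r$ be the density of $U$. Then $p_1(N)=\int_0^1 Nu^{N-1}k_r(u)\,du=\E[k_r(M_N)]$, where $M_N$ is the maximum of $N$ independent uniforms and is stochastically increasing in $N$. So it suffices to show that $k_r$ is nonincreasing on $(0,1)$. This holds at $r=0$, where $k_0(u)=-\ln u$, but it would itself need a proof for $0<r<1$.

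The paper does not supply this step either. It calls monotonicity in $N$ immediate, but the pointwise bound $\Phi_2^{N}\le\Phi_2^{N-1}$ competes with the prefactor growing from $N$ to $N+1$. At $r=1$ the two effects cancel exactly and $p_1\equiv 1$. So the difficulty you flagged is real, and your proposal is incomplete at exactly that point.
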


\begin{proof}
By exchangeability, $p_1 = N \Pr[Z_1 = \max_i Z_i,\,
\hat Z_1 = \max_i \hat Z_i]$. Conditioning on $(Z_1, \hat Z_1) = (z, \hat z)$,
the remaining $N - 1$ pairs are i.i.d.\ bivariate normal, so the conditional
probability is $\Phi_2(z, \hat z; r)^{N-1}$; integrate against
$\phi_2$. Monotonicity in $N$ is immediate; monotonicity in $r$ follows from
the usual Gaussian-comparison (Slepian-type) argument, and we also confirm
it numerically over the range used here.
\end{proof}
% VERIFIED (MC, 4e6 draws): p1(2,.833)=.812 p1(4,.833)=.680 p1(8,.833)=.581
%                           p1(2,.913)=.866 p1(4,.913)=.767 p1(8,.913)=.689

At $N = 8$, $p_1(8, r) = 0.58$ for mean-calibrated $r = 0.833$ and $0.69$ for
median-calibrated $r = 0.913$. Measured top-1 agreement at $N = 8$ and
$\tauc = 0.10$ is $64\%$, within this bracket. Both exceed $\tfrac12$, matching median
regret $0$ at $N = 8$ and $64\%$ zero-regret prompts. Yet
$\rhosp \approx 0.9$ still implies $\sim\!30\%$ top-1 disagreement. Top-1
safety depends on disagreement value, not agreement alone.
Proposition~\ref{prop:capture} bounds that value by $\sigma (1 - r) e_N$;
the next result locates it.

\subsection{Heterogeneous-prompt regret}

We aggregate Proposition~\ref{prop:capture} over prompts with different
score spreads $\sigma_p$ and ranking fidelities $r_p$:

\begin{proposition}[Spread-weighted capture]
\label{prop:spread}
Let prompts $p$ have parameters $(\mu_p, \sigma_p, r_p)$ satisfying
Assumptions~\ref{ass:iid}--\ref{ass:copula} prompt-wise. Then the expected
per-prompt regret is $\gR_p = \sigma_p (1 - r_p) e_N$, and the aggregate
capture over prompts is the \emph{spread-weighted} mean correlation
\begin{equation}
\overline{\mathrm{capture}}
 \;=\; \frac{\E_p[\sigma_p r_p]}{\E_p[\sigma_p]}
 \;=\; \bar r + \frac{\mathrm{Cov}_p(\sigma_p, r_p)}{\E_p[\sigma_p]} ,
\label{eq:weightedcapture}
\end{equation}
where $\bar r = \E_p[r_p]$. Hence if spread and ranking fidelity are
positively associated, $\mathrm{Cov}_p(\sigma_p, r_p) > 0$, the aggregate
capture strictly exceeds the mean per-prompt capture, and the regret mass
concentrates on low-spread prompts, where by
$\gR_p \le \sigma_p e_N$ the attainable loss is small in absolute terms.
\end{proposition}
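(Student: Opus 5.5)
Condition on the prompt parameters: we apply Proposition~\ref{prop:capture} prompt by prompt and then average over prompts. For a fixed prompt $p$ with $(\mu_p,\sigma_p,r_p)$, Assumptions~\ref{ass:iid}--\ref{ass:copula} hold by hypothesis. Proposition~\ref{prop:capture} then gives $\E[S_{\hat\imath}\mid p]-\mu_p=\sigma_p r_p e_N$ and $\E[\max_i S_i\mid p]-\mu_p=\sigma_p e_N$. Subtracting yields $\gR_p=\sigma_p(1-r_p)e_N$, which is the first claim.

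For the aggregate, define capture as the ratio of prompt-averaged gains over the random-candidate baseline $\mu_p$. This matches the ratio-of-means convention used in Table~\ref{tab:main}. We take expectations over $p$ of numerator and denominator separately; this is linearity of expectation, via the tower property. The common factor $e_N$ depends only on $N$, not on $p$, so it cancels. This gives $\E_p[\sigma_p r_p]/\E_p[\sigma_p]$, provided $\E_p[\sigma_p]>0$ and the moments are finite. The second form follows from $\E_p[\sigma_p r_p]=\E_p[\sigma_p]\,\bar r+\mathrm{Cov}_p(\sigma_p,r_p)$, divided through by $\E_p[\sigma_p]$.

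The consequences need separate arguments.
\begin{itemize}
\item \emph{Strict inequality.} Since $\E_p[\sigma_p]>0$, a positive covariance makes the covariance term positive. The aggregate capture therefore strictly exceeds $\bar r$. By Eq.~\eqref{eq:capture}, $\bar r$ is exactly the mean per-prompt capture.
\item \emph{Absolute bound.} Since $r_p\ge 0$, we have $1-r_p\le 1$, so $\gR_p\le\sigma_p e_N$. The regret available to a prompt is therefore proportional to its spread.
\item \emph{Concentration on low-spread prompts.} The regret mass is $\sigma_p(1-r_p)e_N$, and under positive association small $\sigma_p$ tends to come with large $1-r_p$. This claim is qualitative, and I would present it that way. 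The rigorous content is the covariance identity above: the loss in aggregate capture is $1-\overline{\mathrm{capture}}=\E_p[\sigma_p(1-r_p)]/\E_p[\sigma_p]$. It is smaller than $1-\bar r$ by exactly $\mathrm{Cov}_p(\sigma_p,r_p)/\E_p[\sigma_p]$, so the unweighted fidelity deficit is partly spent where $\sigma_p$ is small. The concrete, bounded content of ``concentrates'' is then the bound $\gR_p\le\sigma_p e_N$.

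The main obstacle is making ``concentrates'' precise without extra structure. One option is a Chebyshev-type association inequality: if $\sigma_p$ and $1-r_p$ are oppositely ordered, then $\E[\sigma_p(1-r_p)]\le\E[\sigma_p]\E[1-r_p]$. This already is the covariance statement. The fallback is to state the concentration only informally, backed by the inequality just given and the per-prompt bound.
\end{itemize}
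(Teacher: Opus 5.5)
Your proposal is correct and follows the paper's proof. Like the paper, you apply Proposition~\ref{prop:capture} prompt-wise, aggregate numerator and denominator so that $e_N$ cancels, apply the covariance identity, and obtain $\gR_p \le \sigma_p e_N$ from $r_p \ge 0$; your added points (requiring $\E_p[\sigma_p]>0$, identifying $\bar r$ as the mean per-prompt capture via \eqref{eq:capture}, and noting that ``concentrates'' is qualitative) only make explicit what the paper leaves implicit.
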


\begin{proof}
Sum Proposition~\ref{prop:capture} over prompts: aggregate gain of the
oracle is $\E_p[\sigma_p] e_N$ and of \cachedsearch{} is
$\E_p[\sigma_p r_p] e_N$; divide. The covariance identity is
$\E[\sigma r] = \E[\sigma]\E[r] + \mathrm{Cov}(\sigma, r)$. The bound
$\gR_p \le \sigma_p e_N$ is Eq.~\eqref{eq:regret} with $r_p \ge 0$.
\end{proof}

Ranking corruption is self-limiting on low-spread prompts, where mistakes are
cheap. Spread and $\rhosp$ have Spearman association $+0.25$; the $9/50$
prompts with $\rhosp < 0.7$ have low spread. Aggregate capture ($94\%$) exceeds
mean-calibrated capture ($0.83$) and its spread-weighted correction ($0.85$).
% VERIFIED on real gate data (b1_gate_v0, 50 prompts x 8 seeds, tau=0.10):
%   corr(spread, rho) = +0.250 (Spearman)  [matches PROGRESS.md +0.25]
%   measured mean regret 0.039, median 0.000, zero-regret 64% of prompts
%   predicted mean regret sum sigma_p(1-r_p)e_8 / P = 0.107  (conservative)
%   measured aggregate capture 0.940; predicted spread-weighted 0.853,
%   unweighted 0.829; 64% of total regret mass on below-median-spread prompts.

\begin{remark}[The model is conservative]
\label{rem:conservative}
The model correctly predicts median-zero regret and self-limiting corruption,
but it predicts mean regret $0.107$ instead of $0.039$ and capture $0.85$
instead of $0.94$. It overestimates loss because it spreads disagreement
across the score range, whereas observed swaps cluster among near-tied
mid-pack candidates. This error is conservative because measured selection is
more stable than the model predicts.
\end{remark}

\subsection{Capture versus search width}
\label{app:theory-width}

Equation~\eqref{eq:capture} remains flat in $N$ under prompt heterogeneity:
$e_N$ cancels from Eq.~\eqref{eq:weightedcapture} for any joint distribution
of $(\sigma_p, r_p)$. Measurements disagree. Across all $\binom{16}{N}$
subsets of the extended $16$-seed grid, capture is $91.1\%$, $90.9\%$,
$92.2\%$, and $95.7\%$ at
$N = 2, 4, 8, 16$ (Section~\ref{sec:exp-main},
Figure~\ref{fig:width-scaling}). The $N{=}4$ to $16$ increase is $+4.8$
points. Heterogeneity cannot create this
$N$-dependence, so a within-prompt idealization fails. Two effects explain
the level and direction, with an unmodeled remainder.
% VERIFIED (b1_theory_check_n.py): measured capture(N) on 16-seed grid
%   91.1 / 90.9 / 92.2 / 95.7 % at N=2/4/8/16 (exact subset-weight formulas
%   reproduce b1_simulate.py enumeration to 3e-16; 8-seed restriction
%   reproduces tab:main 89.9/89.6/94.7). Paired bootstrap (4000 resamples):
%   cap(16)-cap(4) = +4.8 [+1.1,+8.2]; cap(16)-cap(8) = +3.5 [+0.9,+5.9];
%   cap(8)-cap(2) = +1.1 [-1.4,+3.6] (n.s.). Heterogeneity flat: 0.879 all N.

First, width-independence comes from the Gaussian marginal, not rank noise:

\begin{proposition}[Capture under a general score marginal]
\label{prop:width}
Keep the Gaussian copula of Assumption~\ref{ass:copula}(i) with parameter
$r > 0$, but let the full score have an arbitrary continuous marginal $F$
with mean $\mu_F$ and $\E|S| < \infty$, i.e.\ $S_i = g_F(Z_i)$ with
$g_F = F^{-1} \circ \Phi$ nondecreasing. With
$M_N = \max_{i \le N} \hat Z_i$ and $\eta \sim \gN(0,1)$ independent,
\begin{equation}
\mathrm{capture}(N, r, F)
 \;=\; \frac{\E\big[g_F\big(r M_N + \sqrt{1 - r^2}\,\eta\big)\big] - \mu_F}
            {\E\big[g_F(M_N)\big] - \mu_F} .
\label{eq:capturef}
\end{equation}
If $F$ is Gaussian, capture equals $r$ as in
Proposition~\ref{prop:capture}; if $F$ is bounded above by
$b=\operatorname{ess\,sup}S$, both selected and best-candidate values
approach $b$, so
\begin{equation}
\mathrm{capture}(N, r, F) \;\longrightarrow\; 1
\qquad (N \to \infty).
\end{equation}
The bounded-score case gets the measured direction right but underestimates
the rise, which makes its width prediction conservative.
\end{proposition}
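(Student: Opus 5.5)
The plan reuses the decomposition from the proof of Proposition~\ref{prop:capture}, but applies it before the nonlinear map $g_F$ rather than after. Under the Gaussian copula, write $Z_i = r\hat Z_i + \sqrt{1-r^2}\,\eta_i$ with $\eta_i$ i.i.d.\ standard normal and independent of $(\hat Z_j)_{j\le N}$. Since $\hat\imath=\argmax_i \hat Z_i$ is a function of $(\hat Z_j)$ alone, we have $\hat Z_{\hat\imath}=M_N$. Moreover, $\eta_{\hat\imath}$ is independent of $(\hat Z_j)$ and standard normal: conditioning on $(\hat Z_j)$ fixes $\hat\imath$, and $\eta_{\hat\imath}$ then has law $\gN(0,1)$ regardless of the index. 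Hence $S_{\hat\imath}=g_F(Z_{\hat\imath})$ has the law of $g_F(rM_N+\sqrt{1-r^2}\,\eta)$, which gives the numerator.

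For the denominator, $g_F$ is nondecreasing, so $\max_i S_i = g_F(\max_i Z_i)$. Each $Z_i$ is marginally standard normal and the $Z_i$ are i.i.d., so $\max_i Z_i \overset{d}{=} M_N$. A random candidate has value $\E[g_F(Z)]=\mu_F$. All expectations are finite because $\E|g_F(\cdot)|\le N\,\E|S|$ for the max, and the selected value is a single marginal draw. The denominator is positive whenever $F$ is nondegenerate, which I will assume. This establishes Eq.~\eqref{eq:capturef}.

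In the Gaussian case, $g_F(z)=\mu_F+\sigma z$ is affine. The numerator is then $\sigma r e_N$, using $\E\eta=0$, and the denominator is $\sigma e_N$, which recovers $r$.

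For the bounded case, let $b=\operatorname{ess\,sup}S<\infty$, so $g_F(z)\uparrow b$ as $z\to\infty$ because $g_F(z)=F^{-1}(\Phi(z))$ and $\Phi(z)\to1$. Since $M_N\to\infty$ almost surely (monotonically in $N$ under a coupling), $g_F(M_N)\to b$. Likewise $rM_N+\sqrt{1-r^2}\,\eta\to\infty$ almost surely, using $r>0$, so that term also tends to $b$. To pass to expectations, I will use that $g_F\le b$ from above and that both arguments are eventually bounded below. More concretely, I will dominate $|g_F(x)|$ by $|b|+|g_F(\sqrt{1-r^2}\,\eta - c)|$ on the event $rM_N\ge -c$, or simply apply Fatou/monotone convergence to $b-g_F(\cdot)\ge0$ after coupling, since $M_N$ increases in $N$. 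Then numerator and denominator both converge to $b-\mu_F>0$, and the ratio tends to $1$.

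The main technical obstacle is exactly this integrability step for the numerator, since $\eta$ can be very negative. I would handle it by writing $b-g_F(rM_N+\sqrt{1-r^2}\,\eta)\le b-g_F(rM_1+\sqrt{1-r^2}\,\eta)$, which holds by monotonicity once $M_N\ge M_1$ under the nested coupling. The right-hand side is integrable because $rM_1+\sqrt{1-r^2}\,\eta$ is standard normal, so its expectation is $b-\mu_F$. Dominated convergence then applies.

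The closing remark, that the bounded case predicts the right direction but underestimates the rise, is an empirical comparison rather than part of the proof. I will only note that it is checked numerically against the measured capture values.
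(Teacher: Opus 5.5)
Your proposal is correct and follows the paper's proof essentially step for step. It uses the same decomposition $Z_{\hat\imath}=rM_N+\sqrt{1-r^2}\,\eta_{\hat\imath}$, the identity $\max_i S_i=g_F(\max_i Z_i)$, the nested coupling that makes $M_N\uparrow\infty$, and a monotone/dominated convergence argument on $b-g_F(\cdot)\ge0$; the paper takes integrability at $N=2$, while your $N=1$ dominator, whose argument is exactly standard normal with expectation $\mu_F$, is slightly cleaner. One wording fix: $S_{\hat\imath}$ is not distributed as a single marginal draw, since selection biases it upward, but it is one of the $N$ draws, so $|S_{\hat\imath}|\le\sum_i|S_i|$ gives the finiteness you need.
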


\begin{proof}
As in Proposition~\ref{prop:capture},
$Z_{\hat\imath} \overset{d}{=} r M_N + \sqrt{1 - r^2}\,\eta$ and
$\max_i S_i = g_F(\max_i Z_i)$ with $\max_i Z_i \overset{d}{=} M_N$; a
random candidate has mean $\mu_F$. This gives Eq.~\eqref{eq:capturef}, and
the Gaussian case is immediate. For the bounded case, couple widths by taking
maxima over the first $N$ terms of one i.i.d.\ sequence, so
$M_N \uparrow \infty$ a.s.\ and both
arguments in Eq.~\eqref{eq:capturef} increase a.s. Then
$b - g_F(\cdot) \ge 0$ decreases pointwise to $0$ (as $g_F(z) \to b$ when
$z \to \infty$), is integrable at $N = 2$ (since
$\E|S_{\hat\imath}| \le \E\sum_i |S_i| < \infty$), and monotone
convergence gives both expectations $\uparrow b$; the ratio tends to
$(b - \mu_F)/(b - \mu_F) = 1$.
\end{proof}

Verifier scores are bounded in practice (rewards lie in $[-2.3, 2.3]$;
Section~\ref{sec:exp-setup}), so capture must rise with width. Flat
extrapolation of Eq.~\eqref{eq:capture} is conservative. Yet plugging the
observed $16$-score vectors into Eq.~\eqref{eq:capturef} predicts less than
one point of growth from $N{=}2$ to $16$.

Second, the calibration:

\begin{remark}[Finite-sample calibration attenuation]
\label{rem:attenuation}
Finite-sample Spearman correlation understates the latent relationship:
at $n=16$, population $\rhosp=0.905$ has expected sample value $0.875$
\citep{kruskal1958}. Correcting this attenuation raises median calibrated
$r$ from $0.900$ to $0.927$, raises the flat capture prediction from $0.879$
to $0.903$, and reduces regret overprediction from $2.7\times$ to
$2.0\times$. The correction improves the level but leaves the width trend
flat, so the remaining error still overstates risk.
\end{remark}
% VERIFIED (b1_theory_check_n.py Section 7,9): E[r_S] identity MC 0.8744 vs formula
% 0.8749 at n=16, r=0.913; median r 0.900->0.927; flat 0.879->0.903;
% 8-seed pool: predicted regret 0.107->0.077 vs measured 0.039 (2.7x->2.0x).

\newcommand{\WidthPredCaption}{\textbf{The parameter-free rank-noise model matches capture
through $N=8$ but is conservative at $N=16$.} Predicted and measured
gain capture on the $50$-prompt, $16$-seed grid at $\tauc=0.10$.}
\newcommand{\WidthPredBody}{%
\begin{tabular}{ccccc}
\toprule
$N$ & measured & predicted (model sd) & meas.\,$-$\,pred. & model quantile \\
\midrule
$2$  & $91.1\%$ & $90.3\%$ ($1.0$) & $+0.8$ & $0.81$ \\
$4$  & $90.9\%$ & $90.4\%$ ($1.2$) & $+0.5$ & $0.65$ \\
$8$  & $92.2\%$ & $90.0\%$ ($1.9$) & $+2.2$ & $0.87$ \\
$16$ & $95.7\%$ & $90.8\%$ ($3.3$) & $+4.9$ & $0.94$ \\
\bottomrule
\end{tabular}}

Table~\ref{tab:widthpred} applies both corrections without free parameters.
It preserves each score vector, adds Gaussian rank noise at corrected $r_p$,
and uses the ratio-of-mean-gains estimator over every seed subset.
The ``model sd'' column reports estimator sampling noise from one modeled
noise realization per prompt. Predictions match the level through
$N = 8$ within sampling noise. At $N = 16$, the model is
conservative by $4.9$ points: $90.8\%$ predicted versus $95.7\%$ measured,
at model quantile $0.94$. It captures the direction but not the full rise.
% VERIFIED (b1_theory_check_n.py Section 7-8): M2c plug-in 90.3/90.4/90.0/90.8 at
% N=2/4/8/16, estimator sd 1.0/1.2/1.9/3.3, measured at quantiles
% .81/.65/.87/.94. Raw-calibration plug-in for reference: 87.8/88.0/87.8/88.7.

% The report edition used a wraptable here; on a page that ends with a
% proposition it left roughly half a page blank, so both editions now use a
% normal top float (user 2026-07-25).
\begin{table}[!htbp]
\caption{\WidthPredCaption}
\label{tab:widthpred}
\centering
\small
\WidthPredBody
\end{table}

Measured rank errors protect the top better than exchangeable noise. At
$N=16$, the cached winner is in the true top two for $92\%$ of prompts,
versus $86\%$ under the corrected Gaussian copula, so the model predicts
$90.8\%$ capture instead of the measured $95.7\%$. We therefore keep the
headline empirical and use the model only as a conservative envelope.
% VERIFIED (b1_theory_check_n.py Section 5-6,8): pool top-1/top-2 measured .70/.92
% vs M2c .72/.86; additive-noise top-1 .50, capture 91.6/90.9/90.3/91.0;
% t-copula nu in {3..20}: best top-2 .89, capture(N=16) <= 91.9 (nu=3,
% corrected). Measured regret .029/.051/.057/.041 vs corrected Gaussian
% prediction .030/.054/.075/.093 at N=2/4/8/16.

\subsection{Iso-cost comparison and break-even}

We now combine the value and cost models from Section~\ref{sec:cost}:

\begin{proposition}[Break-even and iso-cost widening]
\label{prop:isocost}
Let $\gamma = C_c / C_f$. (i) Recommit costs less than full-compute
best-of-$N$ iff $N > 1/(1 - \gamma)$. (ii) At a fixed budget $B \ge C_f$,
full-compute search affords $N_f = B / C_f$ candidates while \cachedsearch{}
(recommit) affords $N_c = (B - C_f)/C_c = (N_f - 1)/\gamma$; under
Assumptions~\ref{ass:iid}--\ref{ass:copula}, \cachedsearch{} attains higher
expected value at equal cost iff
\begin{equation}
r\, e_{N_c} \;>\; e_{N_f} .
\label{eq:isocost}
\end{equation}
\end{proposition}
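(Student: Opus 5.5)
Part (i) needs only the cost model of Eq.~\eqref{eq:costs}. I will compare $\mathrm{cost}(\textsc{commit}) = N C_c + C_f$ with $\mathrm{cost}(\text{best-of-}N) = N C_f$. Dividing by $C_f > 0$ reduces the inequality $N C_c + C_f < N C_f$ to $N\gamma + 1 < N$, i.e.\ $N(1-\gamma) > 1$. Since $C_c < C_f$ we have $1-\gamma > 0$, so this is equivalent to $N > 1/(1-\gamma)$. This reproduces Eq.~\eqref{eq:breakeven}. No probabilistic assumption enters this step.

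For part (ii) I first verify the counts. Full search spends the whole budget $B$ on candidates, giving $N_f = B/C_f$. Recommit reserves $C_f$ for the delivery rollout and spends the rest on cached candidates, giving $N_c = (B - C_f)/C_c$. Substituting $B = N_f C_f$ and $C_c = \gamma C_f$ yields $N_c = (N_f - 1)/\gamma$. The hypothesis $B \ge C_f$ guarantees $N_c \ge 0$. I will treat $N_f$ and $N_c$ as integers, flooring if needed. That is the only place where rounding matters, and the stated inequality should be read with those integer widths.

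Next I apply Proposition~\ref{prop:capture} at the two widths for the same prompt, so $\mu$, $\sigma$ and $r$ are shared. Full-compute best-of-$N_f$ delivers $\max_i S_i$, with expectation $\mu + \sigma e_{N_f}$. \cachedsearch{} with recommit selects $\hat\imath$ by the cached scores among $N_c$ candidates. By seed-determinism it delivers $S_{\hat\imath}$, with expectation $\mu + \sigma r e_{N_c}$. Subtracting, the value difference is $\sigma\,(r e_{N_c} - e_{N_f})$. I assume the nondegenerate case $\sigma > 0$, since with $\sigma = 0$ all strategies tie. Then \cachedsearch{} is strictly better iff $r e_{N_c} > e_{N_f}$, which is Eq.~\eqref{eq:isocost}.

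The main point needing care is that Proposition~\ref{prop:capture} is applied at a width $N_c$ that differs from the one used for full search. This is legitimate because that proposition holds for every $N$ under exchangeability: the pool of $N_c$ i.i.d.\ pairs $(S_i, \hat S_i)$ satisfies Assumption~\ref{ass:iid} just as the $N_f$ pool does. Two edge cases should be flagged. If $N_c = 0$, i.e.\ $B = C_f$, I set $e_0 := 0$ by convention, and the inequality correctly fails against $e_1 = 0$. Otherwise I will note that $e_1 = 0$ makes all comparisons at $N_f = 1$ trivially favor \cachedsearch{} whenever $N_c \ge 2$ and $r > 0$.
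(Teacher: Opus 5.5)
Your proposal is correct and matches the paper's proof: part (i) is the algebra behind Eq.~\ref{eq:breakeven}, and part (ii) substitutes the two widths $N_f$ and $N_c$ into Proposition~\ref{prop:capture} to compare the expected gains $\sigma e_{N_f}$ and $\sigma r e_{N_c}$. Your added care about $\sigma>0$, flooring to integer widths, and the $N_c=0$ edge case goes beyond the paper, whose numerical examples only round $N_c$ approximately, but it leaves the argument unchanged.
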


\begin{proof}
(i) is Eq.~\eqref{eq:breakeven}. (ii) substitutes each width into
Proposition~\ref{prop:capture}: expected gains over no-search are
$\sigma r e_{N_c}$ and $\sigma e_{N_f}$.
\end{proof}

For measured $\gamma = 0.508$ and median-calibrated $r = 0.913$, $N_f = 2$
gives $N_c \approx 2$ and $r e_2 = 0.52 < e_2 = 0.56$. Cached search loses
because recommit consumes the savings. At $N_f = 3$, $N_c \approx 4$ and
$r e_4 = 0.94 > e_3 = 0.85$; at $N_f = 4$, $N_c \approx 6$ and
$r e_6 = 1.16 > e_4 = 1.03$. The margin grows with $B$ because $e_N$ grows
as $\sim\!\sqrt{2 \ln N}$ while $r$ stays constant. These two constants
$(\gamma, \rhosp)$ reproduce the strategy simulation in
Section~\ref{sec:experiments}: a wash at $N = 2$ and an iso-cost advantage
from $N \approx 4$ onward.
% VERIFIED: gamma=.508 -> N*=2.03; iso-cost cells: N_f=2: r*e_2=.515<.564;
% N_f=3: r*e_4=.940>.846; N_f=4: r*e_6=1.157>1.029. Matches measured
% simulation (PLAN.md): N=2 wash; N=8 capture 94.7% at 63% cost.

\paragraph{Scope and limitations.}
The parameter $r$ belongs to the (model, cache threshold $\tauc$, verifier)
triple and requires per-prompt measurement. The $\tauc$ sweep moves median
$\rhosp$ from $0.905$ ($\tauc \le 0.10$) to $0.857$ ($\tauc = 0.20$).
Assumption~\ref{ass:copula} is conservative in the direction documented by
Remark~\ref{rem:conservative}. Assumption~\ref{ass:verifier} excludes reward
hacking, which caching neither causes nor fixes. Keep-draft falls outside
these guarantees because it delivers a cached sample
(Section~\ref{sec:algorithm}). Choosing $\tauc$ online from the
spread in Proposition~\ref{prop:spread} does not clear the fixed-$\tauc$
frontier (Section~\ref{sec:abl-adaptive}); its noisy two-sample probe costs
more than the flat capture-versus-speedup curve returns.

% ======================================================================
\section{Theory and measurement}
\label{app:appendix}  % section agents: KEEP this label - other sections \ref it

Appendix~\ref{app:theory} predicts outcomes from two measured constants,
$\rhosp = 0.905$ and $\gamma = C_c/C_f = 0.508$, without fitting. The former
maps to $r = 0.913$ through Corollary~\ref{cor:spearman}.
Table~\ref{tab:theory-meas} compares each prediction with the $50 \times 8$
grid at $\tauc = 0.10$ (Sections~\ref{sec:experiments}
and~\ref{sec:ablations}).

\begin{table}[h]
\caption{\textbf{The rank-noise model predicts the main grid's qualitative
behavior but is conservative on regret.} Predictions vs.\ measurements
for $50$ prompts, $8$ seeds, and $\tauc=0.10$; source results are in
Tables~\ref{tab:main} and~\ref{tab:tau}.}
\label{tab:theory-meas}
\centering
\small
\begin{tabular}{@{}p{0.25\linewidth}p{0.20\linewidth}p{0.20\linewidth}p{0.24\linewidth}@{}}
\toprule
quantity & predicted & measured & source \\
\midrule
gain capture at $N{=}8$ & $0.83$--$0.91$ ($\approx r$) & $94.7\%$ & Prop.~\ref{prop:capture}; Tab.~\ref{tab:main} \\
capture trend in $N$ & flat ($=r$, $\forall N$) & $89.9/89.6/94.7\%$ at $N{=}2/4/8$ & Prop.~\ref{prop:capture}; Tab.~\ref{tab:main} \\
top-1 agreement $p_1(8, r)$ & $0.58$--$0.69$ & $64\%$ & Prop.~\ref{prop:top1}; Section~\ref{sec:exp-ranking} \\
median regret & $0$ (since $p_1 > \tfrac12$) & $0$ ($64\%$ zero-regret) & Prop.~\ref{prop:top1}; Section~\ref{sec:exp-regret} \\
mean regret (reward units) & $0.107$ & $0.040$ & Eq.~\eqref{eq:regret}; Section~\ref{sec:exp-regret} \\
aggregate capture (per-prompt plug-in) & $0.829$ / $0.853$ (spread-wtd.) & $0.940$ & Prop.~\ref{prop:spread} \\
sign of corr(spread, $\rhosp$) & $> 0$ (self-limiting) & $+0.25$ & Prop.~\ref{prop:spread}; Section~\ref{sec:abl-corruption} \\
regret mass on below-median-spread prompts & majority & $64\%$ & Prop.~\ref{prop:spread} \\
break-even width $N^\ast$ & $2.03$ & $N{=}2$ is a wash & Prop.~\ref{prop:isocost}; Tab.~\ref{tab:main} \\
iso-cost winner at $N_f \ge 3$ & \cachedsearch{} ($r e_{N_c} > e_{N_f}$) & $+38\%$ gain at ${\approx}300$\,s & Prop.~\ref{prop:isocost}; Section~\ref{sec:exp-main} \\
\bottomrule
\end{tabular}
\end{table}

Ambiguous predictions show the bracket from mean-calibrated
$r=0.833$ and median-calibrated $r=0.913$ copula parameters.
\paragraph{Model agreement.} The structural predictions hold.
Capture is nearly flat in $N$, top-1 agreement falls inside the calibration
bracket, and low-spread prompts absorb most corruption. The cost model also
recovers the $N{=}2$ wash and growing iso-cost advantage. Median regret is
zero because $p_1(8, r) > \tfrac12$.

\paragraph{Regret over-prediction.}
Equation~\eqref{eq:regret}, evaluated with measured $(\sigma_p, \rhosp_p)$,
predicts $0.107$ mean regret against $0.040$ measured. It therefore
over-predicts regret by $2.7\times$ and under-predicts capture ($0.85$ vs.\
$0.94$). A Gaussian copula spreads rank errors uniformly across the score
range (Remark~\ref{rem:conservative}), while measured swaps cluster among
near-tied mid-pack candidates (Figure~\ref{fig:corruption}). The argmax is
therefore more stable than $\rhosp$ suggests. Appendix~\ref{app:theory}
provides a conservative deployment guide: realized regret is lower at all
three $\tauc$ values. Headline results should still come from measurement.

% ======================================================================
% SHARED: analyses, ablation subsections, and figures deferred from both main
% flows. Blocks still present in the report main flow retain narrow
% conference-only guards.
\section{Additional Experiments}
\label{app:more-abl}

This section collects analyses and figures summarized in
Section~\ref{sec:experiments}--Section~\ref{sec:ablations}.

\subsection{Main-result figures}
\label{app:extra-figs}

Figure~\ref{fig:pareto} expands the strategy comparison in
Section~\ref{sec:exp-main}. The remaining exhibits provide the supporting
views referenced from the main analysis.

\ifreport\else
% SHARED FLOAT (single source, conference edition only as a standalone float).
% Report edition: the same panel appears via \PanelRankScatter inside the
% two-panel figure of sections/experiments.tex (Section exp-ranking). Conference
% edition: this standalone float is hosted in the appendix (app:extra-figs).
% The panel body (graphics+caption+label fig:rank-scatter) is defined ONCE as
% \PanelRankScatter in sections/experiments.tex.
\begin{figure}[t]
  \centering
  \begin{minipage}[t]{0.48\linewidth}
    \PanelRankScatter
  \end{minipage}
\end{figure}

\fi

\ifreport
% ======================================================================
% fig:pareto (f3) and fig:adaptive (f7), paired in the appendix.
% ======================================================================
\begin{figure}[!tb]
  \centering
  \begin{minipage}[t]{0.485\linewidth}
    \vspace{0pt}
    \centering
    \includegraphics[width=\linewidth]{figs/f3_pareto.pdf}
    \captionof{figure}{\textbf{\cachedsearch{} dominates best-of-$N$ beyond
    the $N^\ast\approx2$ break-even.} Reward gain vs.\ wall-clock; up-left is
    better. Curves compare full, keep, and commit for $N=2,4,8$ on
    $50$ prompts; the gold diamond adds pruning.}
    \label{fig:pareto}
  \end{minipage}\hfill
  \begin{minipage}[t]{0.485\linewidth}
    \vspace{0pt}
    \centering
    \includegraphics[width=\linewidth]{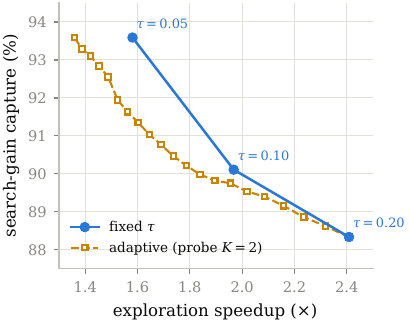}
    \captionof{figure}{\textbf{Adaptive per-prompt $\tauc$ never beats the
    fixed frontier.} Gain capture vs.\ exploration speedup at $N=8$ and
    $n=50$. Filled blue points are fixed thresholds; open gold points sweep
    the $K=2$ probe policy's decision threshold.}
    \label{fig:adaptive}
  \end{minipage}
\end{figure}

\else
\begin{figure}[!tb]
  \centering
  \includegraphics[width=0.55\linewidth]{figs/f7_adaptive_tau.pdf}
  \caption{\textbf{Adaptive per-prompt $\tauc$ never beats the fixed
  frontier.} Gain capture vs.\ exploration speedup at $N=8$ and $n=50$.}
  \label{fig:adaptive}
\end{figure}
\fi

\ifreport\else

% RETIRED (design-overhaul Lane B, D1 approved): the method-comparison bar
% figure (f11, fig:methods) is retired from BOTH editions. It drew the same
% four strategies already in Table~\ref{tab:main} and Figure~\ref{fig:pareto};
% its one unique element - the measured end-to-end prune+cached (E6) point - is
% now folded into Figure~\ref{fig:pareto} as a single marker+callout, and the
% stacking numbers are stated in full in Section \ref{sec:abl-stack}
% (sections/x_abl_stack.tex). The f11_method_comparison.pdf generator call is
% commented out in code/paper_figs/make_figs2.py.
%
% This file is intentionally empty so the (now no-op) \input in
% sections/appendix.tex and the removed \ifreport input in
% sections/experiments.tex both resolve without error. Do NOT re-add a float
% here without renumbering; nothing references \label{fig:methods} any more.

\fi

% Delivered-video suite and VBench-2.0 replication are appendix material in
% both editions; their shared main-flow summaries point here.
% SHARED APPENDIX BLOCK (both editions; single source). Both main flows keep
% the two-sentence headline and point here for the full metric suite.
% Measured source: results/multimetric/* -- selection, external scoring, and
% aggregation by code/paper_figs/multimetric.py (VBench official dims via the
% isolated vbench-eval env; RAFT flow + LPIPS in the main env). Numbers JSON:
% results/multimetric/multimetric_numbers.json.
\subsection{Additional metrics}
\label{sec:exp-multimetric}

So far, the search verifier also evaluates each strategy. We instead
re-evaluate the delivered videos under a standard metric suite on a held-out
\VBenchMetric{} subset from the validated population of
Section~\ref{sec:abl-verifier}. The strategies deliver a single,
best-of-$8$, keep, or commit video. We score the original rollouts
without regeneration. The suite includes six \VBenchMetric{}
dimensions\footnote{Computed in
\texttt{custom\_input} mode, which applies every dimension to every video;
the \VBenchMetric{} protocol computes temporal flickering only on static-scene
prompt subsets, so the \VBenchMetric{} rows are strategy-\emph{comparisons}
on a fixed prompt set, not leaderboard-comparable absolutes.},
\VideoScoreMetric{}, \ImageRewardMetric{}, mean RAFT optical-flow magnitude,
and \LPIPSMetric{} against the
same-seed full reference. Costs follow Section~\ref{sec:exp-setup}\reportonly{
(text encoding and VAE decodes are excluded from the FLOPs ratio but included
in wall-clock)}.

\begin{table}[!tb]
\caption{\textbf{Commit matches best-of-$8$ across the standard metric
suite at lower cost.} Four delivery strategies on a held-out
\VBenchMetric{} subset
at $\tauc=0.10$, $N=8$. Rows are strategies and columns are quality,
fidelity, and cost metrics; green cells are statistically consistent with
best-of-$8$.}
\label{tab:multimetric}
\centering
\tabsize
\setlength{\tabcolsep}{3.5pt}
\resizebox{\linewidth}{!}{%
\begin{tabular}{l cc cccccc cc cc}
\toprule
& \multicolumn{2}{c}{Quality} & \multicolumn{6}{c}{\VBenchMetric{} dimensions} & \multicolumn{2}{c}{Fidelity} & \multicolumn{2}{c}{Cost (\% bo8)} \\
\cmidrule(lr){2-3}\cmidrule(lr){4-9}\cmidrule(lr){10-11}\cmidrule(lr){12-13}
method & IR$\uparrow$ & VS$\uparrow$ & subj$\uparrow$ & bg$\uparrow$ & mot$\uparrow$ & flk$\uparrow$ & aes$\uparrow$ & img$\uparrow$ & flow & \LPIPSMetric{}$\downarrow$ & wall$\downarrow$ & FLOPs$\downarrow$ \\
\midrule
single (no search) & $-0.06$ & $2.58$ & $0.970$ & $0.975$ & $\mathbf{0.992}$ & $\mathbf{0.990}$ & $0.554$ & $65.3$ & $1.85$ & $0$ & $\mathbf{12.5}$ & $\mathbf{12.5}$ \\
best-of-$8$ full & $\mathbf{+0.81}$ & $2.60$ & $\mathbf{0.978}$ & $\mathbf{0.980}$ & $0.991$ & $0.987$ & $\mathbf{0.615}$ & $\mathbf{67.7}$ & $1.56$ & $0$ & $100$ & $100$ \\
\midrule
\rowcolor{blue!8}
\cachedsearch{}-keep & $+0.79$ & $\mathbf{2.67}$ & $0.976$ & $0.978$ & $0.991$ & $0.987$ & $0.604$ & $66.7$ & $1.96$ & $0.114$ & $51.4$ & $48.5$ \\
\rowcolor{blue!8}
\cachedsearch{}-commit & $+0.75$ & \cellcolor{green!12}$2.60$ & \cellcolor{green!12}$0.976$ & $0.979$ & \cellcolor{green!12}$0.991$ & \cellcolor{green!12}$0.987$ & \cellcolor{green!12}$0.610$ & \cellcolor{green!12}$67.3$ & $1.97$ & \cellcolor{green!12}$0$ & $63.9$ & $61.0$ \\
\bottomrule
\end{tabular}}
\end{table}

\begin{figure}[!tb]
  \centering
  \includegraphics[width=\linewidth]{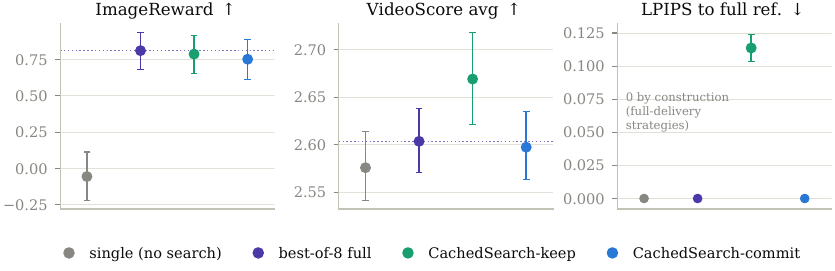}
  \caption{\textbf{Commit matches best-of-$8$ quality at $61\%$ of its
  transformer FLOPs.} Mean \ImageRewardMetric{}, \VideoScoreMetric{}, and
  same-seed \LPIPSMetric{} on the held-out \VBenchMetric{} subset; bars are
  $95\%$ prompt-bootstrap intervals.
  Figure~\ref{fig:multimetric-full} gives all metrics.}
  \label{fig:multimetric}
\end{figure}

\begin{figure}[!tb]
  \centering
  \includegraphics[width=\linewidth]{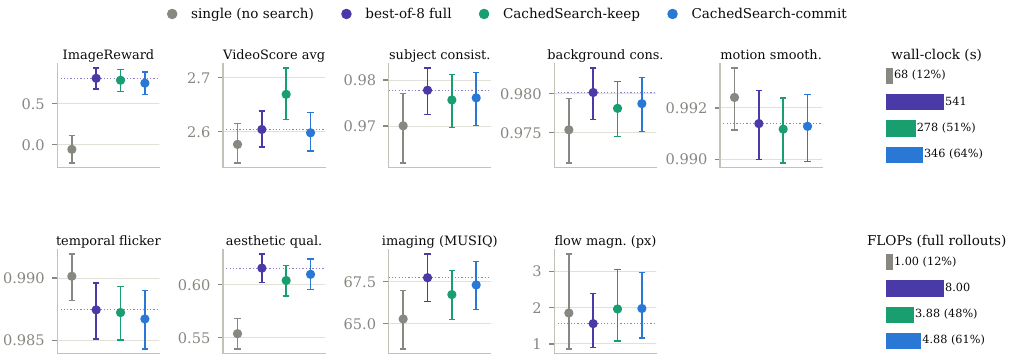}
  \caption{\textbf{Commit tracks best-of-$8$ across the full metric suite.}
  Four strategies on the held-out \VBenchMetric{} subset, with raw metric scales and
  $95\%$ prompt-bootstrap intervals. Dotted lines mark best-of-$8$; the last column
  reports wall-clock and transformer FLOPs.}
  \label{fig:multimetric-full}
\end{figure}

Three results matter. \textbf{(1) Commit is faster and no worse on metrics
it never optimized.} Its paired deltas from best-of-$8$ are statistically
zero across the quality suite\reportonly{: \VideoScoreMetric{} avg
$-0.006\,\ci{-.030,.017}$ and all \VBenchMetric{} dimensions within $0.6\%$ of
scale. Background consistency differs by only $-0.001$ on a
${\sim}0.98$ scale}. \ImageRewardMetric{} concedes
$0.06\ci{.04,.09}$ of the $0.87$ gain, giving $93.2\%$ capture, consistent
with Table~\ref{tab:main}.
\textbf{(2) Reference-free scorers favor keep despite its fidelity cost.}
Keep beats best-of-$8$ on \VideoScoreMetric{} by
$+0.065\ci{.028,.106}$, including
$+0.103$ on dynamics. It beats its full re-render by
$+0.036\ci{.004,.071}$ on the frame verifier, matching the $+0.037$ bias
in Section~\ref{sec:abl-temporal}. Yet keep has \LPIPSMetric{} $0.114$, aesthetic
$-0.012\,\ci{-.019,-.004}$, and imaging $-1.0\,\ci{-1.8,-.2}$. Its
temporal signature is mild at this $\tauc$ (Section~\ref{sec:abl-temporal}):
flicker improves by
$+0.0005\ci{.0002,.0009}$, while the $-0.7\%$ winner-level flow change is
within the noise of the $-3.1\%$ population effect.
\textbf{(3) Efficiency does not depend on the scorer.} The rollout costs
are unchanged under any metric. Commit spends $64\%$ of the wall-clock and
$61\%$ of the transformer FLOPs, while keep roughly halves cost but pays a
small frame-quality price that the reference-free scorers reward.

\ifreport
% SHARED APPENDIX BLOCK (both editions; single source). Section exp-vbench
% keeps one numerical replication sentence and points here.
% Numbers: code/paper_figs/vbench2_analysis.py (grep "[VB2]"); CIs:
% ci_numbers.json group vbench2.
% FINAL (2026-07-08): full coverage -- all 1,013 unique suite prompts at
% complete 8xfull+8xcached coverage (16,208 records). The prompts file has
% 1,013 lines (wc -l undercounts: no trailing newline). Data frozen.
\paragraph{\VBenchTwoMetric{} replication.}
Both suites so far share \VBenchMetric{}'s prompt style. As a structurally
different replication we repeated the full measurement on
\VBenchTwoMetric{}, which probes \emph{intrinsic faithfulness}
through compositional object interactions, physics, commonsense, and camera
control. It is substantially harder for current generators than
\VBenchMetric{}'s per-dimension lists. Every headline quantity is
statistically consistent with the \VBenchMetric{} suite
(Table~\ref{tab:vbench2}): median
$\rhosp = 0.881$ (one $8$-candidate lattice step below $0.905$; the
mean, $0.848\ci{.840,.857}$ vs.\ $0.859\ci{.850,.869}$, differs by
$-0.011$), top-1 $69\%\ci{66,72}$, and
$89.3\%\ci{87.9,90.7}$ capture at a reproduced $1.98\times$. The new
suite also stress-tests the corruption mechanism from the direction
that should hurt: its harder prompts produce significantly
\emph{tighter} candidate spreads (mean $0.49$ vs.\ $0.56$ official,
with a statistically reliable difference), and low spread is exactly where
ranking corrupts (Section~\ref{sec:abl-corruption}); the coupling itself
replicates (corr(spread, $\rhosp$) $= +0.29\ci{.23,.34}$,
$p = 8{\times}10^{-21}$). Yet the suite-level cost is
undetectable (capture moves by $-0.9$ points, statistically consistent with
zero) because the corruption that
the tighter spreads do cause stays concentrated on the prompts where a
wrong pick is cheapest: the self-limiting property is a mechanism, not
a prompt-set accident.

\begin{table}[!tb]
\caption{\textbf{The result holds on a third, harder suite.} Ranking,
regret, capture, and speedup across the gate, the \VBenchMetric{} suite, and
\VBenchTwoMetric{} at $\tauc=0.10$ with $8$ seeds. All
\VBenchTwoMetric{} headline quantities are statistically consistent with the
\VBenchMetric{} suite.}
\label{tab:vbench2}
\centering
\tabsize
\setlength{\tabcolsep}{3.5pt}
\resizebox{\linewidth}{!}{%
\begin{tabular}{lcccccccc}
\toprule
suite & $n$ & med $\rhosp$ $\uparrow$ & mean $\rhosp$ $\uparrow$ & p10 $\uparrow$ & top-1 $\uparrow$ & zero-regret $\uparrow$ & capture $\uparrow$ & speedup $\uparrow$ \\
\midrule
gate grid & $50$ & $0.905\ci{.83,.93}$ & $0.820\ci{.76,.87}$ & $0.61\ci{.36,.71}$ & $64\%\ci{50,76}$ & $64\%\ci{50,76}$ & $90.1\%\ci{82.6,95.9}$ & $1.97\ci{1.96,1.98}\times$ \\
official \VBenchMetric{} & $944$ & $0.905$ & $0.859\ci{.850,.869}$ & $0.69\ci{.64,.71}$ & $72\%\ci{69,75}$ & $72\%\ci{69,75}$ & $90.2\%\ci{88.5,91.8}$ & $1.95\ci{1.95,1.95}\times$ \\
\VBenchTwoMetric{} & $1{,}013$ & $0.881$ & $0.848\ci{.840,.857}$ & $0.69\ci{.64,.69}$ & $69\%\ci{66,72}$ & $69\%\ci{66,72}$ & $89.3\%\ci{87.9,90.7}$ & $1.98\ci{1.98,1.98}\times$ \\
\bottomrule
\end{tabular}}
\end{table}

\else
\paragraph{\VBenchTwoMetric{} replication.}
Both suites so far share \VBenchMetric{}'s prompt style. As a structurally
different replication we repeated the full measurement on
\VBenchTwoMetric{}, which probes \emph{intrinsic faithfulness}
through compositional object interactions, physics, commonsense, and camera
control. It is substantially harder for current generators than
\VBenchMetric{}'s per-dimension lists. Every headline quantity is
statistically consistent with the \VBenchMetric{} suite
(Table~\ref{tab:vbench2}): median
$\rhosp = 0.881$ (one $8$-candidate lattice step below $0.905$; the
mean, $0.848\ci{.840,.857}$ vs.\ $0.859\ci{.850,.869}$, differs by
$-0.011$), top-1 $69\%\ci{66,72}$, and
$89.3\%\ci{87.9,90.7}$ capture at a reproduced $1.98\times$. The new
suite also stress-tests the corruption mechanism from the direction
that should hurt: its harder prompts produce significantly
\emph{tighter} candidate spreads (mean $0.49$ vs.\ $0.56$ official,
with a statistically reliable difference), and low spread is exactly where
ranking corrupts (Section~\ref{sec:abl-corruption}); the coupling itself
replicates (corr(spread, $\rhosp$) $= +0.29\ci{.23,.34}$,
$p = 8{\times}10^{-21}$). Yet the suite-level cost is
undetectable (capture moves by $-0.9$ points, statistically consistent with
zero) because the corruption that
the tighter spreads do cause stays concentrated on the prompts where a
wrong pick is cheapest: the self-limiting property is a mechanism, not
a prompt-set accident.
\fi

\ifreport\else
\subsection{Baseline reproduction details}
\label{app:baseline-details}

Table~\ref{tab:baselines} (the published-method comparison discussed in
Section~\ref{sec:abl-baselines}) and the reproduction details for its caching
baselines.

\subsection{Regret across spread quartiles}
\label{app:corruption-detail}

The full quartile analysis behind the self-limiting-corruption claim of
Section~\ref{sec:abl-corruption}.

\fi

% SHARED BLOCK (both editions; single source). Report edition: inlined in
% sections/ablations.tex (Section abl-adaptive, whole subsection). Conference edition: hosted in the appendix
% (app:more-abl) via sections/appendix.tex.
\subsection{Adaptive per-prompt thresholds}
\label{sec:abl-adaptive}

Spread predicts which prompts tolerate aggressive caching and can be
estimated from two cheap rollouts. We simulate this policy on the measured
$3$-$\tauc$ grid, whose arms share prompts and seeds. No generation is added.
\emph{Policy:} probe $K{=}2$ candidates at $\tauc = 0.20$. If their score
gap exceeds $t$, run the remaining $6$ at $\tauc = 0.20$; otherwise run all
$8$ at $\tauc = 0.05$, treating the probes as sunk cost. High spread signals
robust ranking (Section~\ref{sec:abl-corruption}). Sweeping $t$ traces the
frontier in Figure~\ref{fig:adaptive}. The simulation directly compares the
natural adaptive policy with fixed thresholds under identical prompts,
seeds, and measured rollouts.

The adaptive frontier never exceeds the fixed-$\tauc$ curve. At matched
$1.97\times$ speedup, it captures $89.7\%$ versus $90.1\%$ for
$\tauc = 0.10$. The paired difference is $\Delta = -0.4$ points and is
statistically indistinguishable from zero. Its conservative endpoint reaches
$93.6\%$ capture only at $1.36\times$, versus
$1.58\times$ for fixed $\tauc = 0.05$. The ratio-of-means estimator
(Section~\ref{sec:exp-setup}) and the reversed low-spread rule give the
same result. No tested adaptive variant clears the fixed frontier. The
matched difference is statistically indistinguishable from zero and
pointwise favors the fixed threshold.

The fixed curve leaves little headroom: capture changes only $5.3$ points
across a $1.5\times$ speedup range (Section~\ref{sec:abl-tau}). A two-sample
spread estimate is noisy, and fallback pays pure probe overhead. Adaptivity
needs a free signal, such as the wrapper's internal cache-error indicator
during the first rollout. Such a signal could spend no extra rollouts merely
to decide how to save rollouts. A global $\tauc = 0.10$ remains the stronger
default.

\subsection{Keep-draft versus recommit}
\label{app:temporal-detail}

Detail for Section~\ref{sec:abl-temporal}: the direct temporal metrics on the
$300$ materialized winner pairs, and the \LPIPSMetric{} comparison between the
keep/commit pairs.
% SHARED BLOCK (both editions; single source). Report edition: mid-paragraph
% of Section abl-temporal in sections/ablations.tex (between the frame-level reward
% result and the motion-dampening sentence). Conference edition: hosted in
% the appendix (app:temporal-detail).
The temporal question is answered by scoring all $300$ materialized winner
videos with \VBenchMetric{}-style temporal metrics (DINO subject consistency, CLIP
background consistency, warping-error motion smoothness, temporal
flickering, and mean optical-flow magnitude as a dynamics measure).
Static temporal quality is preserved under keep-draft: motion smoothness
is unchanged ($|\Delta| \leq 0.0003$), flickering is marginally
\emph{better} for keep ($+0.0007$ to $+0.0015$), background consistency is
equal, and subject consistency dips by at most $-0.005$.

% SHARED BLOCK (both editions; single source). Report edition: mid-paragraph
% of Section abl-temporal in sections/ablations.tex (after the motion-dampening
% sentence). Conference edition: hosted in the appendix (app:temporal-detail).
Perceptually, keep and commit are genuinely different videos rather than
near-duplicates: \LPIPSMetric{}(keep, commit) over the winner pairs rises
monotonically with cache aggressiveness (mean $0.122$ / $0.142$ / $0.170$
at $\tauc = 0.05$ / $0.10$ / $0.20$; $n{=}50$ each), consistent with the
motion-dampening account above: the cached trajectory diverges
perceptibly from its full-compute twin even when frame-level and most
temporal scores match.

Fidelity examples and flow maps appear in
Appendix~\ref{app:qualitative}.

% SHARED APPENDIX BLOCK (both editions; single source).
\subsection{Batching control}
\label{sec:abl-batch}

A natural objection is whether batching can replace caching. We benchmarked
candidate batch sizes $\{1,2,4,6,8\}$ at the study configuration on an
NVIDIA GH200 GPU. Batching yields \emph{no} throughput: full-compute
throughput \emph{drops} from
$0.92$ to $0.90$ videos/min going from batch $1$ to $2$, cached rollouts
gain only ${+}8\%$, and batch $8$ is infeasible. At $32$K tokens per
sample, Wan-1.3B \citep{wan2025wan} saturates the GPU's compute at
batch~$1$. Batching also breaks seed
determinism: the batched rollout of a seed is not bit-identical to its
single-sample rollout (max first-frame pixel deviation $0.6$), which would
invalidate the exactness of the commit step. Two consequences for honest
efficiency accounting: per-rollout wall-clock is the right cost unit for
this workload, and best-of-$N$ cannot close its $2\times$ cost gap through
batching. The saved FLOPs of cached exploration are real rather than an
artifact of underutilized hardware.

% SHARED BLOCK (both editions; single source). Report edition: inlined in
% sections/ablations.tex (Section abl-verifier, whole subsection). Conference edition: hosted in the appendix
% (app:more-abl) via sections/appendix.tex.
\subsection{Video-native verifier robustness}
\label{sec:abl-verifier}

Prior ranking and regret results use \ImageRewardMetric{}. We rescore them
with \VideoScoreMetric{} v1.1, a video-native judge. It evaluates
visual quality, temporal consistency,
dynamic degree, text alignment, and factual consistency over $24$ frames.
Unlike \ImageRewardMetric{}, it observes motion and can test whether the findings
survive a video-native verifier.
The data include all $300$ winner-pair videos
(Section~\ref{sec:abl-temporal}) and $23{,}542$ \VBenchMetric{}-suite rollouts
(Section~\ref{sec:exp-vbench}). They yield $n = 467$ prompts with validated,
complete $8{\times}$full${}+{}8{\times}$cached coverage. A score-consistency
check removes ambiguous prompt matches, and the retained subset reproduces
the \VBenchMetric{} suite's median $\rhosp$, top-1 agreement, and capture.

\textbf{Ranking preservation is weaker when \VideoScoreMetric{} barely separates
candidates.} Cached-vs-full rank correlation has median $\rhosp = 0.762$
(p10 $0.31$), $54\%$ top-1 agreement, $65.6\%$ gain capture, and $55\%$
zero regret. These exceed chance but trail \ImageRewardMetric{} on the same videos
($0.905$ / $73\%$ / $91\%$); dimension medians span $0.73$--$0.79$.
Two measurements explain this drop. First, candidate scores are nearly tied.
\VideoScoreMetric{}'s median within-prompt spread is only $0.083$ on
$[1,4]$, versus $0.55$ for \ImageRewardMetric{}. Its rankings thus occupy the
near-tied regime of Section~\ref{sec:abl-corruption}, where rank noise is
large but mistaken picks are cheap. Second, the verifiers agree less with
each other than cached agrees with full: median
$\rhosp(\text{IR}, \text{VS})$ is $+0.24$ within prompts and $+0.13$ over
$7{,}701$ pooled videos, versus $0.905$ / $0.762$ for cached-to-full.
The perturbation from caching is therefore smaller than the standing
disagreement between the judges.
Verifier choice changes the ranking more than cached execution does. This
separates approximation error from ordinary evaluator disagreement.

\textbf{Caching adds at most marginal cross-verifier loss.} Under
full-compute \VideoScoreMetric{}, the cached-exploration IR winner captures $18.6\%$
of attainable gain, versus $21.2\%$ for the full-compute IR winner (mean
VS-regret $0.245$ vs.\ $0.230$; random baseline $0.263$; $n = 467$).
The paired difference is $-2.5$ capture points and is statistically
consistent with zero. Regret increases by $+0.015$ on
\VideoScoreMetric{}'s $[1,4]$
scale. Both effects are small beside the verifier choice itself:
full-compute \ImageRewardMetric{} selection captures only $21.2\%$ of
\VideoScoreMetric{} gain. Thus \VideoScoreMetric{} judges
\ImageRewardMetric{}'s cached and full-compute picks almost identically.
Most cross-verifier loss comes from choosing \ImageRewardMetric{}, not
from caching its search.
This is the decision-relevant comparison because delivery uses the selected
seed, not the full candidate ranking.

\textbf{\VideoScoreMetric{} prefers keep and misses motion dampening.} Across the
$50$ winner pairs per $\tauc$, keep wins on every dimension and threshold.
The average margin grows $+0.036$ / $+0.058$ / $+0.129$ at
$\tauc = 0.05/0.10/0.20$ (Wilcoxon $p \le 0.004$ each). At
$\tauc = 0.20$, dynamic degree favors keep by $+0.176$ on $92\%$ of pairs
($p < 0.001$), although optical flow is $8\%$ lower
(Section~\ref{sec:abl-temporal}). Appendix~\ref{app:verifier-bias} therefore
describes a broader learned-judge bias: the video-native judge also rewards
the dampened output, and the bias grows with aggressiveness. Learned judges
alone cannot audit lossy acceleration.
Use direct temporal metrics such as optical flow and \LPIPSMetric{}. Recommit remains
the safe delivery default because it regenerates the chosen seed at full
compute and avoids this bias by construction.

% SHARED APPENDIX BLOCK (both editions; single source).
% Measured source: results/b1_gate_{steps25,steps100,res720}/scores_shard*.jsonl
% (re-derived by experiments/gate5_analyze.py + paper_figs/scaling_analysis.py;
% CIs by paper_figs/bootstrap_ci.py; trend fits by scaling_analysis.py).
\subsection{Schedule length and resolution}
\label{sec:abl-schedule}

\begin{table}[!tb]
\caption{\textbf{Longer schedules expose more redundancy; resolution does
not move the operating point.} Wan2.1-1.3B \citep{wan2025wan} at
$\tauc=0.10$, with
$50$ prompts and $8$ seeds per row. \emph{skip} is reused denoising
steps; shading marks the default.}
\label{tab:schedule}
\centering
\small
\setlength{\tabcolsep}{2pt}
\begin{tabular}{lccccccc}
\toprule
config & med $\rhosp$ $\uparrow$ & p10 $\uparrow$ & top-1 $\uparrow$ & regret $\downarrow$ & capture $\uparrow$ & skip & speedup $\uparrow$ \\
\midrule
$25$ steps & $0.964\ci{.92,.98}$ & $0.85\ci{.77,.91}$ & $86\%\ci{76,94}$ & $0.019\ci{.003,.044}$ & $95.8\%\ci{90.9,99.3}$ & $32\%$ & $1.41\ci{1.40,1.42}\times$ \\
\rowcolor{blue!8}
$50$ steps (default) & $0.905\ci{.83,.93}$ & $0.61\ci{.35,.72}$ & $64\%\ci{50,76}$ & $0.040\ci{.019,.065}$ & $90.1\%\ci{82.6,95.9}$ & $51\%$ & $1.97\ci{1.96,1.98}\times$ \\
$100$ steps & $0.798\ci{.75,.81}$ & $0.46\ci{.25,.58}$ & $62\%\ci{48,76}$ & $0.126\ci{.051,.220}$ & $82.5\%\ci{71.7,91.6}$ & $66\%$ & $2.80\ci{2.79,2.81}\times$ \\
$720{\times}1280$, $50$ steps & $0.881\ci{.83,.90}$ & $0.69\ci{.52,.78}$ & $66\%\ci{52,78}$ & $0.059\ci{.022,.104}$ & $90.8\%\ci{84.5,96.1}$ & $53\%$ & $2.05\ci{2.04,2.06}\times$ \\
\bottomrule
\end{tabular}
\end{table}

Prior results use the model-card schedule ($50$ steps) and resolution
($480{\times}832$). Table~\ref{tab:schedule} varies both on
Wan2.1-1.3B at fixed $\tauc = 0.10$. Schedule length is a third axis of
effective aggressiveness, alongside threshold (Section~\ref{sec:abl-tau})
and architecture (Section~\ref{sec:abl-cog}). Finer schedules take smaller
steps, so more updates fall below the same drift threshold. From $25$ to
$100$ steps, skip rises $32\% \to 51\% \to 66\%$ and speedup rises
$1.41\times \to 1.97\times \to 2.80\times$. Ranking fidelity falls from
$0.964 \to 0.905 \to 0.798$ median $\rhosp$ and
$95.8\% \to 90.1\% \to 82.5\%$ capture.

Figure~\ref{fig:schedule-scaling} gives log-linear trends:
skip\% ${\approx} -46 + 24.6\ln T$ and capture
${\approx} 127 - 9.5\ln T$. We call these monotone patterns trends, not laws,
because they use only three schedule lengths. They match the U-shaped
redundancy profile in Section~\ref{sec:related-caching}: finer schedules
spend more steps in the flat middle of the trajectory. The skip fraction is
nearly deterministic given $T$, while capture retains prompt-level
variation. All three points
also lie within $2.0$ points of the Wan-1.3B frontier in
Figure~\ref{fig:scaling-master}, although the fit never saw this axis.
Changing schedule therefore moves the model along its frontier.

Long schedules offer the largest relative and absolute savings, exactly
where full-compute search is most expensive. Short schedules leave little
to skip: $25$ steps provide only $1.41\times$ at
$\tauc = 0.10$, but retain near-perfect fidelity. This is the same boundary
seen in few-step distilled samplers. Resolution is different.
At $720{\times}1280$ ($2.3\times$ the pixels), the operating point remains
near the default: $53\%$ skip, $2.05\times$ speedup, median
$\rhosp = 0.881$, $90.8\%$ capture, and $66\%$ top-1. This is the open
contrast in Figure~\ref{fig:schedule-scaling}. Recalibrate
$\tauc$ when the schedule changes, but retain it across resolution changes.

\begin{figure}[!tb]
  \centering
  \includegraphics[width=0.72\linewidth]{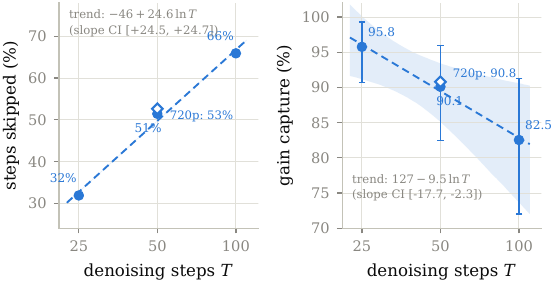}
  \caption{\textbf{Schedule length, not pixel count, controls redundancy.}
  Skip fraction (left) and gain capture (right) vs.\ denoising steps at
  $\tauc=0.10$ and $n=50$. Dashed lines are log-linear trends with
  bootstrap bands; open diamonds show the $720$p contrast.}
  \label{fig:schedule-scaling}
\end{figure}

% SHARED APPENDIX BLOCK (both editions; single source).
\subsection{Composition with pruned search}
\label{sec:abl-stack}

Pruning-based search methods reduce how many candidates pay full price;
\cachedsearch{} reduces what each candidate costs. The two levers are
orthogonal, so their savings should multiply: they prune
\emph{candidates} mid-trajectory, we cheapen every \emph{rollout}.
We test whether the two compose by running all $8$ candidates cached
($\tauc = 0.10$) only to step $20$ of $50$, scoring a $4$-frame preview
decoded from the sampler's internal $x_0$ estimate (a ${\sim}21\times$
cheaper probe than a full decode), continuing only the top-$4$ to
completion, and committing the winner ($50$ prompts, same grid as the
gate). The composition works as multiplication predicts: exploration cost
falls to $175$\,s per prompt, $3.11\times$ below full-compute
best-of-$8$ ($547$\,s) and $1.59\times$ below cached-only
exploration ($278$\,s), i.e.\ the measured $1.96\times$ (caching) and
$1.59\times$ (pruning) factors multiply to the observed $3.1\times$;
delivering the recommitted winner end-to-end (preview, decode, scoring,
and the $68$\,s recommit included) costs a measured $256$\,s, versus
$346$\,s for \cachedsearch{}-commit without pruning (both plotted on the
Pareto plane of Figure~\ref{fig:pareto}), while gain capture falls only from
$90.1\%$ (cached, no pruning) to $88.6\%$, with median regret still
exactly zero and the true-best candidate surviving the prune on
$86\%\ci{76,94}$ of prompts. Cached exploration is
thus a \emph{multiplier} on pruning-based search rather than an
alternative to it, substantiating the composability claim of
Section~\ref{sec:intro}.

\subsection{Scaling analysis}
\label{app:scaling-detail}

This subsection expands the scaling summary and master operating-points
figure (Section~\ref{sec:ablations}, Figure~\ref{fig:scaling-master}). It
covers frontier unification, width, model scale, and cross-architecture
calibration. Section~\ref{sec:abl-schedule} gives schedule and resolution
trends. The model grid contains \wan{} \citep{wan2025wan},
Wan2.1-T2V-14B \citep{wan2025wan}, Wan2.2-TI2V-5B
\citep{wan2025wan22,wan2025wan}, CogVideoX-5B
\citep{yang2024cogvideox}, HunyuanVideo-13B
\citep{kong2024hunyuanvideo}, and LTX-Video-2B
\citep{hacohen2024ltxvideo}.

% SHARED BLOCK (both editions; single source): the operating-plane /
% frontier-unification analysis of Section abl-scalingfigs (fig:scaling-master).
% Report edition: inlined in sections/ablations.tex. Conference edition:
% hosted in the appendix (app:scaling-detail); the conference main text
% carries the one-paragraph summary next to the figure.
% Numbers: code/paper_figs/scaling_analysis.py (grep "[PROSE]"), recomputed
% from the raw jsonl records of every grid.
\paragraph{One frontier per backbone.} Figure~\ref{fig:scaling-master}
collects every measured operating point of the paper on a single
capture-vs-speedup plane; its left panel isolates the Wan2.1-1.3B
\citep{wan2025wan}
backbone. Section~\ref{sec:abl-baselines} fits its frontier on six points
that vary the reuse rule ($\tauc$) or family
(PAB \citep{zhao2024pab}, CFG-Cache \citep{lv2024fastercache},
TeaCache \citep{liu2024teacache}):
capture(\%) $= 104.2 - 19.1\ln(\text{speedup})$, $R^2 = 0.92$. All other
points are evaluation. Four unseen configurations, the $25$- and
$100$-step schedules, $720$p, and an independent default rerun, land
within $2.0$ points (held-out $R^2 = 0.90$). Refitting all ten Wan-1.3B
points barely changes the curve: $104.2 - 19.9\ln x$, $R^2 = 0.94$.
Within one backbone, candidate speedup predicts selection fidelity across
thresholds, caching families, schedules, and resolutions.
This result makes reuse amount a sufficient statistic for fidelity within
the backbone. The mechanism used to obtain that reuse does not create a new
trade-off. Instead, each method moves the operating point along the same
curve.

This relation fails across backbones. At fixed $\tauc = 0.10$, the Wan
frontier over-predicts the six-model captures by $3$--$18$ points
($R^2 = -0.86$). A pooled fit over all $18$ non-degenerate points reaches
only $R^2 = 0.59$. CogVideoX \citep{yang2024cogvideox} follows its own
steeper frontier,
$114.8 - 52.2\ln x$ ($R^2 = 0.98$), with $2.7\times$ the Wan slope.
It is more caching-sensitive at every measured speedup, despite sharing the
same log-linear form.
Wan2.1-14B is the flattest backbone measured. Its panel~(B) dial spans
$\tauc = 0.005$--$0.20$, with a degenerate zero-skip control at $0.005$.
The four non-degenerate arms fit $93.3 - 9.2\ln x$ ($R^2 = 0.93$).
Skip fraction gives the same result without latency: it explains
$R^2 = 0.89$ within Wan-1.3B but $0.40$ across models. We therefore report
per-family frontiers rather than one universal law. Reuse determines
fidelity within a family, while architecture sets the trade-off's level and
slope. A new backbone therefore needs its own calibration. The $25$-prompt
pilot of Section~\ref{sec:abl-cog} locates that frontier before a full search
campaign.

% SHARED BLOCK (both editions; single source): part of Section abl-scalingfigs.
% Report edition: inlined in sections/ablations.tex at its original spot.
% Conference edition: hosted in the appendix (app:scaling-detail).
\begin{figure}[t]
  \centering
  \includegraphics[width=0.62\linewidth]{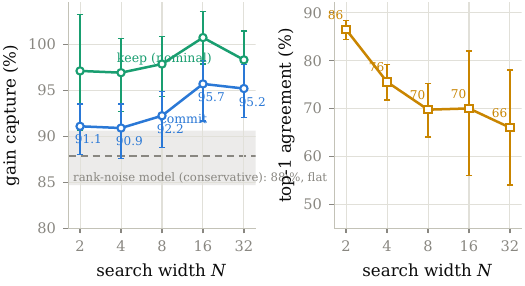}
  \caption{\textbf{Delivered capture rises toward $95\%$ with search width
  while top-1 falls.} Width scaling at $\tauc=0.10$ on $50$ prompts;
  open marks are seed-subset simulations through $N=16$, and $N=32$
  is measured directly. The gray band is the rank-noise prediction.}
  \label{fig:width-scaling}
\end{figure}

% SHARED BLOCK (both editions; single source): part of Section abl-scalingfigs.
% Report edition: inlined in sections/ablations.tex at its original spot.
% Conference edition: hosted in the appendix (app:scaling-detail).
\paragraph{Width.} Figure~\ref{fig:width-scaling} re-simulates all four
strategies over every $\binom{16}{N}$ seed subset of the extended
$16$-seed grid, and adds an $N{=}32$ point from a further extension to
$32$ seeds per prompt. The two panels move in opposite directions:
exact top-1 agreement decays from $86\%$ at $N{=}2$ to a
${\sim}65$--$70\%$ plateau at $N \geq 8$ (a larger pool gives more ways
to mis-rank the exact best), yet commit capture \emph{rises} from
$90.9\%$ at $N{=}4$ to $95.7\%$ at $N{=}16$ and holds at $95.2\%$ at
$N{=}32$ (the surface-\emph{some}-near-best mechanism
of Section~\ref{sec:exp-main}, saturating near $95\%$), while the commit cost
ratio simultaneously falls toward the pure exploration ratio
($75.8\% \to 63.3\% \to 57.1\% \to 53.9\%$ of best-of-$N$). The shape is
a rise to saturation: the $N{=}4 \to 16$ increase is $+4.8$ points
(Section~\ref{app:theory-width}), while
$N{=}16 \to 32$ moves $-0.5$ points (within noise), and a saturating fit
capture$(N) = c_\infty - \beta/N$ over the five widths puts the ceiling
at $c_\infty = 94.8\%$ ($R^2 = 0.62$, a consistency check from five
points rather than a fitted law). The flat spread-weighted
copula envelope overlaid in Figure~\ref{fig:width-scaling} ($88\%$;
Appendix~\ref{app:theory}, Remark~\ref{rem:conservative}) cannot produce
this rise. Section~\ref{app:theory-width} traces the disagreement to the
Gaussian-marginal idealization. Top-1
agreement is therefore the wrong scaling metric for cached search;
delivered capture improves exactly in the wide-search regime where the
method saves the most.

% SHARED BLOCK (both editions; single source): the model-scaling figure (f9).
% Report edition: inlined in sections/ablations.tex (Section abl-scalingfigs).
% Conference edition: hosted in the appendix (app:scaling-detail).
% Generated by code/paper_figs/make_figs2.py (f9_models).
\begin{figure}[!tb]
  \centering
  \includegraphics[width=0.97\linewidth]{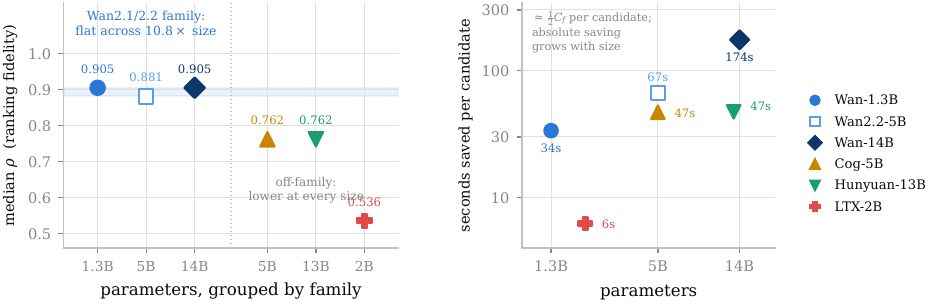}
  \caption{\textbf{Fidelity tracks architecture family, while absolute
  savings track model cost.} Median $\rhosp$ (A) and seconds saved per
  candidate (B) at $\tauc=0.10$, with $8$ seeds and $50$ prompts per
  model. Table~\ref{tab:scale} gives full values.}
  \label{fig:model-scaling}
\end{figure}

% SHARED BLOCK (both editions; single source): the model-scale paragraph of
% Section abl-scalingfigs. Report edition: inlined in sections/ablations.tex.
% Conference edition: hosted in the appendix (app:scaling-detail).
\paragraph{Model scale.} Figure~\ref{fig:model-scaling} plots median
$\rhosp$ by model, grouped by family, and the absolute per-candidate
saving against parameter count, at fixed $\tauc = 0.10$ for all six models;
capture and top-1 follow the same family split (Table~\ref{tab:scale}). Within the
Wan family \citep{wan2025wan}, a $10.8\times$ parameter increase leaves
median $\rhosp$
exactly unchanged ($0.905$) and moves capture only mildly
($90.1\% \to 87.5\%$; top-1 $64\% \to 58\%$, Table~\ref{tab:scale}),
with the cross-generation Wan2.2-5B \citep{wan2025wan22} point on the same
plateau ($0.881$
/ $86.0\%$); every off-family point sits below the Wan band at
\emph{every} size: LTX-2B \citep{hacohen2024ltxvideo} at the bottom
($0.536$ / $67.6\%$), CogVideoX-5B \citep{yang2024cogvideox} and
Hunyuan-13B \citep{kong2024hunyuanvideo} in between ($0.762$ each, capture
$75.2\%$ / $79.9\%$). Vertical position in the fidelity panel tracks
family, not scale\reportonly{, which is the graphical form of
Table~\ref{tab:scale}'s conclusion and direct support for hypothesis
H2 of Appendix~\ref{app:scaling}: cacheable redundancy and rank
fidelity are properties of the (architecture, $\tauc$) pair, not of
parameter count}. The savings panel prices the transfer in absolute
terms: at $\tauc = 0.10$ every backbone saves ${\approx}$half its
rollout cost per candidate ($49$--$54\%$ of $C_f$; the over-driven LTX
$62\%$), so the \emph{absolute} saving grows linearly with model
cost: $6$\,s per candidate on LTX-2B ($10$\,s rollouts), $34$\,s on
Wan-1.3B, $175$\,s on Wan-14B ($341$\,s rollouts). Parameter count
does not set the fidelity, but it does set the stakes: cached
exploration saves the most compute exactly where full-compute search
is most expensive.

% tau-calibration float retired (duplicated Fig 16 panel B, fig:scaling-master).
% SHARED BLOCK (both editions; single source): part of Section abl-scalingfigs.
% Report edition: inlined in sections/ablations.tex at its original spot.
% Conference edition: hosted in the appendix (app:scaling-detail).
\paragraph{Cross-architecture calibration.}
Figure~\ref{fig:scaling-master}(B) overlays the measured
capture-vs-speedup curves of all six backbones. All are monotone and
smooth, but they are \emph{shifted}: CogVideoX's
\citep{yang2024cogvideox} $\tauc = 0.10$ point
($75.2\%$ at $2.06\times$) lies well below Wan-1.3B's
\citep{wan2025wan} \emph{most
aggressive} setting, while its $\tauc = 0.05$ point ($85.9\%$ at
$1.78\times$) climbs back to the edge of Wan's operating band: the
same nominal threshold buys different effective aggressiveness on
different backbones. The Wan2.1-14B curve spans the full dial,
$\tauc = 0.005$ to $0.20$: at $\tauc = 0.005$ the drift indicator never
crosses the threshold, so \emph{zero} steps are skipped and the cached
arm reproduces full compute bit-exactly ($1.01\times$, capture $100\%$,
a useful end-to-end determinism control, as
Section~\ref{sec:abl-baselines}'s no-caching control was for the
harness), while $\tauc = 0.02/0.05/0.10/0.20$ buy
$1.21/1.59/2.05/2.52\times$ at $92.1/88.0/87.5/84.7\%$ capture: the
flattest curve of any backbone (its own fit $93.3 - 9.2\ln x$ over the
four non-degenerate arms, $R^2 = 0.93$, vs.\ slopes of $-12.5$ on
Wan-1.3B, $-16.1$ on Wan2.2-5B \citep{wan2025wan22}, $-16.7$ on
Hunyuan \citep{kong2024hunyuanvideo}, $-28.3$ on
LTX \citep{hacohen2024ltxvideo}, and
$-52.2$ on CogVideoX). The most expensive model in the set is also the
most caching-tolerant. LTX-Video traces the opposite extreme: its whole
curve sits below the $85\%$ band at every measured $\tauc$ (the
boundary row of Table~\ref{tab:taucal}). The figure turns the
calibration rule of Section~\ref{sec:abl-cog} into a procedure: pick a target
capture (horizontal line), and sweep $\tauc$ on a small pilot until the
model's curve crosses it; the abscissa then reports the speedup that
fidelity level costs on that architecture.

% SHARED BLOCK (both editions; single source): part of Section abl-scalingfigs.
% Report edition: inlined in sections/ablations.tex after x_par_calibscaling.
% Conference edition: hosted in the appendix (app:scaling-detail).
% Numbers: code/paper_figs/taucal_analysis.py (grep "[TAUCAL]"); CIs:
% ci_numbers.json groups {gate,cog,hunyuan,ltx,wan22,wan14b}_tau=*.
% WAVE-4 LIVE: the Wan2.2 tau=0.05/0.20 and Wan-14B tau=0.05/0.20 cells are
% refreshed from the corrected rerun dirs (wan22_tau{005,020}fix,
% wan14b_tau{05,20}fix) -- see NOTES-taucal-agent.md.
\paragraph{Per-model calibrated operating points.}
Table~\ref{tab:taucal} gives a recommended threshold $\tau^*$ per
backbone. We select the most aggressive measured $\tauc$ with at least
$85\%$ gain capture. Every Wan backbone
\citep{wan2025wan,wan2025wan22} qualifies at $\tauc = 0.10$.
Wan-1.3B also qualifies at $\tauc = 0.20$, with $88.3\%$ capture at
$2.41\times$. The 5B and 14B models miss at $\tauc = 0.20$ by only
$1$--$2$ points ($83.1\%$/$84.7\%$). The 14B dial is flat, moving
$92.1 \to 84.7\%$ across $\tauc = 0.02 \to 0.20$.\reportonly{ (We
nonetheless keep $\tauc = 0.10$ as the paper-wide default on Wan-1.3B:
one notch of conservatism buys a healthier tail (p10 $\rhosp$ $0.61$
vs.\ $0.47$) and keep-draft safety, Section~\ref{sec:abl-temporal}; $\tau^*$
marks the most aggressive \emph{validated} setting for commit-mode
exploration.)} CogVideoX-5B \citep{yang2024cogvideox} and
HunyuanVideo-13B \citep{kong2024hunyuanvideo} require
$\tau^* = 0.05$, where both reach ${\sim}1.8\times$ and $85$--$86\%$
capture. LTX-Video-2B \citep{hacohen2024ltxvideo} is the boundary: no
measured $\tauc$ qualifies.
At $\tauc = 0.02$, capture improves by $12$ points
($67.6\% \to 79.6\%$) at $1.71\times$. Yet LTX remains $5$--$6$ points
below CogVideoX and Hunyuan at ${\sim}44\%$ skip. Its frontier,
$94.5 - 28.3\ln x$ ($R^2 = 0.99$), reaches $85\%$ only near
$1.4\times$. Calibration helps, but LTX has a lower frontier.\reportonly{
Two to three cached-only arms scored against existing full-compute references
suffice to place a backbone's curve against the target band.}

\begin{table}[!tb]
\caption{\textbf{Calibrated thresholds recover at least $85\%$ capture on
five of six models
\citep{wan2025wan,wan2025wan22,yang2024cogvideox,kong2024hunyuanvideo,hacohen2024ltxvideo}.}
$\tau^*$ is the most aggressive qualifying measured
threshold; $\dagger$ marks the LTX boundary.
The left block gives each
sweep and the right its selected operating point.}
\label{tab:taucal}
\centering
\tabsize
\setlength{\tabcolsep}{3pt}
\resizebox{\linewidth}{!}{%
\begin{tabular}{lcccccccccc}
\toprule
 & \multicolumn{5}{c}{capture (\%) $\uparrow$ at measured $\tauc =$} & \multicolumn{5}{c}{operating point at $\tau^*$} \\
\cmidrule(lr){2-6} \cmidrule(lr){7-11}
model & $.005$ & $.02$ & $.05$ & $.10$ & $.20$ & $\tau^*$ & med $\rhosp$ $\uparrow$ & capture $\uparrow$ & skip & speedup $\uparrow$ \\
\midrule
\logocell{wan13b} & - & - & $93.6$ & $90.1$ & $\bm{88.3}$ & $0.20$ & $0.857\ci{.81,.91}$ & $88.3\%\ci{82.4,93.5}$ & $60\%$ & $2.41\ci{2.37,2.45}\times$ \\
\logocell{wan22} & - & - & $89.3$ & $\bm{86.0}$ & $83.1$ & $0.10$ & $0.881\ci{.82,.91}$ & $86.0\%\ci{78.9,92.3}$ & $55\%$ & $2.05\ci{2.04,2.06}\times$ \\
\logocell{wan14b} & $100$ & $92.1$ & $88.0$ & $\bm{87.5}$ & $84.7$ & $0.10$ & $0.905\ci{.85,.93}$ & $87.5\%\ci{80.2,93.8}$ & $52\%$ & $2.05\ci{2.04,2.05}\times$ \\
\logocell{cog} & - & - & $\bm{85.9}$ & $75.2$ & $64.7$ & $0.05$ & $0.810\ci{.71,.85}$ & $85.9\%\ci{76.1,93.3}$ & $44\%$ & $1.78\ci{1.74,1.82}\times$ \\
\logocell{hunyuan} & - & - & $\bm{85.1}$ & $79.9$ & $79.7$ & $0.05$ & $0.810\ci{.74,.89}$ & $85.1\%\ci{75.8,92.9}$ & $46\%$ & $1.77\ci{1.76,1.78}\times$ \\
\logocell{ltx} & - & $79.6$ & $71.6$ & $67.6$ & - & none$^\dagger$ & $0.786\ci{.69,.86}$ & $79.6\%\ci{68.6,89.2}$ & $43\%$ & $1.71\ci{1.70,1.72}\times$ \\
\bottomrule
\end{tabular}}
\end{table}

% Practitioner's operating manual (tab:manual). Single-sourced: report hosts it
% in the Conclusion (via \reportonly), the conference hosts it in the appendix
% (\ifreport\else block). Point estimates only (\ci hidden by \showcifalse).
\begin{table}[!tb]
\caption{\textbf{Deployment settings for quality, speed, motion, width,
and new backbones.} Measured on \wan{} \citep{wan2025wan} at $N=8$ unless
noted; cost is
relative to full best-of-$N$. Keep capture is nominal and commit restores
full-compute dynamics.}
\label{tab:manual}
\centering
\small
\setlength{\tabcolsep}{3.5pt}
\begin{tabular}{lllll}
\toprule
deployment goal & mode & $\tauc$ & expected capture $\uparrow$ & cost $\downarrow$ \\
\midrule
guaranteed quality (default) & commit & $0.10$ & $94.7\%\ci{90.4,97.6}$ & $63\%$ \\
maximum speed (drafts) & keep & $0.10$ & $99.6\%\ci{95.4,104.3}$ nominal & $51\%$ \\
motion-critical prompts & commit & $0.05$ & $93.6\%\ci{88.6,97.5}$ & $76\%$ \\
widest search, fixed budget & commit & $0.10$, $N{=}16$ & $95.7\%\ci{91.5,98.3}$ & $57\%$ \\
new architecture or schedule & commit & calibrate & see Tab.~\ref{tab:taucal}\reportonly{, Fig.~\ref{fig:scaling-master}, Tab.~\ref{tab:schedule}} & pilot calibration \\
\bottomrule
\end{tabular}
\end{table}

% SHARED APPENDIX BLOCK (both editions; single source).
% Measured source: results/b1_gate_{vqares,vqares_t25,lowres352}/scores_shard*.jsonl
% and results/b1_verifiers_vqares/combined.jsonl, re-derived with the
% gate5_analyze.py estimator (seeds 0-7, complete-coverage prompts,
% eq:capture mean-of-ratios); CIs are 2000-resample bootstrap percentiles.
\subsection{Reduced-resolution exploration and a second verifier}
\label{app:lowres-vqa}

Section~\ref{sec:abl-cheapexplore} argues that cheap exploration is useful
only when it preserves the delivered sample's trajectory. Truncation tests
that claim by shortening the schedule. Reducing the resolution tests it more
sharply, because the latent tensor itself changes shape, so a seed no longer
indexes a perturbed version of the same sample. It indexes a different one.

We regenerated the $50$-prompt gate grid at $352{\times}608$, roughly half
the pixels of the $480{\times}832$ default, and ranked those drafts against
the same full-compute references used elsewhere. Table~\ref{tab:lowresexplore}
places the arm beside the two strategies of Table~\ref{tab:cheapexplore}.
Reduced-resolution exploration is the cheapest of the three at
$2.33\times$, and it is the only one that recovers nothing: median
$\rhosp = 0.060$, $18\%$ top-1 agreement, and $-0.8\%$ capture, which is
statistically consistent with zero. Selecting on half-resolution drafts is
no better than selecting at random. The regenerated full-compute references
are bit-identical to the published ones, so this row shares the reference
grid of Table~\ref{tab:cheapexplore} exactly.

This does not contradict Section~\ref{sec:abl-schedule}, where caching at
$720{\times}1280$ holds its operating point. There the drafts and the
references share a resolution, and caching perturbs a trajectory that both
arms follow. Here the draft is generated at one resolution and the delivered
video at another, so the two arms follow different trajectories from the
same seed. Resolution is safe to change for the whole search and unsafe to
change between exploration and commit.

% ======================================================================
% tab:lowresexplore - the three cheap-exploration strategies on one grid.
% Rows 1-2 restate tab:cheapexplore; row 3 is the reduced-resolution arm.
% ======================================================================
\begin{table}[!tb]
\caption{\textbf{Reduced-resolution exploration recovers nothing.} The three
cheap-exploration strategies against $480{\times}832$, $T{=}50$ full-compute
references on $50$ prompts and $8$ seeds. The first two rows restate
Table~\ref{tab:cheapexplore}; shading marks the default. Reduced resolution
is the cheapest arm and the only one whose capture is statistically
consistent with zero.}
\label{tab:lowresexplore}
\centering
\ifreport\tabsize\else\small\fi
\setlength{\tabcolsep}{3pt}
\resizebox{\linewidth}{!}{%
\begin{tabular}{lcccccc}
\toprule
exploration arm & speedup $\uparrow$ & med $\rhosp$ $\uparrow$ &
p10 $\rhosp$ $\uparrow$ & top-1 $\uparrow$ & regret $\downarrow$ &
capture $\uparrow$ \\
\midrule
\rowcolor{blue!8}
caching ($\tauc{=}0.10$) &
$1.97\ci{1.96,1.98}\times$ &
$\bm{0.905}\ci{.83,.93}$ &
$\bm{0.614}\ci{.36,.71}$ &
$\bm{64\%}\ci{50,78}$ &
$\bm{0.039}\ci{.019,.064}$ &
$\bm{90.1\%}\ci{82.5,95.9}$ \\
step-truncation ($T{=}25$) &
$1.98\ci{1.95,2.02}\times$ &
$0.512\ci{.44,.69}$ &
$0.024\ci{-.04,.31}$ &
$44\%\ci{30,58}$ &
$0.226\ci{.088,.398}$ &
$72.6\%\ci{59.2,84.3}$ \\
reduced resolution ($352{\times}608$) &
$2.33\ci{2.33,2.33}\times$ &
$0.060\ci{-.18,.21}$ &
$-0.410\ci{-.51,-.24}$ &
$18\%\ci{8,28}$ &
$0.627\ci{.446,.827}$ &
$-0.8\%\ci{-27.2,23.5}$ \\
\bottomrule
\end{tabular}}
\end{table}

\paragraph{A second verifier on the same rollouts.}
Section~\ref{sec:abl-verifier} rescores the study with
\VideoScoreMetric{}. We repeat that test with \VQAScoreMetric{}, an
image-text alignment score, on the regenerated gate grid
($n = 49$ prompts with complete coverage after score-consistency filtering).
Ranking preservation under caching is weaker than for
\ImageRewardMetric{} but far from absent: median $\rhosp = 0.810$
(p10 $0.495$), $57\%$ top-1 agreement, and $80.7\%$ capture, against
$0.905$ / $0.581$ / $65\%$ / $90.2\%$ for \ImageRewardMetric{} on the same
videos.

The comparison that matters is not between those two columns. On
full-compute rollouts, where no caching is involved, the two verifiers rank
the same eight candidates at median $\rhosp = 0.548$. Both verifiers
therefore agree with their own cached rankings ($0.905$ and $0.810$)
substantially better than they agree with each other. The perturbation
caching introduces is smaller than the standing disagreement between two
reasonable definitions of quality, which is the same conclusion
Section~\ref{sec:abl-verifier} reaches from \VideoScoreMetric{}.

Regenerating the default arm also reproduces the published operating point
exactly ($0.905$ median $\rhosp$, $64\%$ top-1, $0.039$ regret, $90.1\%$
capture), as seed-deterministic generation on identical hardware should.

% ======================================================================
\section{Per-category analysis}
\label{app:categories}

We test whether the score-spread pattern in Section~\ref{sec:abl-corruption}
has semantic structure. We assign each of the $50$ prompts to one keyword
category: \emph{humans} ($n{=}18$),
\emph{animals} ($n{=}9$), \emph{nature} ($n{=}10$), \emph{objects}
($n{=}9$), or \emph{stylized} ($n{=}4$). Humans center people; nature covers
landscapes, weather, and natural phenomena. Objects cover vehicles,
machines, and manufactured items. Stylized prompts are non-photorealistic.
Table~\ref{tab:categories} reports ranking fidelity and regret at
$\tauc = 0.10$, $N = 8$.

\begin{table}[h]
\caption{\textbf{Ranking fidelity varies modestly across prompt
categories.} Keyword buckets over $50$ prompts at $\tauc=0.10$ and $N=8$.
\emph{Spread} is mean within-prompt score SD; regret uses reward units.
The stylized bucket has only $n=4$.}
\label{tab:categories}
\centering
\small
\setlength{\tabcolsep}{7pt}
\begin{tabular}{lccccc}
\toprule
category & $n$ & median $\rhosp$ $\uparrow$ & mean regret $\downarrow$ & top-1 $\uparrow$ & mean spread \\
\midrule
humans   & 18 & $0.881\ci{.82,.95}$ & $0.045\ci{.018,.078}$ & $56\%\ci{33,78}$ & $0.53\ci{.40,.68}$ \\
animals  & 9  & $0.833\ci{.61,.93}$ & $0.008\ci{.000,.022}$ & $78\%\ci{44,100}$ & $0.51\ci{.31,.73}$ \\
nature   & 10 & $0.929\ci{.89,.96}$ & $0.005\ci{.000,.014}$ & $80\%\ci{50,100}$ & $0.53\ci{.34,.73}$ \\
objects  & 9  & $0.857\ci{.35,.96}$ & $0.066\ci{.005,.160}$ & $56\%\ci{22,89}$ & $0.43\ci{.29,.57}$ \\
stylized & 4  & $0.917\ci{.57,1.00}$ & $0.112\ci{.000,.227}$ & $50\%\ci{0,100}$ & $0.54\ci{.30,.76}$ \\
\bottomrule
\end{tabular}
\end{table}

The results match the spread mechanism in Section~\ref{sec:abl-corruption}.
Nature and animal prompts have high top-1 agreement ($80\%$/$78\%$) and low
mean regret ($0.005$/$0.008$). Animals still have the lowest median $\rhosp$
($0.833$), indicating swaps among near ties. Human and object prompts have
more regret ($0.045$/$0.066$), where articulated motion and object integrity
create meaningful candidate spread. Stylized prompts have the largest mean
regret ($0.112$), but $n{=}4$ and two prompts contribute $0.302$ and $0.148$;
the other two contribute zero. The worst prompt is an object case (``robot
arm assembling electronics'', $\rhosp = -0.07$, regret $0.408$).
Non-photorealistic and fine-mechanism content may depend more on
mid-trajectory dynamics. Reusing transformation vectors can then perturb
verifier-relevant scores. Examples include stop-motion stutter, unfolding
origami, and articulated manipulation. This interpretation motivates
category-aware or cache-signal-aware adaptation.
These are diagnostics, not population claims. The tested spread probe does
not adapt successfully (Section~\ref{sec:abl-adaptive}). A powered analysis
could use the \VBenchMetric{} suite (Section~\ref{sec:exp-vbench});
Table~\ref{tab:categories} only identifies candidate failure modes.

% ======================================================================
\section{Verifier bias}
\label{app:verifier-bias}

For the same winning seed, \ImageRewardMetric{} scores the cached video above its
full-compute twin (Section~\ref{sec:abl-temporal}). Keep $-$ commit is
$+0.008$ / $+0.037$ / $+0.028$ at $\tauc = 0.05$ / $0.10$ / $0.20$
($n{=}50$ pairs each; keep $\geq$ commit on ${\sim}60\%$ of pairs).

\paragraph{Mechanism: motion dampening photographs well.} Cached rollouts
lose motion as $\tauc$ grows: mean optical flow falls $-3.1\%$ at
$\tauc{=}0.10$ and $-8.0\%$ at $\tauc{=}0.20$, with less flicker
(Section~\ref{sec:abl-temporal}). A frame verifier sees $8$ stills, where
slower scenes reduce blur and transient deformation. \LPIPSMetric{} of
$0.122$--$0.170$ confirms a preference between distinct outputs, not score
noise on duplicates. The verifier rewards motion reduction rather than
detecting caching itself. Slower scenes also produce more canonical poses,
which frame-level aesthetic and alignment models favor.

\paragraph{Implications for verifier-guided search.} A common bias across
$N$ candidates cancels in the within-prompt $\argmax$. Candidate-specific
motion loss can still cause rank swaps (Section~\ref{sec:exp-ranking}), which
the measured regret already includes. Regret and capture compare cached
selections with full-compute scores, so both include this selection effect.
The same verifier scores
\cachedsearch{}-keep (Table~\ref{tab:main}), so its ${\sim}99\%$ nominal
capture is an upper bound. This is a caching instance of verifier
over-optimization \citep{ma2025inference}. Recommit removes the bias from
delivered quality. Motion-aware or video-native verifiers
\citep{xu2024visionreward,he2024videoscore} may reduce it, but
\VideoScoreMetric{}
shares the preference, including on dynamic degree
(Section~\ref{sec:abl-verifier}). Deployments should report the
keep-vs-commit score delta and mean optical flow. This check needs one extra
scoring pass on $2 \times 50$ videos.

% ======================================================================
\section{Scaling discussion}
\label{app:scaling}

\cachedsearch{} depends on each (model, $\tauc$, verifier) triple's speedup
$C_f/C_c$ and ranking fidelity $r$; Section~\ref{sec:experiments} and the
cross-model results in Section~\ref{sec:abl-cog} show how these measures of
denoising redundancy change with scale.

\paragraph{Hypotheses.} \textbf{(H1) Cacheable redundancy grows with
scale.} Larger, higher-resolution, or longer generations may have flatter
mid-trajectory dynamics. More capacity may go to refinement across
spatially redundant tokens, so a fixed-$\tauc$ cache should skip more.
Forcing-KV \citep{forcingkv2026} supports the token-count axis: its
KV-compression speedup rises from $1.35$--$1.5\times$ at $480$p to
$2.82\times$ at $1080$p. \textbf{(H2) Ranking preservation does not follow
redundancy.} Fidelity asks whether cache error stays orthogonal to the
verifier-relevant score direction. It depends on candidate spread relative
to cache-induced score noise, not parameter count. \textbf{(H3) Self-limiting
corruption transfers across scale.} At 1.3B, low regret follows the positive
spread--fidelity association, not $\rhosp$ alone
(Proposition~\ref{prop:spread}). This covariance may survive even when
$\rhosp$ falls.

\paragraph{Cross-family data point: CogVideoX-5B \citep{yang2024cogvideox}.}
We ran the full
gate protocol (same $50$ prompts, $8$ seeds, full vs.\ cached
$\tauc{=}0.10$, \ImageRewardMetric{} verifier) on this model at its native
configuration ($49$ frames,
$480{\times}720$, $50$ steps, guidance $6.0$; its batch-concatenated CFG
requires the single-branch variant of the cache wrapper,
Section~\ref{sec:abl-cog}). Results, computed by the same script as
Table~\ref{tab:categories}: candidate speedup $2.06\times$
($91.3 \to 44.3$\,s); median $\rhosp = 0.762$ (mean $0.684$, p10 $0.352$;
$20/50$ prompts below $0.7$); top-1 agreement $48\%$; mean regret $0.124$
reward units against a $0.517$ random-pick baseline: \textbf{${\sim}76\%$} of search gains retained (ratio of means; $75.2\%$ as
mean per-prompt ratio), with median regret $0.008$ and $48\%$ of prompts at
exactly zero regret. The spread--fidelity association remains positive but
weaker (corr $= +0.17$).

CogVideoX is consistent with H1: speedup rises $1.97\times \to 2.06\times$
at the same $\tauc$ on a model with $3.8\times$ the parameters. It also
supports H2. At 5B, median $\rhosp$ falls $0.905 \to 0.762$, and capture
falls $94\% \to 76\%$. Architecture, training data, resolution, clip length,
and CFG implementation confound this cross-family comparison.

The within-family rung separates these effects. Wan2.1-14B
\citep{wan2025wan} matches its
$1.3$B sibling's median $\rhosp = 0.905$ under the same protocol
(Table~\ref{tab:scale}, Section~\ref{sec:abl-cog}). It reaches a reproduced
$2.05\times$ speedup and $87.5\%$ capture across a $10.8\times$ parameter
gap. Thus the CogVideoX drop is a family effect from an uncalibrated $\tauc$,
largely recovered by the per-model sweep in Section~\ref{sec:abl-cog}.
The six-model grid in Table~\ref{tab:scale} agrees.
HunyuanVideo-13B \citep{kong2024hunyuanvideo} and
LTX-Video-2B \citep{hacohen2024ltxvideo} also lose ranking fidelity at fixed
$\tauc$ (capture $79.9\%$ /
$67.6\%$), while matching or exceeding Wan speedups. Redundancy and fidelity
do not track parameter count.

We therefore measure $r$ for each model. A cheap pilot needs one prompt set
and two rollout modes, exactly the calibration in Appendix~\ref{app:theory}.
Even CogVideoX's uncalibrated $r$ retains three quarters of the search gain
at roughly half the exploration cost in \emph{recommit} mode. Delivery
quality is unchanged by construction.

The non-parameter axes provide direct H1 tests
(Section~\ref{sec:abl-schedule}). At fixed $\tauc = 0.10$ on Wan2.1-1.3B
\citep{wan2025wan},
the skip fraction rises $32\% \to 51\% \to 66\%$ from $25$ to $100$
denoising steps. The log-linear trend is ${\approx}{+}24.6$ skip points per
$e$-fold of $T$ (Figure~\ref{fig:schedule-scaling}). Schedule length is the
strong axis for step-wise caching. At $720{\times}1280$
($2.3\times$ the pixels), the skip fraction moves only $51\% \to 53\%$.
Forcing-KV's resolution gains use a different mechanism, AR KV compression.
Parameter count is also flat within the family: $51\%$ at $1.3$B and $52\%$
at $14$B. Future work should fit skip fraction, iso-quality speedup, and
ranking fidelity across parameters, resolution, and length. The CogVideoX
$2$B${\leftrightarrow}5$B rung remains open.

Together, these measurements make H1 axis-specific and support H2 across
model sizes. They provide limited evidence for the third hypothesis. A
systematic grid should vary one axis at a time and calibrate $\tauc$ per
model. It should separate
cacheability from ranking preservation and test whether the spread--fidelity
association persists.

% ======================================================================
\section{Reproducibility}
\label{app:repro}

Public checkpoints, fixed seeds, and the configurations below reproduce all
results. Generation is deterministic for (prompt, seed, $\tauc$) on a fixed
stack, so each cached or full video can be re-materialized exactly. We will
release the caching wrapper, search harness, analysis and figure scripts,
prompt lists, seeds, configurations, and evaluation scores.

\paragraph{Models and inference configurations.}
The checkpoints are \wan{} \citep{wan2025wan}, Wan2.1-T2V-14B
\citep{wan2025wan}, Wan2.2-TI2V-5B \citep{wan2025wan22,wan2025wan},
CogVideoX-5B \citep{yang2024cogvideox}, HunyuanVideo-13B
\citep{kong2024hunyuanvideo}, and LTX-Video-2B
\citep{hacohen2024ltxvideo}.
\begin{table}[h]
\caption{\textbf{Exact model-card generation configurations for all six
models.} Native resolution, length, guidance, VAE precision, and measured
full-to-cached latency at $\tauc=0.10$.}
\label{tab:repro-config}
\centering
\small
\setlength{\tabcolsep}{4pt}
\begin{tabular}{lcccc}
\toprule
model & resolution $\times$ frames & guidance & VAE &
$C_f \to C_c$ ($\tauc{=}0.10$) \\
\midrule
Wan2.1-T2V-1.3B & $480 \times 832 \times 81$ & $5.0$ & fp32 &
$68.3 \to 34.7$\,s \\
LTX-Video 2B & $480 \times 704 \times 161$ & $3.0$ & bf16 &
$10.1 \to 3.8$\,s \\
Wan2.2-TI2V-5B & $704 \times 1280 \times 121$ & $5.0$ & fp32 &
$130.1 \to 63.5$\,s \\
CogVideoX-5B & $480 \times 720 \times 49$ & $6.0$ & fp32 &
$91.3 \to 44.3$\,s \\
HunyuanVideo 13B & $480 \times 720 \times 61$ & $6.0$ & bf16 &
$87.1 \to 39.8$\,s \\
Wan2.1-T2V-14B & $480 \times 832 \times 81$ & $5.0$ & fp32 &
$341.3 \to 166.8$\,s \\
\bottomrule
\end{tabular}
\end{table}

Each model uses $T=50$ and its native resolution, length, and guidance
configuration. The cached and full arms use the same model-specific
precision.

\paragraph{Cache wrapper.} The adaptive cache follows the
transformation-vector formulation of EasyCache \citep{zhou2025easycache}
(Section~\ref{sec:caching}): cached quantity $\Delta = v(x) - x$; skip returns
$x + \Delta$. The skip rule accumulates the relative input drift
$a \mathrel{+}= \lVert x - x_{\text{ref}} \rVert_F / \lVert x_{\text{ref}}
\rVert_F$ against the last computed input $x_{\text{ref}}$ and
recomputes when $a > \tauc$ (resetting $a$, $x_{\text{ref}}$, $\Delta$).
Warmup and cooldown windows of $K_w = K_c = 5$ steps always compute, as
does any step with an empty cache. Defaults: $\tauc = 0.10$
($\{0.05, 0.20\}$ in the sweep). At $\tauc{=}0.10$ the wrapper skips
$26/50$ steps on Wan and $27/50$ on CogVideoX-5B. Table~\ref{tab:scale}
gives all six models' skip fractions, and Table~\ref{tab:schedule} gives
their dependence on schedule length and resolution. The wrapper is reset
between generations.

\paragraph{Protocol, seeds, verifier.} Grids: $50$ prompts $\times$ seeds
$0$--$7$ $\times$
$\{\text{full}, \text{cached}\}$, i.e.\ $800$ scored rollouts per model at
$\tauc{=}0.10$, plus $400$ cached rollouts per additional $\tauc$ (the full
references are $\tauc$-independent and reused). Noise is drawn from a
per-candidate \texttt{torch.Generator} seeded with the candidate's seed.
The \VBenchMetric{} suite and \VBenchTwoMetric{} use the same paired
$8$-seed protocol. Verifier: \ImageRewardMetric{} v1.0 averaged over $8$
uniformly spaced frames (frames converted to uint8 before scoring). The
temporal study (Section~\ref{sec:abl-temporal}) materializes the $50$ winner pairs per $\tauc$
($300$ videos) and scores DINO subject / CLIP background consistency,
motion smoothness, temporal flicker, mean optical flow, and \LPIPSMetric{}.

\paragraph{Hardware.} All rollouts and scoring ran on NVIDIA GH200 GPUs, one
GPU per rollout.

\section{Qualitative examples}
\label{app:qualitative}

The frame sequences themselves are the video evidence. The search comparison
shows the central single-versus-search result, and the gallery adds two
\VBenchMetric{} cases. The four-strategy grid includes a different-pick failure,
while the model strips test cross-model fidelity. The final figures show
same-seed appearance fidelity, flow reduction, and motion-focused
keep-versus-commit sequences.

\subsection{Search comparison on the \VBenchMetric{} suite}
% SHARED BLOCK (both editions; single source). The qualitative search
% comparison (fig:qual-search) plus its describing paragraph. Report edition:
% inlined in sections/experiments.tex (Section exp-vbench). Conference edition:
% hosted in the appendix (app:qualitative). Figure + annotations generated by
% code/paper_figs/make_figs_qual.py; every reward is the gate's measured
% ImageReward record.
\begin{figure}[!tb]
  \centering
  \includegraphics[width=\linewidth]{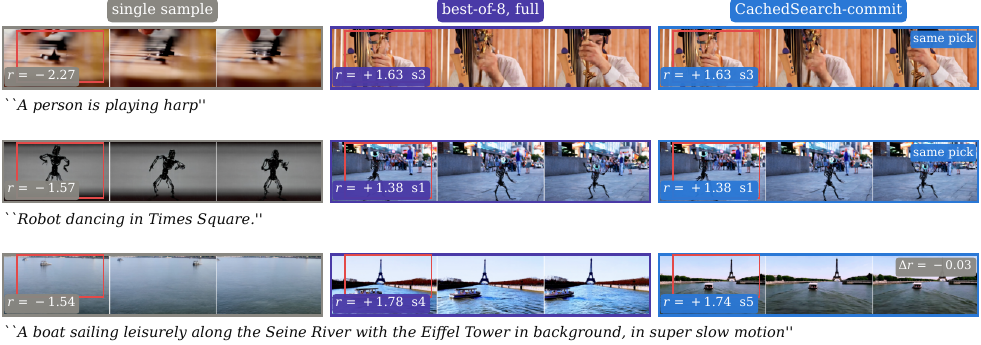}
  \caption{\textbf{Search yields visible gains, and cached exploration
  finds the same winner cheaper.} Selected \VBenchMetric{} prompts compare
  single, full best-of-$8$, and commit at $\tauc=0.10$. Chips give
  measured \ImageRewardMetric{} and seed; red boxes mark discriminating regions.}
  \label{fig:qual-search}
\end{figure}

Figure~\ref{fig:qual-search} makes the aggregate numbers concrete on selected
\VBenchMetric{} prompts. Where the single sample fails (a motion-blurred smear
in place of a harpist, a robot dancing on a black studio backdrop instead of
Times Square, an empty river with no Eiffel Tower), best-of-8 finds a candidate
that satisfies the prompt, worth $+2.9$ to $+3.9$ reward.
\cachedsearch{}-commit delivers the \emph{same} video as full-compute search on
two of the three prompts (identical winning seed, hence pixel-identical
delivery after recommit) and an equal-reward alternative on the boat, while its
exploration rollouts cost half as much.

Figure~\ref{fig:qual-gallery} collects the retrieval-validated picks
omitted from Figure~\ref{fig:qual-search}, using the same three columns.
\begin{figure}[!tb]
  \centering
  \includegraphics[width=\linewidth]{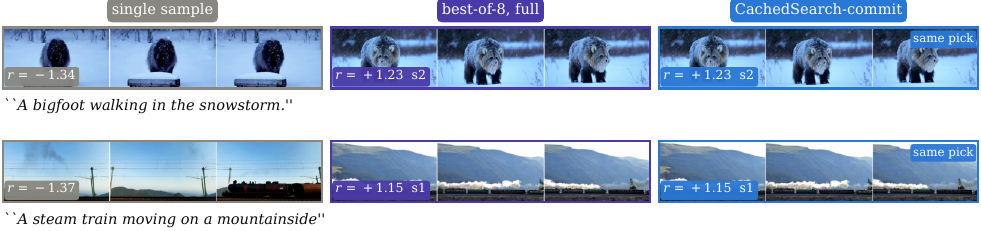}
  \caption{\textbf{Extended qualitative comparisons confirm the same search
  behavior.} Additional \VBenchMetric{} prompts under the
  Figure~\ref{fig:qual-search}
  protocol: single, full best-of-$8$, and commit at $\tauc=0.10$.
  Chips report measured \ImageRewardMetric{} and seed; both are same-pick
  cases.}
  \label{fig:qual-gallery}
  \label{fig:qual-search-gallery}
\end{figure}

\subsection{Different-pick failure}
% SHARED BLOCK (both editions; single source). The four-method delivery grid
% (fig:qual-methods) plus its describing paragraph. Report edition: inlined in
% sections/experiments.tex right after fig:qual-search. Conference edition:
% hosted in the appendix (app:qualitative). Generated by
% code/paper_figs/make_figs_qual.py (fig_methods); every reward and every cost
% fraction is a measured record, nothing hand-entered.
\begin{figure}[!tb]
  \centering
  \includegraphics[width=0.97\linewidth]{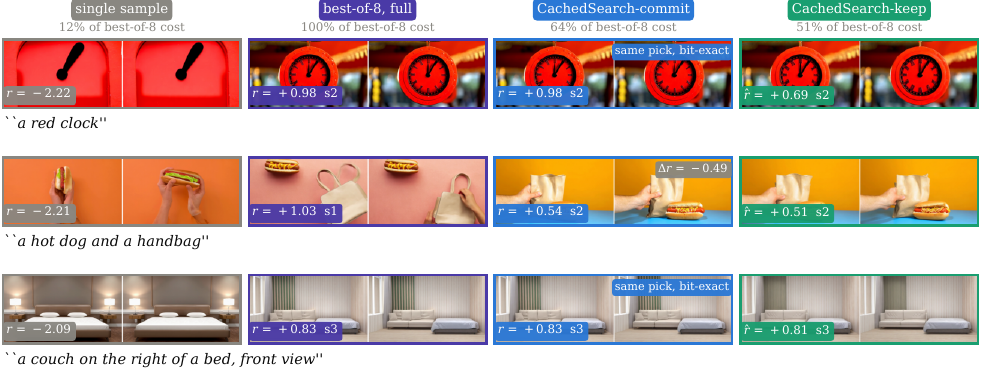}
  \caption{\textbf{All four delivery strategies compared under measured
  budgets.} Selected \VBenchMetric{} prompts form rows; columns are single,
  best-of-$8$, keep, and commit. Headers give cost relative to full
  best-of-$8$; corner chips give measured delivered \ImageRewardMetric{}.}
  \label{fig:qual-methods}
\end{figure}

Figure~\ref{fig:qual-methods} extends the comparison to all four delivery
strategies under their measured budgets. The pattern is the paper in one
image: the single sample misses the prompt (a melted clock face, a missing
hot dog, no couch at all), search finds a satisfying candidate, and cached
exploration delivers that same candidate at $64\%$ (commit, bit-exact) or
$51\%$ (keep, visually near-identical) of best-of-8's cost. The middle row is
the honest case: cached ranking errs, and the miss costs $0.49$ reward out of
a $+3.2$ search gain.

Figure~\ref{fig:qual-methods} is the failure example: its middle row shows
cached exploration selecting a different seed and losing $0.49$ reward from
a $+3.2$ search gain.

\subsection{Cross-model strips}
% SHARED BLOCK (both editions; single source). The cross-model qualitative
% figure (fig:qual-models). Report edition: inlined in sections/ablations.tex
% (Section abl-cog, before the takeaway). Conference edition: hosted in the appendix
% (app:qualitative) with NO main-text pointer (conference page budget).
% Generated by code/paper_figs/make_figs_qual.py from the seed-deterministic
% a0-qual-* spool regeneration (seed 0, model-card configs of
% Table tab:repro-config); rewards and skip counts read from the run's jsonl.
\begin{figure}[!tb]
  \centering
  \includegraphics[width=0.96\linewidth]{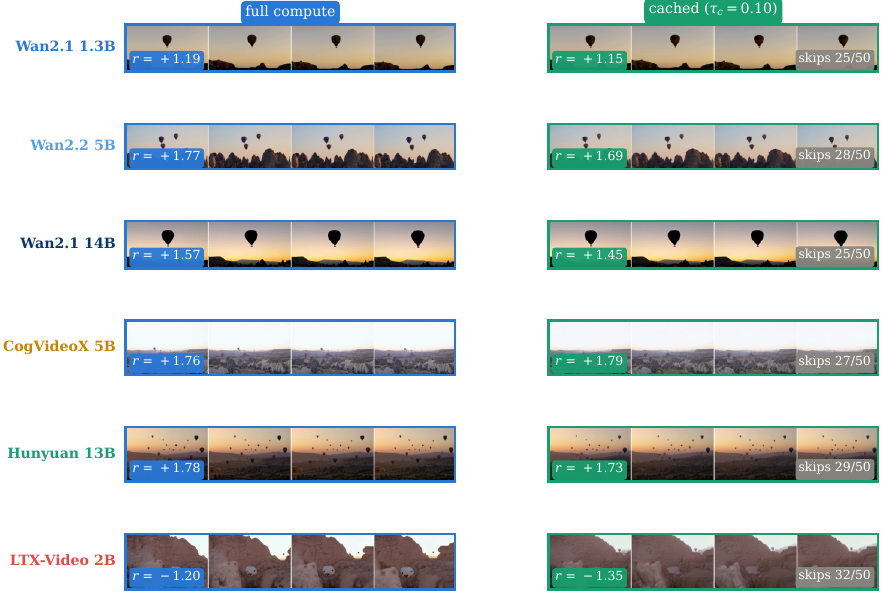}
  \caption{\textbf{Calibrated backbones track their full-compute twins;
  over-driven LTX \citep{hacohen2024ltxvideo} visibly drifts.} One shared prompt and seed across six
  models at $\tauc=0.10$, with full and cached rows under each model-card
  recipe. Chips report \ImageRewardMetric{} and reused denoising steps.}
  \label{fig:qual-models}
\end{figure}

Figure~\ref{fig:qual-models} completes the cross-model picture
qualitatively: the same wrapper at the same $\tauc$ produces cached
rollouts that remain faithful drafts on every backbone operating in its
calibrated skip regime, drifting only in instance-level detail, while
the one model the fixed threshold over-drives (LTX) is also the one
whose delivered content visibly diverges. Visual fidelity and ranking
fidelity degrade together, both governed by the skip fraction, not by
model identity.

\subsection{Fidelity and motion}
% SHARED BLOCK (both editions; single source). The same-seed full-vs-cached
% fidelity figure (fig:qual-fidelity). Report edition: inlined in
% sections/ablations.tex (Section abl-temporal, after the LPIPS block). Conference
% edition: hosted in the appendix (app:temporal-detail). Generated by
% code/paper_figs/make_figs_qual.py; rewards from the E1 records, LPIPS from
% the recomputed lpips_fixed values quoted in the LPIPS block.
\begin{figure}[!tb]
  \centering
  \includegraphics[width=\linewidth]{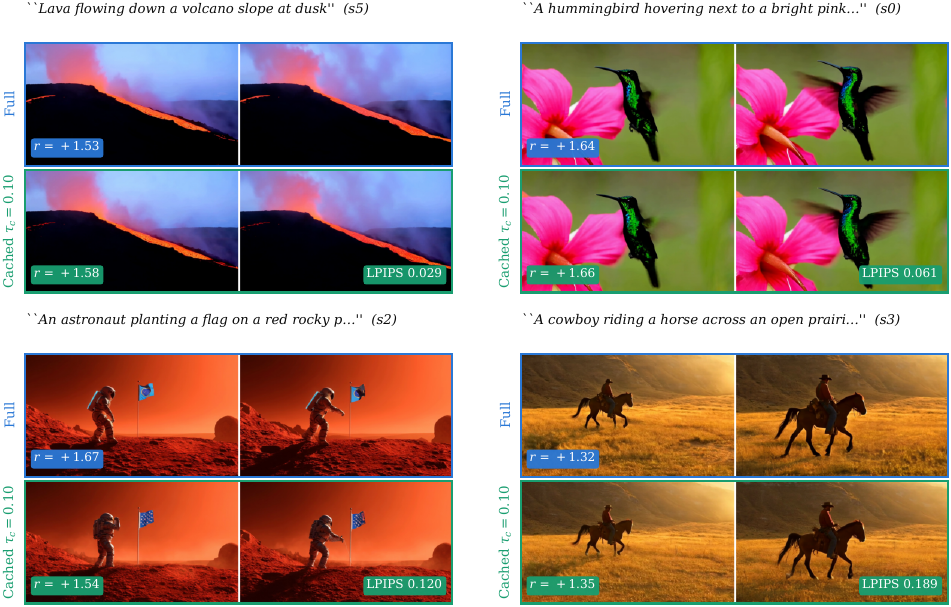}
  \caption{\textbf{Cached twins preserve subject, layout, and palette;
  differences concentrate in detail and motion phase.} Four same-seed
  winner pairs at $\tauc=0.10$, with two frames per video. Chips report
  measured \ImageRewardMetric{} and pairwise \LPIPSMetric{}.}
  \label{fig:qual-fidelity}
\end{figure}

Figure~\ref{fig:qual-fidelity} shows what these \LPIPSMetric{} values look like:
the selected pairs span the suite distribution (mean $0.142$, range
$0.021$--$0.343$, $n=50$), and the widest shown is $0.189$. Across that
range, the cached rollout and its full-compute twin
agree on subject, layout, and palette, and disagree in fine texture and
motion phase: the cached rollout is a faithful draft of the video the
verifier is asked to rank, which is the property candidate ranking needs.

% SHARED BLOCK (both editions; single source). Motion dampening made visible
% (fig:qual-flow): same-seed full-vs-cached winner pairs at the aggressive arm
% with temporal-mean optical-flow maps. Report edition: inlined in
% sections/ablations.tex (sec:abl-temporal, after fig:qual-fidelity).
% Conference edition: hosted in the appendix (app:qualitative, motion
% subsection). Generated by code/paper_figs/make_figs_qual.py (fig_flow); the
% flow statistics are computed live from the delivered videos.
\begin{figure}[!tb]
  \centering
  \includegraphics[width=\linewidth]{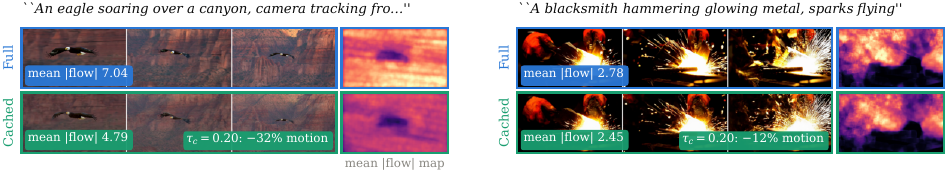}
  \caption{\textbf{Flow maps expose motion dampening that learned judges
  miss.} Two same-seed, high-motion winner pairs at $\tauc=0.20$:
  full compute above, cached below. Each block shares the color scale for
  temporal-mean Farneback flow magnitude.}
  \label{fig:qual-flow}
\end{figure}

Figure~\ref{fig:qual-flow} preserves layout and identity but reduces mean
flow by $32\%$ on the eagle and $12\%$ on the blacksmith, local examples of
the population's $8\%$ reduction at $\tauc=0.20$
(Section~\ref{sec:abl-temporal}).

Figure~\ref{fig:qualitative} shows the $\tauc = 0.20$ winner pairs with the
largest keep-vs-commit frame difference. Keep preserves layout, identity,
and style, but reduces displacement, consistent with the measured $-8\%$
mean flow (Section~\ref{sec:abl-temporal}).

\begin{figure}[!tb]
  \centering
  \includegraphics[width=\linewidth]{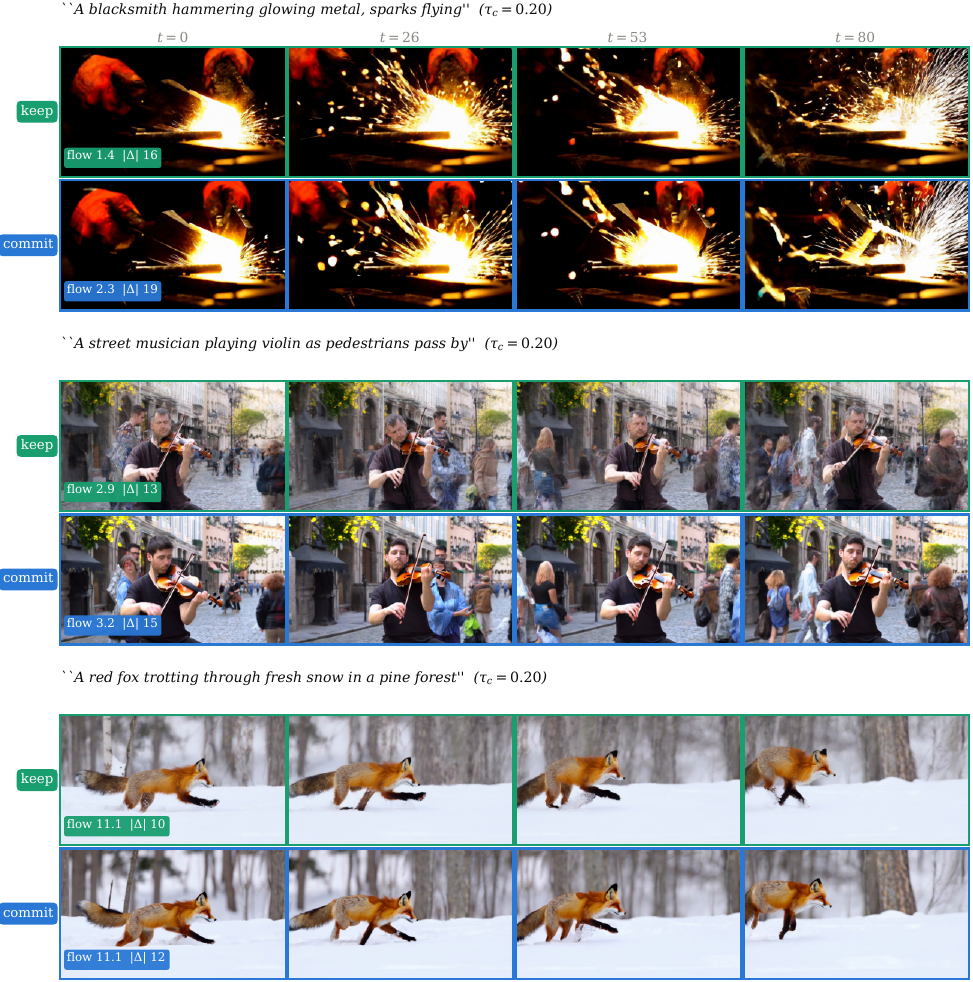}
  \caption{\textbf{Static content is preserved under keep-draft; motion
  is dampened ($-8\%$ mean flow at this $\tauc$,
  Section~\ref{sec:abl-temporal}).} Keep (top rows) vs.\ commit (bottom rows)
  winner videos at $\tauc = 0.20$, four evenly spaced frames;
  adversarial selection: pairs chosen by largest mean frame difference
  among the $50$ winner pairs.}
  \label{fig:qualitative}
\end{figure}

\section{Limitations}
\label{app:limitations}

We use \ImageRewardMetric{} as the verifier. A better temporally aligned
reward model would be preferable, but no publicly available one is reliable
enough for this role today. \VideoScoreMetric{} gives weaker ranking
preservation and shares the preference for motion-dampened cached outputs, so
recommit remains the default and motion-sensitive uses need direct temporal
audits, including \LPIPSMetric{}. The study covers six public checkpoints
from four families and best-of-$N$ seed selection through $N=16$; each new
architecture requires threshold calibration, and searches that verify
intermediate states need a separate audit.

\end{document}